\documentclass[]{bytedance_seed}

\usepackage[toc,page,header]{appendix}
\usepackage{colortbl}
\usepackage{amsmath,amsfonts,bm}

\newcommand{\vheader}{\vspace*{-.15cm}}

\renewcommand{\mathbf}[1]{\bm{#1}}

\def\eqref#1{Equation~\ref{#1}}

\def\1{\bm{1}}

\def\vz{{\mathbf{z}}}

\DeclareMathAlphabet{\mathsfit}{\encodingdefault}{\sfdefault}{m}{sl}
\SetMathAlphabet{\mathsfit}{bold}{\encodingdefault}{\sfdefault}{bx}{n}

\newcommand{\R}{\mathbb{R}}

\usepackage{verbatim}
\usepackage{amssymb}
\usepackage{amsthm}
\usepackage{enumitem}
\usepackage{makecell}
\usepackage{pifont}
\usepackage{array}
\usepackage{thmtools}
\usepackage{wrapfig}
\usepackage{todonotes}
\usepackage{tabularx}
\usepackage{siunitx}
\usepackage{etoc}

\newif\ificlrversion
\iclrversionfalse

\newcommand{\specialcell}[2][c]{
  \begin{tabular}[#1]{@{}c@{}}#2
\end{tabular}}

\definecolor{mutedred}{HTML}{C75A5A}
\newcommand{\alert}[1]{\textcolor{mutedred}{#1}}

\definecolor{Highlight}{rgb}{0.89,0.89,0.94}

\newtheorem{remark}{Remark}

\newcommand{\methodname}{STAR-MD}
\newcommand{\methodfullname}{\textbf{S}patio-\textbf{T}emporal \textbf{A}utoregressive \textbf{R}ollout for \textbf{M}olecular \textbf{D}ynamics}

\title{Scalable Spatio-Temporal SE(3) Diffusion for Long-Horizon Protein Dynamics}

\author[1^\dagger,2]{Nima Shoghi}
\author[1^\dagger,3]{Yuxuan Liu}
\author[1]{Yuning Shen}
\author[1]{Rob Brekelmans}
\author[2]{Pan Li}
\author[1,\ddagger]{Quanquan Gu}

\affiliation[1]{ByteDance Seed}
\affiliation[2]{Georgia Institute of Technology}
\affiliation[3]{UCLA}

\contribution[\dagger]{Work done during internship at ByteDance Seed}
\contribution[\ddagger]{Corresponding author}

\abstract{
  Molecular dynamics (MD) simulations remain the gold standard for studying protein dynamics, but their computational cost limits access to biologically relevant timescales. Recent generative models have shown promise in accelerating simulations, yet they struggle with long-horizon generation due to architectural constraints, error accumulation, and inadequate modeling of spatio-temporal dynamics. We present \textbf{STAR-MD} (Spatio-Temporal Autoregressive Rollout for Molecular Dynamics), a scalable $\mathrm{SE}(3)$-equivariant diffusion model that generates physically plausible protein trajectories over microsecond timescales. Our key innovation is a causal diffusion transformer with joint spatio-temporal attention that efficiently captures complex space-time dependencies while avoiding the memory bottlenecks of existing methods. On the standard ATLAS benchmark, STAR-MD achieves state-of-the-art performance across all metrics--substantially improving conformational coverage, structural validity, and dynamic fidelity compared to previous methods. STAR-MD successfully extrapolates to generate stable microsecond-scale trajectories where baseline methods fail catastrophically, maintaining high structural quality throughout the extended rollout. Our comprehensive evaluation reveals severe limitations in current models for long-horizon generation, while demonstrating that STAR-MD's joint spatio-temporal modeling enables robust dynamics simulation at biologically relevant timescales, paving the way for accelerated exploration of protein function.

}

\date{\today}
\correspondence{
  \email{yuning.shen@bytedance.com}, \email{quanquan.gu@bytedance.com}
}

\checkdata[Project Page]{\url{https://bytedance-seed.github.io/ConfRover/starmd}}

\begin{document}
\etocdepthtag.toc{mtchapter}
\maketitle

\section{Introduction}
\vheader

Protein functions emerge from conformational dynamics -- the continuous structural changes that underlie important biological processes such as catalysis, binding, and allosteric regulations \citep{mccammon1984proteindynamics,berendsen2000collectivedynamics}.
Classical molecular dynamics (MD) simulation remains the gold standard and relies on physical models to integrate atomic motions over time using Newtonian mechanics. However, the need for small integration steps ($\sim$ femtoseconds) severely limits its practicality for exploring the microsecond–millisecond timescales often required to capture biologically relevant events.
Recent advances in protein structure prediction~\citep{jumper2021_af2,abramson2024accurate,lin2023evolutionary} and generative modeling for conformational dynamics~\citep{jing2024generative_mdgen,cheng20254d,shen2025simultaneous_confrover,costa2024equijump} offer promising data-driven approaches to accelerate simulations by learning the dynamics directly from data and generating trajectories at coarser temporal resolutions.
However, existing methods are still constrained by short time horizons (typically up to nanoseconds) and struggle to scale to larger proteins. These limitations highlight an urgent need for generative models capable of producing physically plausible protein trajectories over extended timescales while remaining computationally efficient.

Scaling protein dynamics modeling to long timescales requires both \textit{an efficient structural representation} and \textit{an architecture capable of capturing complex spatio-temporal dependencies}. Current approaches often rely on pairwise residue representations or computationally expensive architectures (e.g., AlphaFold2-style triangular attention), leading to quadratic memory growth and cubic computational cost with respect to protein size. These challenges become even more pronounced when accounting for spatio-temporal coupling across multiple conformations during trajectory modeling. As a result, most existing models treat spatial and temporal components separately, employing interleaved ``spatial'' and ``temporal'' modules~\citep{jing2024generative_mdgen,cheng20254d,shen2025simultaneous_confrover}, which limits their expressiveness in capturing coupled dynamics. These limitations confine current models to small proteins and short simulation horizons, ultimately hindering their ability to learn long-range temporal dependencies and generate high-quality conformations over extended rollouts.

We introduce \methodfullname{} (\methodname{}), an $\mathrm{SE}(3)$-equivariant autoregressive diffusion model for generating physically plausible protein trajectories over microsecond timescales, even for large protein systems. At its core, \methodname{} employs \textit{a causal diffusion transformer} with \textit{joint spatio-temporal attention}, enabling improved autoregressive generation by dynamically computing attention over historical context during diffusion and capturing long-range dependencies through expressive yet memory-efficient spatio-temporal modeling.

Our main contributions are summarized as follows:
\begin{itemize}[leftmargin=*,labelindent=0pt,topsep=0pt,itemsep=2pt,parsep=0pt]
  \item We present \methodname{}, a novel autoregressive diffusion transformer that leverages efficient spatio-temporal attention to model the complex dependencies underlying protein dynamics.
  \item Through several key technical improvements, including historical context noise addition during training and inference, block-diffusion-style causal training, and architectural optimizations, \methodname{} achieves efficient, scalable training and stable trajectory generation, significantly improving over current state-of-the-art models.
  \item We perform an exhaustive evaluation across multiple different simulation timescales ranging from \SI{100}{\nano\second} to \SI{1}{\micro\second}, providing a comprehensive assessment of conformation quality, coverage, and dynamic fidelity. Our analyses yield new insights into the limitations of current state-of-the-art models in long-horizon generation and offer valuable guidance for future model design.
\end{itemize}

\section{Related Work}\label{sec:relatedworks}
\vheader

\noindent\textbf{Protein Conformation Generation.}
Several models~\citep{jingAlphaFoldMeetsFlow2024,lewis2024scalableBioemu,wang2024proteinconfdiff} build on advances in protein structure prediction~\citep{jumper2021_af2,abramson2024accurate,lin2023evolutionary} and diffusion models~\citep{hoDenoisingDiffusionProbabilistic2020ddpm} to directly generate time-independent conformations. These methods provide an efficient alternative to MD by enabling parallel sampling. However, they capture only the equilibrium distribution of conformations and cannot model the temporal evolution of protein dynamics.

\noindent\textbf{MD Trajectory Generation.}
To model the temporal evolution, operator-based methods~\cite{klein2024timewarp,costa2024equijump} aim to learn transport operators that predict conformations at lagged times. These methods approximate evolution as a Markovian process, and thus fail to capture the non-Markovian properties often present in partially observed systems such as protein dynamics data.
In contrast, methods generating trajectories, akin to video generations, consider dependencies across multiple time frames.
AlphaFolding~\citep{cheng20254d} incorporates higher-order information through additional ``motion frames'' and generates multiple future frames simultaneously. Similarly, MDGen~\citep{jing2024generative_mdgen} models the joint distribution of frames across 100-ns trajectories. However, both methods only capture the dependencies in a fixed context and prediction window. When generating longer trajectories through extension, they discard memory of earlier windows, breaking temporal consistency.
To address this, ConfRover~\citep{shen2025simultaneous_confrover} adopts language model-style autoregression that can generate trajectories of arbitrary lengths while maintaining full memory via KV-caching.
\methodname{} follows this approach to avoid the ``memory break''  in block-based models.

\noindent\textbf{Scalable Structural and Temporal Modeling.}
Current structural models, including AlphaFolding and ConfRover, represent protein structures using single and pairwise features and process them with specialized architectures such as Pairformer and Invariant Point Attention (IPA)~\citep{jumper2021_af2}. While these modules are highly expressive and preserved the required roto-translational symmetries, they are computationally and memory intensive. This limits model efficiency and scalability, especially when extended to modeling trajectories across multiple time frames. MDGen sidesteps this limitation by anchoring trajectories to key frames, encoding only single embeddings with standard transformers. While this allows modeling of long trajectories (e.g., 250 frames for ATLAS), the reliance on key frames limited flexibility and model performance is suboptimal. Lastly, all the above approaches model dynamics between frames using interleaved spatial and temporal modules. While this decoupled design reduces computational cost, it limits the model's expressive power to capture complex spatio-temporal relationships. In contrast, \methodname{} proposes to employ joint spatio-temporal attentions on the single embeddings' direct space-time processing while keeping the memory footprint manageable.

\begin{figure}[h]
  \centering
  \includegraphics[width=1\linewidth]{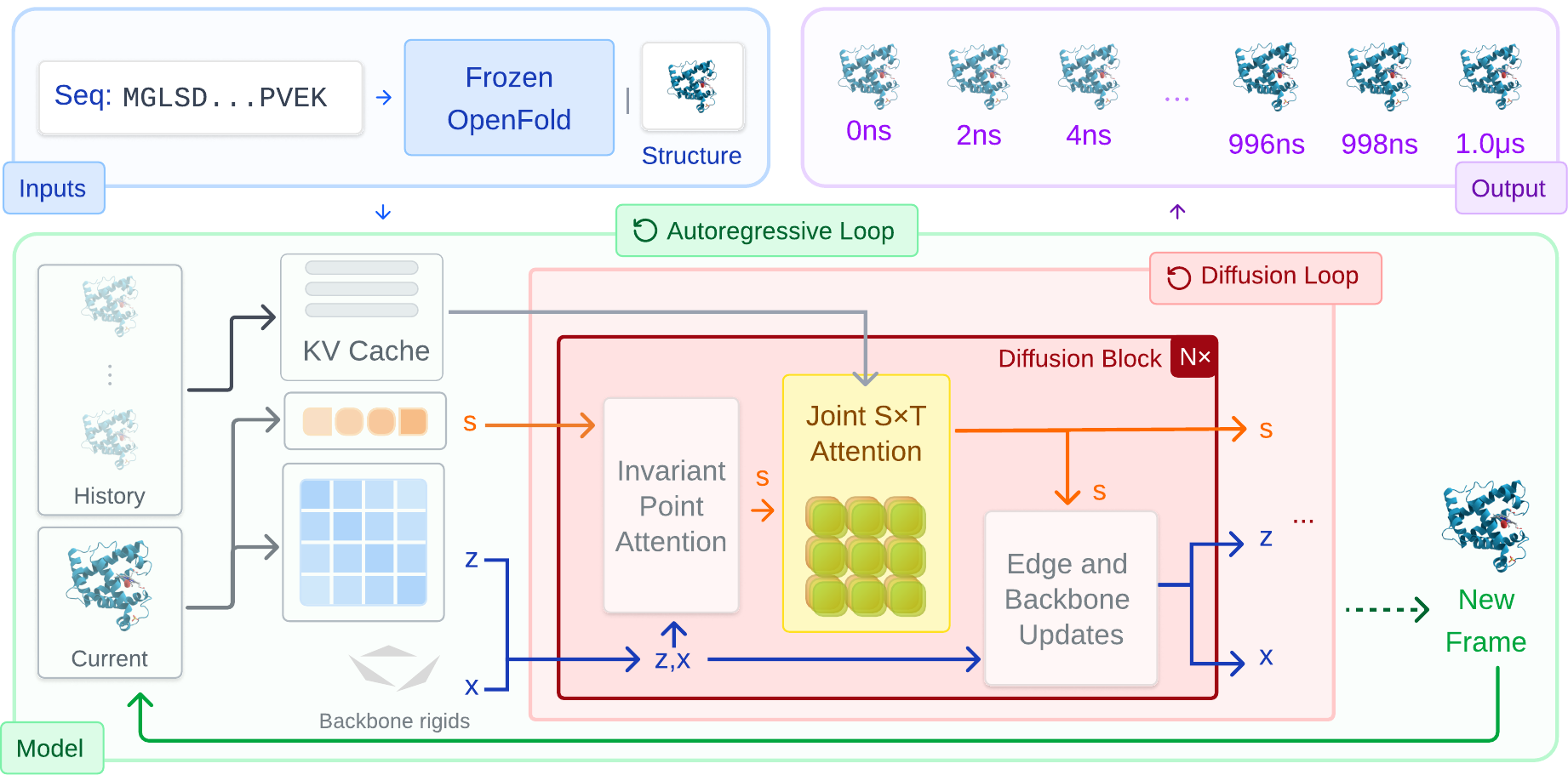}
  \caption{\textbf{Overview of \methodname{} generation.} Input contains protein sequence and a starting conformation. In autoregressive diffusion generation, structural information of previously generated conformations and current noisy conformations are encoded into single and pair representations. A joint spatio-temporal attention block is employed to capture context information to update the single representation of the current frame. The main model block iterated to diffuse a clean conformation for the current frame and added to history for generating next frames.}
\end{figure}
\section{Method}\label{sec:background}

To generate long and physically plausible protein trajectories, we propose \methodfullname{} (\methodname{}), a spatio-temporal $\mathrm{SE}(3)$-diffusion model that operates within an autoregressive framework. In the following sections, we first describe the overall autoregressive diffusion framework, then detail the architecture of our diffusion model, and finally explain the training and inference procedures.

\vheader
\subsection{Autoregressive SE(3)-Diffusion Models for Protein Trajectories}
\vheader

\noindent\textbf{Autoregressive Trajectory Generation.}
We aim to generate protein conformation \textit{trajectories} $\mathbf{x}_{1:L}$ in an autoregressive fashion $\prod_{\ell=1}^L p(\mathbf{x}_{\ell}\,|\,\mathbf{x}_{<\ell}, \Delta t_{\ell})$, where $\Delta t_{\ell}$ represents the time interval between \textit{frames}. This formulation amortizes trajectory generation into a frame-level process conditioned on the entire past history, enabling the model to generate future dynamics of arbitrary length while maintaining a flexible memory representation learned from data.
Causal transformers, widely used in language models for efficient autoregressive sequence modeling~\citep{touvron2023llama}, provide a natural architectural choice for this task.

\noindent\textbf{SE(3) Diffusion Models.}
To model the generation process of each frame $p(\mathbf{x}_\ell|\mathbf{x}_{<\ell}, \Delta t_{\ell})$, we use a (conditional) diffusion model on the Riemannian manifold $\mathrm{SE}(3)$, where $\mathbf{x}= [\mathbf{T}, \mathbf{R}]$ which captures translation positions $\mathbf{T}=\{T^i\}_{i=1}^{N} \in \mathbb{R}^{3N}$ and rotations $\mathbf{R}=\{R^i\}_{i=1}^{N} \in \mathrm{SO}(3)^{N}$ for each amino-acid residue in a protein sequence of length $N$ \citep{yim2023se, wang2024proteinconfdiff, shen2025simultaneous_confrover}. The forward diffusion process independently corrupts translations and rotations across the diffusion time $\tau$
\begin{align}
  \mathbf{T}_{\tau} & = \sqrt{\alpha_{\tau}}\mathbf{T}_0 + \sqrt{1-\alpha_{\tau}}\epsilon,\quad \epsilon\!\sim\!\mathcal N(0,I_{3N}) \label{eq:forward_trans} \\
  \mathbf{R}_{\tau} & \sim \text{IGSO}_3(\mathbf{R}_0,\sigma_{\tau}^2) ,\label{eq:forward_rot}
\end{align}
where IGSO$_3$ is the isotropic Gaussian on $\mathrm{SO}(3)$. Given a noisy structure $\mathbf{x}^\tau$, diffusion time $\tau$, and condition $\mathbf{c}$, a denoising score network
\begin{align} \label{eq:general_se3_score_function}
  s_\theta\left(\mathbf{x}^\tau, \tau, \mathbf{c} \right) = [s_\theta^{\mathbf{T}}, s_\theta^{\mathbf{R}}]\left(\mathbf{x}^\tau, \tau, \mathbf{c}\right)
\end{align}
is trained via denoising score matching to predict the noise added to both components.  During inference, the learned score function can be used to sample from the reverse diffusion process and generate protein structures.

To capture temporal dependencies needed for autoregressive generation, the condition $\mathbf{c}$ for generating the current frame should incorporate all past history $\mathbf{x}_{<\ell}$. Specifically, we design an efficient and expressive
\textit{causal diffusion transformer} network
to compute $\mathbf{c}\left( \mathbf{x}_{<\ell}^0, \mathbf{x}^{\tau}_{\ell}, \tau \right)$ from the preceding clean frames and the current {noisy} frame $\mathbf{x}_{\ell}^\tau$. This design differs from prior works such as MDGen \citep{jing2024generative_mdgen} \& Alphafolding \citep{cheng20254d} which do not include autoregressive conditioning, and ConfRover \citep{shen2025simultaneous_confrover} which compresses all preceding frames into a static condition $\mathbf{c}\left( \mathbf{x}_{<\ell}\right)$.
In the following section, we detail the design of $\mathbf{c}\left( \mathbf{x}_{<\ell}^0, \mathbf{x}^{\tau}_{\ell}, \tau \right)$  and how it efficiently integrates spatio-temporal information from preceding frames.

\vheader
\subsection{Architecture}\label{sec:arch}
\vheader
In this section, we describe the architecture choices underlying \methodname{}.
The main features of this architecture lie in the diffusion decoder, which uses \textit{spatio-temporal attention} for autoregressive conditioning on previous frames and \textit{block diffusion} allowing for efficient training and KV caching.

\noindent\textbf{Input Module.}
We mirror the \texttt{FrameEncoder} module in \citet{shen2025simultaneous_confrover}, starting with a frozen OpenFold \texttt{FoldingModule} for sequence-level single-residue and pairwise features $\mathbf{s}, \mathbf{z}=\{\mathbf{s}_{i}, \vz_{ij}\}_{i,j=1}^{N}$.
We incorporate these time-independent features with amino-acid metadata and pairwise $C_\beta$ distance within each frame to obtain time-dependent features $\mathbf{s}_{\ell}^\text{init},\mathbf{z}_{\ell}^\text{init}$, which are used as initial input to the diffusion blocks.

\noindent\textbf{Diffusion Blocks.} Our diffusion module contains submodules stacked into blocks:
\begin{enumerate}[leftmargin=15pt,labelindent=0pt,topsep=0pt,itemsep=2pt,parsep=0pt, label=\arabic*.]
  \item Invariant Point Attention layers \citep{jumper2021_af2} update single features $\mathbf{s}_{\ell}$ using pair features $\mathbf{z}_{\ell}$ and noisy frames $\mathbf{x}_\ell^{\tau}$. This step is independent for each frame.
  \item Joint Spatio-Temporal Attention layers update the single-residue features $\mathbf{s}_{\ell}$ by attending to those from previous frames $\mathbf{s}_{\le\ell}$, which allows for exchange of temporal informaiton.
  \item Pair features $\mathbf{z}_{\ell}$ are updated from single features $\mathbf{s}_{\ell}$ using the \texttt{EdgeTransition} layer from \citet{jumper2021_af2}. Noisy frames $\mathbf{x}_{\ell}^\tau$ are updated from single features $\mathbf{s}_{\ell}$ using an MLP-based backbone update.
\end{enumerate}
After stacking the above blocks, the final update of the coordinates $\mathbf{x}_{\ell}^\tau$ is used as the score function prediction and fed to the denoising score matching loss.

\noindent\textbf{Joint Spatio-Temporal Attention.}\label{sec:st_attention}
Our joint spatio-temporal (S$\times$T) attention mechanism integrates information across the temporal dimensions, departing from previous models that employ factorized, ``space-then-time'' attention which imposes a restrictive bias that spatial and temporal dependencies are separable. Instead, our S$\times$T attention operates on tokens representing residue-frame pairs $(i, \ell)$, allowing it to directly model non-separable relationships, such as how motion at one residue is coupled to past motion at a distant site. 2D Rotary Position Embedding (2D-RoPE)~\citep{heo2024rotary} is used to embed residue and frame indices, which allows extrapolation past the training number of frames.

This architectural choice is key to our model's scalability. We analyze the per-layer computational complexity of our S$\times$T attention ($\mathcal{O}(N^2L^2)$) against general S$+$T (i.e., ``space-then-time'') architectural paradigms (see \cref{app:complexity} for details).
Most baseline architectures employ a Pairformer~\citep{jumper2021_af2} backbone, incurring a cubic spatial complexity $\mathcal{O}(N^3L)$ due to expensive triangular attention operations.
We consider two representative baseline configurations:
(1) \textbf{Pairformer + Pair Temporal Attention} (e.g., ConfRover~\citep{shen2025simultaneous_confrover}): This configuration scales as $\mathcal{O}(N^3L + N^2L^2)$. Here, the pairwise temporal attention alone incurs $\mathcal{O}(N^2L^2)$ cost, matching that of STAR-MD, but the additional cubic spatial overhead makes it significantly more expensive. Furthermore, this approach incurs a quadratic memory cost $\mathcal{O}(N^2L)$ for caching pairwise key-value states during autoregressive generation, which becomes a severe bottleneck for long rollouts. In contrast, STAR-MD only requires $\mathcal{O}(NL)$ memory for caching single-residue features.
(2) \textbf{Pairformer + Single Temporal Attention}: This configuration scales as $\mathcal{O}(N^3L + NL^2)$. While the temporal scaling is theoretically superior to STAR-MD's $\mathcal{O}(N^2L^2)$ in the limit $L \gg N$, the cubic spatial cost $\mathcal{O}(N^3L)$ dominates in practical regimes. STAR-MD thus offers a favorable trade-off, enabling efficient modeling of long trajectories for large proteins without the cubic spatial overhead.

\subsection{Additional Model Details}
\label{sec:train_details}

\noindent\textbf{Block-causal Attention for Efficient Training.} Traditional video diffusion models \citep{peebles2023scalable,ma2024sit} denoise all frames simultaneously, using a fixed window size. For autoregressive generation with variable context length, the model denoises one frame at a time, attending to clean history contexts.
This is straightforward during inference, where we cache clean frames for efficient autoregressive generation. However, given the sequential nature of this process, extra work is needed for parallel training.
To simultaneously denoise all tokens with clean history context, we follow \cite{arriola2025block,teng2025magi,deng2024causalfusion} to concatenate clean and noisy frames as input sequence and employ a special block-wise attention pattern: all frames only attend to clean history frames, preserving the causal structure during training. At the cost of doubling the sequence length, the model can optimize the training loss for all frames in a single forward pass, aligning parallel teacher-forcing training with sequential autoregressive inference.

\noindent\textbf{Contextual Noise Perturbation for Robust Rollouts.}
To mitigate compounding errors during long autoregressive rollouts, we apply a contextual noise perturbation technique inspired by Diffusion Forcing~\citep{chen2024diffusion, song2025history}. During training, we perturb historical context frames $\mathbf{x}_{<\ell}$ by applying the forward diffusion process (Eqs.~\ref{eq:forward_trans}-\ref{eq:forward_rot}) with a small, randomly sampled noise level $\tau\!\sim\!\mathcal U[0,0.1]$ to obtain noisy contexts $\mathbf{x}_{<\ell}^{\tau}$. The model then learns to predict frame $\ell$ conditioned on this perturbed history. Critically, at inference time, we apply the same noise perturbation: after generating frame $\ell$, we add noise at level $\tau$ before using it as context for subsequent frames. This training-inference consistency ensures that the model experiences similar input distributions during deployment as during training, making it robust to its own prediction errors and preventing compounding drift in long rollouts.

\noindent\textbf{Continuous-time Conditioning.}
To handle the vast range of timescales in protein motion, we draw the physical stride $\Delta t\!\sim\!\text{LogUniform}[10^{-2},10^{1}]$\,ns independently for every training example. By conditioning the network on $\Delta t$ through adaptive layernorm (AdaLN, \cite{peebles2023scalable}), the model learns to modulate its internal activations as a function of both diffusion progress and physical timescale.
Crucially, this training strategy decouples the physical duration of a trajectory from the number of frames in the context window. Even when restricted to small context windows during training (as most methods are), randomly sampling large strides exposes the model to relative time deltas spanning orders of magnitude. This allows the network to learn long-range temporal dependencies without the computational cost of training on long sequences or the need for complex context-length extrapolation techniques.

\subsection{Theoretical Justification for STAR-MD Architecture}

In this subsection, we use the Mori-Zwanzig formalism~\citep{mori1965transport, zwanzig1961memory} (See \cref{app:theory} for more details) to provide theoretical justification for two key aspects of our architecture: the necessity of temporal history in coarse-grained modeling, and the specific requirement for \textit{joint} spatio-temporal attention arising from the removal of explicit pairwise features.

Atomistic molecular dynamics simulations evolve according to Hamiltonian mechanics and are Markovian in the full phase space of atomic positions and momenta. However, practical generative models must operate on coarse-grained representations, such as per-residue coordinates for protein structures.
The Mori-Zwanzig formalism  shows that projecting Markovian dynamics onto coarse-grained variables yields a Generalized Langevin Equation with three terms (\cref{eq:mz}): a Markovian component, a memory kernel encoding history dependence, and a stochastic force representing eliminated degrees of freedom.
Consequently, accurate modeling of coarse-grained protein dynamics requires non-Markovian models that incorporate temporal history to capture memory effects arising from eliminated degrees of freedom.
Our work makes this connection explicit, providing the theoretical justification underlying recent temporal architectures in trajectory modeling~\citep{jing2024generative_mdgen,cheng20254d,shen2025simultaneous_confrover}.

Existing models typically rely on explicit pairwise features to capture spatial structure, but this incurs prohibitive $\mathcal{O}(L^2 N^2 + LN^3)$ or $\mathcal{O}(LN^3)$ costs (see \cref{sec:arch}). STAR-MD circumvents this by projecting out pairwise features, a design choice we analyze through our \textit{Memory Inflation} proposition (\cref{prop:inflation}, \cref{app:theory}).
We show that removing explicit spatial correlations ``inflates'' the memory kernel for the remaining residues, necessitating a significantly richer temporal history to compensate. Crucially, this inflated kernel exhibits non-separable spatio-temporal coupling (\cref{rem:coupling}), meaning spatial and temporal modes cannot be factorized. This theoretical insight directly motivates our architecture: instead of the interleaved spatial and temporal blocks used in prior work, STAR-MD employs \textit{joint} spatio-temporal attention to approximate this complex, non-separable memory kernel, balancing physical fidelity with computational scalability.

\section{Experiments}
\vheader

We conduct a comprehensive set of experiments to evaluate \methodname{}'s ability to generate long-horizon protein dynamics. First, we benchmark \methodname{} against state-of-the-art models on the standard \SI{100}{\nano\second} trajectory generation task (Section~\ref{exp:100ns}). Next, we assess its extrapolation capabilities by generating trajectories over longer, microsecond-scale horizons not seen during training (Section~\ref{exp:long}). Finally, we perform a series of analyses and ablations to investigate model stability, scalability, and the contributions of our key design choices (Section~\ref{exp:analysis}).

\vheader
\subsection{Experimental Setup}
\vheader

\noindent\textbf{Dataset.} ATLAS dataset~\citep{vander2024atlas} contains \SI{100}{\nano\second} MD trajectories for 1390 structurally diverse proteins. We use the standard train/val/test splits from prior works~\citep{jing2024generative_mdgen,shen2025simultaneous_confrover} to assess model performance in a transferable setting. To evaluate long-horizon generation--a key focus of our work--we extend the benchmark by running new MD simulations to produce 250~ns and \SI{1}{\micro\second} trajectories for selected proteins, using the original ATLAS simulation protocols for consistency. Further details are provided in Appendix~\ref{ap:atlas}.

\noindent\textbf{Implementations.}
We follow the training procedure described in Section~\ref{sec:train_details} and generate trajectories using the configurations listed in Table~\ref{tab:simu_tasks}. We compare \methodname{} with three state-of-the-art trajectory generative models trained on ATLAS: AlphaFolding~\citep{cheng20254d}, MDGen~\citep{jing2024generative_mdgen}, and ConfRover~\citep{shen2025simultaneous_confrover}.
All models parametrize proteins using $SE(3)$ backbone rigids (translations and rotations) with torsional angles for side-chain atoms.
To standardize evaluation, trajectories from variable-stride models (\methodname{}, ConfRover) are generated directly at the required intervals. Trajectories from fixed-stride models (AlphaFolding, MDGen) are first generated at their native resolution and then subsampled.
Finally, we include an oracle reference based on MD simulations run independently to represent the target performance.
All trajectories are aligned to the starting frame of the reference ATLAS trajectory via C$\alpha$ superposition prior to analysis (\cref{ap:eval}).
Specific parameters for each benchmark are detailed in the relevant sections below, with full implementation details in Appendix~\ref{ap:baseline}.

\begin{itemize}[leftmargin=*,labelindent=0pt,topsep=0pt,itemsep=2pt,parsep=0pt]
  \item \textbf{Structural Quality:} We assess the physical plausibility of generated conformations using a hierarchy of geometric and stereochemical metrics. First, we perform coarse-grained checks for C$\alpha$-C$\alpha$ clashes (non-bonded atoms too close) and chain breaks (consecutive C$\alpha$ atoms too far apart). Second, we use the MolProbity suite~\citep{chen2010molprobity} for fine-grained, all-atom analysis of backbone Ramachandran and side-chain rotamer outliers. We report three distinct validity metrics based on different criteria: \textbf{C$\alpha$-level Validity}, passing only C$\alpha$ checks; \textbf{All-Atom Validity}, passing only MolProbity checks; and \textbf{Combined Validity}, passing all checks simultaneously. See Appendix~\ref{ap:eval} for details on the thresholds.

  \item \textbf{Conformational Coverage:} To evaluate how well generated trajectories explore the conformational space of the reference MD simulation, we follow the protocol of \citet{shen2025simultaneous_confrover}. We project all conformations into a low-dimensional space defined by the principal components of the reference trajectory and compute the Jensen-Shannon Divergence (JSD) and conformation recall between the distributions. To ensure that coverage reflects physically plausible exploration, we report these metrics computed exclusively on structurally valid conformations (``Cov Valid'').

  \item \textbf{Dynamic Fidelity:} We assess temporal coherence using four metrics. First, we use \textbf{tICA lag-time correlation} to quantify the preservation of slow collective variables~\citep{molgedey1994separation, perez2013identification,shen2025simultaneous_confrover}; crucially, this is computed only on valid transitions to avoid artifacts from broken structures.
    Additionally, we evaluate \textbf{RMSD} to measure the magnitude of structural change over varying intervals, \textbf{autocorrelation} to assess temporal memory, and \textbf{VAMP-2 score} to evaluate how well slow dynamical modes of the system are captured. For last three metrics, we report the deviation from MD reference, with details in Appendix \ref{ap:eval}.
\end{itemize}

\vheader
\subsection{Standard Benchmark: \SI{100}{\nano\second} Trajectory Generation}
\label{exp:100ns}
\vheader

We first evaluate \methodname{} on the standard \SI{100}{\nano\second} ATLAS benchmark. Following the protocol of \citet{shen2025simultaneous_confrover}, we generate trajectories of 80 frames at a 1.2 ns interval. Results are summarized in \cref{tab:results_100ns}.
It is important to note that due to its high computational cost, AlphaFolding failed to run on the four largest proteins in the test set, highlighting its scalability limitations even on this standard benchmark. On the full test set, \methodname{}, achieves a superior balance of conformational coverage, dynamic fidelity, and structural quality.

The results demonstrate clear performance deficiencies in all baseline models. Baselines like MDGen and ConfRover exhibit low conformational coverage (0.30 and 0.42 Recall, respectively) and produce a substantial number of structurally invalid frames, with validity rates of only 64.9\% and 49.7\%, respectively. AlphaFolding achieves a higher raw recall (0.51), but this metric is misleading as a negligible 0.11\% of its generated frames are structurally valid; when controlled for valid structures, this recall drops to 0.01, rendering AlphaFolding as a very unreliable model for this task. In contrast, \methodname{} achieves the highest conformational coverage (0.58 recall) and structural validity (86.36\%) of any generative model, significantly narrowing the performance gap to the ground-truth MD simulation which serves as the oracle for this task.

\begin{table}[t]
  \vspace*{-1.1cm}
  \centering
  \caption{Quantitative results on \SI{100}{\nano\second} trajectory generation (ATLAS test set). \methodname{} achieves state-of-the-art performance across all metrics, with particularly significant improvements in conformational coverage (JSD, F1) and dynamic fidelity (tICA, average difference of RMSD, autocorrelation and VAMP-2 score to MD reference).}
  \label{tab:results_100ns}
  \footnotesize
  
\resizebox{\columnwidth}{!}{
  \begin{tabular}{l cc cccc ccc}
    \toprule
    & \multicolumn{2}{c }{\textbf{Cov Valid}} & \multicolumn{4}{c }{\textbf{Dynamic Fidelity}} & \multicolumn{3}{c }{\textbf{Validity}} \\
    \cmidrule(lr){2-3} \cmidrule(lr){4-7} \cmidrule(lr){8-10}
    \textbf{Model} & \textbf{JSD}$\downarrow$ & \textbf{Rec}$\uparrow$ & \textbf{tICA}$\uparrow$ & \textbf{RMSD}$\downarrow$ & \textbf{AutoCor}$\downarrow$ & \textbf{VAMP-2}$\downarrow$ & \textbf{CA\%}$\uparrow$ & \textbf{AA\%}$\uparrow$ & \textbf{CA+AA\%}$\uparrow$ \\
    \midrule
    \rowcolor{Highlight}
    MD (Oracle) & 0.31 & 0.67 & 0.17 & 0.00 & 0.00 & 0.02 & 98.37 & 98.07 & 96.43 \\
    \midrule
    MDGen & 0.56{\tiny $\pm$0.01} & 0.28{\tiny $\pm$0.01} & 0.12{\tiny $\pm$0.00} & 0.38 {\tiny $\pm$ 0.01} & \underline{0.05 {\tiny $\pm$ 0.00}} & \underline{0.38 {\tiny $\pm$ 0.01}} & \underline{71.83{\tiny $\pm$1.90}} & \underline{95.03{\tiny $\pm$0.59}} & \underline{68.31{\tiny $\pm$2.20}} \\
    AlphaFolding\footnotemark[1] & 0.59{\tiny $\pm$0.01} & 0.20{\tiny $\pm$0.01} & N/A & 3.31 {\tiny $\pm$ 0.06} & 0.12 {\tiny $\pm$ 0.01} & 1.56 {\tiny $\pm$ 0.01} &  10.58{\tiny $\pm$0.09} & 0.82{\tiny $\pm$0.10} & 0.47{\tiny $\pm$0.04} \\
    ConfRover & \underline{0.52{\tiny $\pm$0.01}} & \underline{0.36{\tiny $\pm$0.01}} & \underline{0.15{\tiny $\pm$0.01}} & \underline{0.20 {\tiny $\pm$ 0.00}} & 0.08 {\tiny $\pm$ 0.00} & 0.47 {\tiny $\pm$ 0.02} & 56.94{\tiny $\pm$0.52} & 92.47{\tiny $\pm$0.25} & 52.06{\tiny $\pm$0.36} \\
    \methodname{} & \textbf{0.43{\tiny $\pm$0.01}} & \textbf{0.54{\tiny $\pm$0.01}} & \textbf{0.17{\tiny $\pm$0.00}} & \textbf{0.07 {\tiny $\pm$ 0.02}} & \textbf{0.02 {\tiny $\pm$ 0.00}} & \textbf{0.10 {\tiny $\pm$ 0.02}} & \textbf{86.81{\tiny $\pm$0.64}} & \textbf{98.18{\tiny $\pm$0.05}} & \textbf{85.29{\tiny $\pm$0.62}} \\
    \bottomrule
  \end{tabular}
}

\end{table}

\footnotetext[1]{AlphaFolding results are evaluated on 78/82 proteins due to out-of-memory error for 4 large proteins.}

\begin{figure}[t]
  \vspace*{-.2cm}
  \centering
  \includegraphics[width=.98\linewidth]{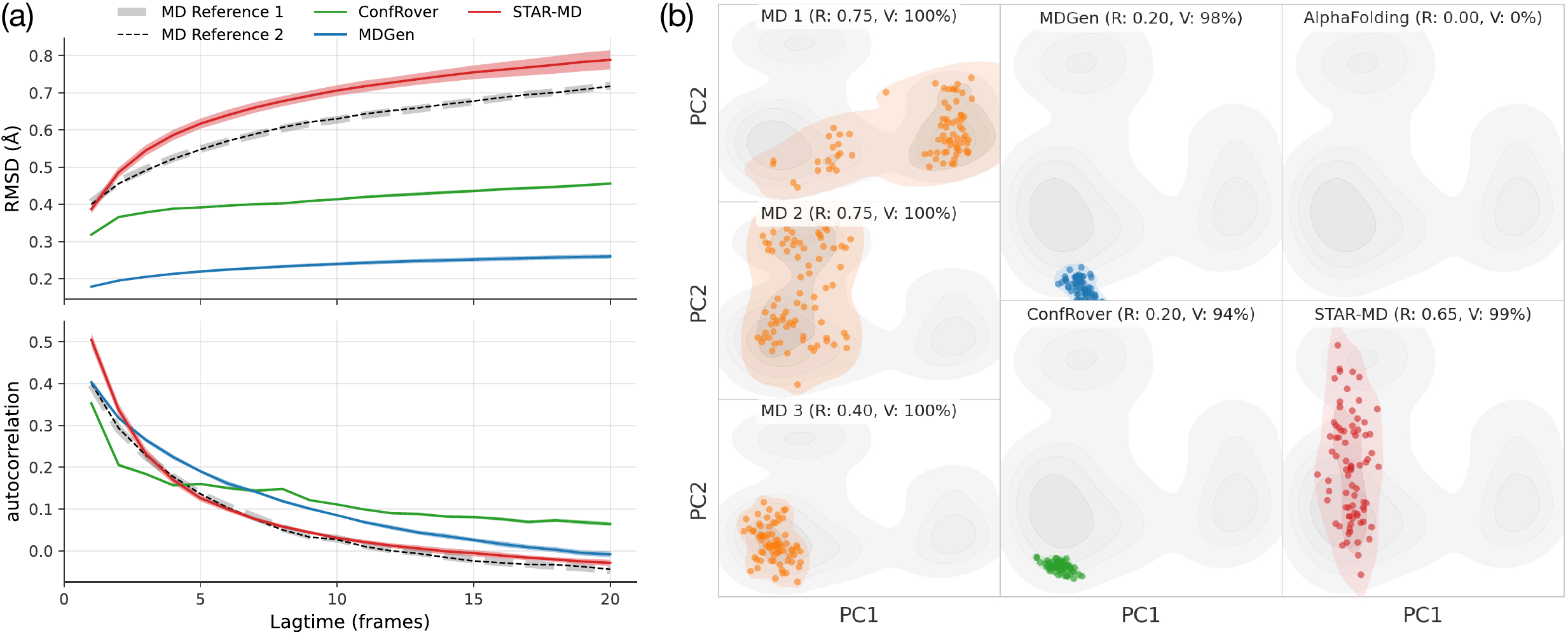}

  \caption{\textbf{Kinetic fidelity and conformational coverage on the ATLAS \SI{100}{\nano\second} benchmark.}
    \textbf{(a)} Comparing C$\alpha$ coordinate RMSD (top) and autocorrelation (bottom) at varying lagtime for different models. \methodname{} better captures the overall trend and characteristic magnitudes, similar to the MD reference runs (dashed lines).
    Shaded bands represent $\pm1$ standard deviation. The small size of the shaded bands demonstrate the robustness of this metric.
    \textbf{(b)} Conformational coverage comparison for \texttt{6XB3-H} for all models and 3 MD simulations. Generated trajectories are projected onto the first two principal components (PCs) of the reference MD simulation (gray contours). Only structurally valid frames are considered for this plot. \underline{R}ecall and \underline{V}alidity are reported for each run.
    Baseline methods (MDGen, ConfRover) exhibit limited diversity, becoming confined to a small region of the conformational landscape.
    AlphaFolding's generated trajectories consist of all structurally implausible frames, with a validity of 0\%.
    In contrast, \methodname{} demonstrates significantly broader exploration (with a recall value of 0.65), visiting two of the major modes observed in the MD reference, matching the diversity seen in independent MD runs.
  }
  \label{fig:conformational_coverage2}
  \vspace*{-.3cm}
\end{figure}

In Figure \labelcref{fig:conformational_coverage2}(a) and \ref{fig:kinetics_all}, we compare the level of conformation changes at different lag times, as an indicator how well the model captures the ground truth motion changes. \methodname{} shows improved coherence with MD reference trajectories, while AlphaFolding significantly overestimates the motion and MDGen and ConfRover tend to underestimate the dynamics level.

Finally, Figure~\ref{fig:conformational_coverage2}(b) provides a qualitative assessment of conformational coverage using the principal component projection for an example ATLAS \SI{100}{\nano\second} protein.  \methodname{} exhibits better significantly better exploration and diversity than comparison methods and is able to populate two distinct modes.

\begin{figure*}[t]
  \centering
  \vspace*{-.9cm}
  \begin{minipage}{\textwidth}
    \centering
    \captionof{table}{Results on molecular dynamics trajectory generation at \SI{240}{\nano\second} and \SI{1}{\micro\second} timescales. \methodname{} demonstrates competitive performance across different temporal scales, with particularly strong quality metrics at both timescales.}
    \label{tab:results_combined}
    \footnotesize
    \vspace{-0.1cm}

\resizebox{\columnwidth}{!}{
  \begin{tabular}{l c cc cc ccc}
    \toprule
    & & \multicolumn{2}{c}{\textbf{Cov Valid}} & \multicolumn{2}{c}{\textbf{Dynamic Fidelity}} & \multicolumn{3}{c}{\textbf{Validity}} \\
    \cmidrule(lr){3-4} \cmidrule(lr){5-6} \cmidrule(lr){7-9}
    \textbf{Model} & \textbf{Time} & \textbf{JSD}$\downarrow$ & \textbf{Rec}$\uparrow$ & \textbf{RMSD}$\downarrow$ & \textbf{AutoCor}$\downarrow$ & \textbf{CA\%}$\uparrow$ & \textbf{AA\%}$\uparrow$ & \textbf{CA+AA\%}$\uparrow$ \\
    \midrule
    \rowcolor{Highlight}
    & \SI{240}{\nano\second} & 0.26 & 0.75 & 0.01 & 0.00 & 99.53 & 96.83 & 96.36 \\
    \rowcolor{Highlight}\multirow{-2}{*}{MD (Oracle)} & \SI{1}{\micro\second} & 0.23 & 0.91 & 0.00 & 0.00 & 96.25 & 86.50 & 82.75 \\
    \midrule
    \multirow{2}{*}{MDGen} & \SI{240}{\nano\second} & 0.52{\tiny $\pm$0.01} & 0.38{\tiny $\pm$0.01} & 0.48 {\tiny $\pm$ 0.01} & 0.25 {\tiny $\pm$ 0.01} & \underline{63.25{\tiny $\pm$2.10}} & \underline{87.83{\tiny $\pm$1.13}} & \underline{56.60{\tiny $\pm$2.10}} \\
    & \SI{1}{\micro\second} & 0.56{\tiny $\pm$0.01} & 0.36{\tiny $\pm$0.03} & 0.37 {\tiny $\pm$ 0.02} & 0.39 {\tiny $\pm$ 0.01} & 36.11{\tiny $\pm$7.34} & 56.99{\tiny $\pm$4.52} & 24.81{\tiny $\pm$4.30} \\
    \midrule
    \multirow{2}{*}{Alphafolding} & \SI{240}{\nano\second} & 0.57{\tiny $\pm$0.03} & 0.20{\tiny $\pm$0.03} & 1.76{\tiny $\pm$0.05} & \underline{0.14{\tiny $\pm$0.01}} & 8.96{\tiny $\pm$0.25} & 0.94{\tiny $\pm$0.12} & 0.63{\tiny $\pm$0.16} \\
    & \SI{1}{\micro\second} & 0.65{\tiny $\pm$0.00} & 0.20{\tiny $\pm$0.00} & 0.78{\tiny $\pm$0.03} & \textbf{0.04{\tiny $\pm$0.01}} & 9.64{\tiny $\pm$0.02} & 0.19{\tiny $\pm$0.00} & 0.06{\tiny $\pm$0.00} \\
    \midrule
    \multirow{2}{*}{ConfRover-W} & \SI{240}{\nano\second} & \underline{0.51{\tiny $\pm$0.01}} & \underline{0.42{\tiny $\pm$0.02}} & \underline{0.35 {\tiny $\pm$ 0.01}} & 0.39 {\tiny $\pm$ 0.01} & 44.71{\tiny $\pm$1.55} & 73.13{\tiny $\pm$0.84} & 36.51{\tiny $\pm$1.22} \\
    & \SI{1}{\micro\second} & \underline{0.55{\tiny $\pm$0.02}} & \underline{0.45{\tiny $\pm$0.02}} & \underline{0.33 {\tiny $\pm$ 0.02}} & 0.38 {\tiny $\pm$ 0.03} & \underline{54.74{\tiny $\pm$1.79}} & \underline{62.32{\tiny $\pm$3.43}} & \underline{36.91{\tiny $\pm$1.39}} \\
    \midrule
    \multirow{2}{*}{\methodname{}} & \SI{240}{\nano\second} & \textbf{0.44{\tiny $\pm$0.01}} & \textbf{0.59{\tiny $\pm$0.01}} & \textbf{0.20 {\tiny $\pm$ 0.02}} & \textbf{0.03 {\tiny $\pm$ 0.01}} & \textbf{85.16{\tiny $\pm$1.91}} & \textbf{97.57{\tiny $\pm$0.13}} & \textbf{83.15{\tiny $\pm$1.99}} \\
    & \SI{1}{\micro\second} & \textbf{0.46{\tiny $\pm$0.01}} & \textbf{0.61{\tiny $\pm$0.02}} & \textbf{0.13 {\tiny $\pm$ 0.02}} & \underline{0.10 {\tiny $\pm$ 0.02}} & \textbf{88.47{\tiny $\pm$1.09}} & \textbf{89.81{\tiny $\pm$0.65}} & \textbf{79.93{\tiny $\pm$1.04}} \\
    \bottomrule
  \end{tabular}
}

    \vspace*{.4cm}
  \end{minipage}
  \begin{minipage}{\textwidth}
    \centering
    \includegraphics[width=.98\linewidth]{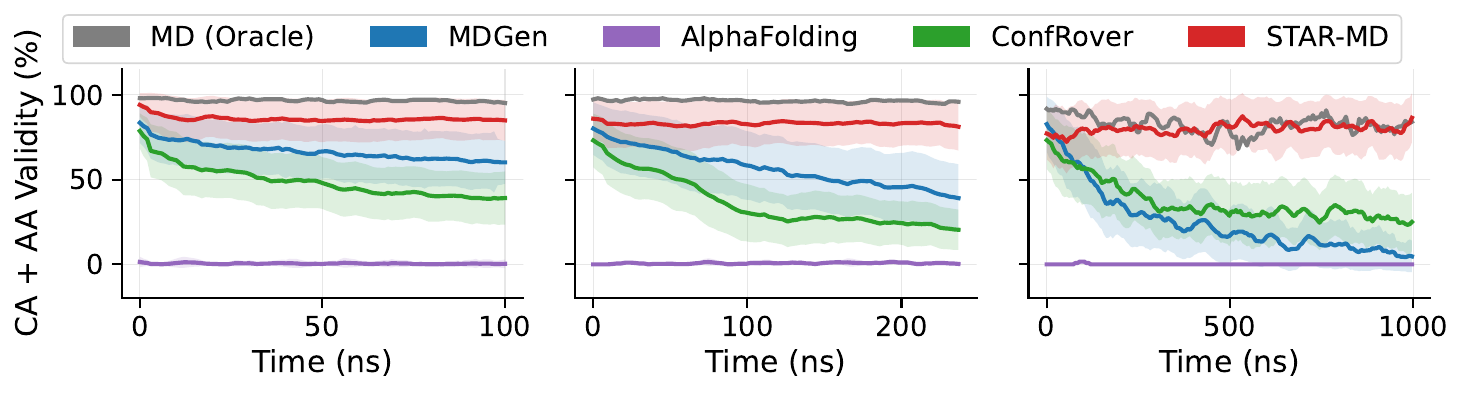}
    \vspace*{-.1cm}
    \caption{\textbf{Long-horizon stability and error accumulation across different time scales.} We plot the structural validity percentage over time for trajectories generated by \methodname{} and baseline models, evaluated at \SI{100}{\nano\second} (left), \SI{240}{\nano\second} (middle), and \SI{1}{\micro\second} (right) horizons. Shaded bands represent $\pm 1$ standard deviation across 5 different repeats.
      While most models exhibit clear error accumulation over simulation time, \methodname{} maintains high structural validity, regardless of the simulation time scale.
    }
    \label{fig:cumulative_error}
  \end{minipage}
\end{figure*}

\vheader
\subsection{Extrapolating for Long-Horizon Trajectories}
\label{exp:long}
\vheader

The ability to generate stable, physically realistic trajectories over extended time horizons is critical for modeling functionally relevant protein dynamics.
Most conformational transitions of biological interest occur beyond the \SI{100}{\nano\second} timescale, where existing generative models begin to fail.
We assess \methodname{}'s long-horizon capability through two increasingly challenging settings: \SI{240}{\nano\second} simulation with 32 proteins and \SI{1}{\micro\second} simulation with 8 proteins, with all proteins unseen during training. To rigorously evaluate temporal extrapolation, all models were trained exclusively on the \SI{100}{\nano\second} ATLAS trajectories and received no fine-tuning for longer horizons. This setup directly tests the ability to generalize to dynamics far beyond the training data distribution.
Since the ATLAS benchmark only provides trajectories up to \SI{100}{\nano\second}, we generated our own molecular dynamics simulations for longer timescales to create proper reference data for evaluation. These extended simulations follow the same simulation protocols as ATLAS but continue the dynamics to longer timescales.

Due to its scaling issues, ConfRover could not be evaluated on either \SI{240}{\nano\second} or \SI{1}{\micro\second} generation tasks. This is because ConfRover performs temporal attention over pair features, requiring previous frames' pair features to be stored in memory as KV cache (see Appendix~\ref{app:complexity} for more detailed analysis). As such, even with CPU offloading of the KV cache, ConfRover's memory requirements exceeded our hardware limits (1869 GB CPU RAM, 8$\times$ H100 GPUs with 80GB VRAM each). As a result and for fair comparison, we utilize a variant of ConfRover with windowed attention with attention sink tokens (as described in \citet{xiao2023efficient}) to reduce memory usage.  We report results for this windowed variant, labeled ConfRover-W, in \cref{tab:results_combined}.

\Cref{tab:results_combined} summarizes the results for these two extended timescales. Remarkably, \methodname{} maintains stable and competitive performance at both the \SI{240}{\nano\second} timescale and the challenging \SI{1}{\micro\second} timescale, demonstrating the effectiveness of our joint spatio-temporal attention and contextual noise techniques for long-horizon stability.
Further, \methodname{} exhibits controlled error accumulation, with low clash and break rates that remain consistent throughout the extended trajectory (Figure~\ref{fig:cumulative_error} (right), \cref{exp:analysis}).
Among the methods achieving reasonable structural quality, \methodname{} attains the best balance of coverage and fidelity, nearing the oracle MD performance.
In both the \SI{240}{\nano\second} and \SI{1}{\micro\second} case, AlphaFolding produces physically implausible, high-error structures despite good coverage metrics, while ConfRover-W and MDGen degrade significantly over the microsecond horizon.
{We further report kinetic metrics for the long trajectory cases in \cref{fig:kinetics_all}, where \methodname{} shows the best alignment with the RMSD and autocorrelation of the MD reference from among baseline methods.}

\vheader
\subsection{Additional Analysis}
\label{exp:analysis}
\vheader

\noindent\textbf{Error Accumulation.}
A key challenge in long-horizon generation is error accumulation, where minor inaccuracies compound over time, leading to trajectory degradation. Figure~\ref{fig:cumulative_error} plots the average validity percentage over simulation time for the \SI{100}{\nano\second}, \SI{240}{\nano\second}, and \SI{1}{\micro\second} simulations. As the simulation horizon extends, baseline models exhibit rapidly deteriorating performance, with structural validity declining sharply. In contrast, \methodname{} remains stable, maintaining high structural validity close to the MD oracle across all timescales. This demonstrates our model's robustness against error accumulation, a quality largely attributable to the historical-context noise mechanism, which enables stable long-horizon autoregressive generation.

\noindent\textbf{Varying temporal resolution for long-horizon generation.} The \SI{1}{\micro\second} generation task above uses a stride of 2.5~ns, resulting in 400 frames. Thanks to our continuous-time conditioning, we can generate at different temporal resolutions without retraining. We test this by generating a \SI{1}{\micro\second} trajectory with a 1.2 ns stride, nearly doubling the number of frames to $\sim$850. Figure~\ref{fig:1us_comparison} shows that \methodname{} remains stable even with this much longer sequence of frames, maintaining high structural quality. This further underscores our model's robustness to different sampling intervals and trajectory lengths, showing strong capability for stable, long-horizon autoregressive generation.

\noindent\textbf{KV Cache Analysis.}
A critical limitation of ConfRover is the need to maintain the KV-cache of single and pair embedding for temporal attentions. In contrast, \methodname{} only requires maintaining the KV cache for singles.
By only maintaining attentions among single representations, \methodname{} has a magnitudes lower memory footprints (see \cref{fig:kv-cost} for a comparison).
This advantage allows our model to maintain a full KV caches in GPU memory without resorting to CPU offloading (e.g, additional overhead) or sliding window style KV caches (compromises temporal history).

\begin{figure*}[t]
  \centering
  \begin{minipage}[t]{0.48\textwidth}
    \centering
    \includegraphics[width=\linewidth]{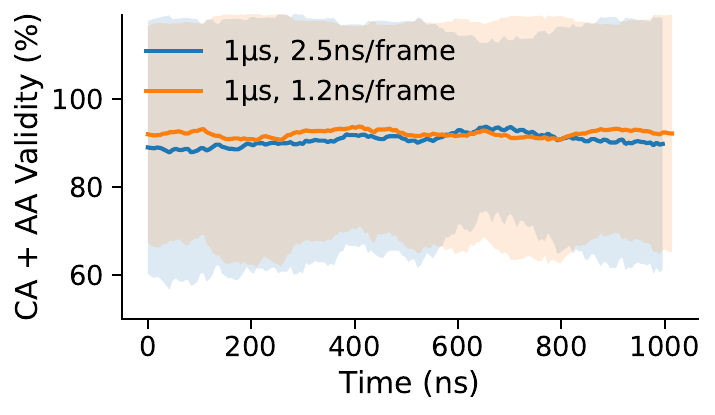}
    \caption{\textbf{Stability of \methodname{} across different temporal strides for \SI{1}{\micro\second} generation.} We plot the structural validity over time for two \SI{1}{\micro\second} trajectories generated with different strides: 2.5 ns/frame (400 steps) and a more challenging 1.2 ns/frame ($\sim$833 steps). Solid lines show mean validity, while shaded bands represent $\pm 1$ standard deviation across test proteins.
      Our models remain stable and maintain high structural quality even when generating much longer sequences of frames than seen in training.
    }
    \label{fig:1us_comparison}
  \end{minipage}
  \hfill
  \begin{minipage}[t]{0.48\textwidth}
    \centering
    \includegraphics[width=\linewidth]{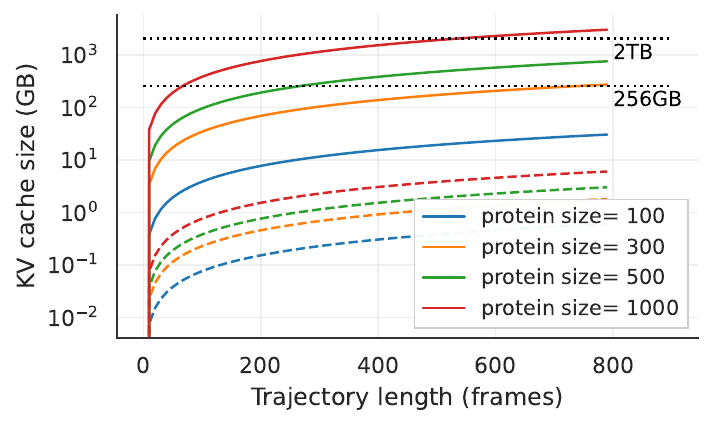}
    \caption{\textbf{Memory footprint of KV caches} with varying protein sizes and trajectory lengths. ConfRover is shown in solid lines and \methodname{} is shown in dashed lines. Both models contain 8 attention layers with hidden dimension of 128 (ConfRover) or 256 (\methodname{}). \methodname{} show much smaller memory cost compared to ConfRover. Horizontal lines marks the 256~GB and 2~TB memory caps.}
    \label{fig:kv-cost}
  \end{minipage}
  \vspace*{-.1cm}
\end{figure*}

\vheader
\subsection{Ablation Studies on Key Components}\label{sec:ablation}
\vheader

\begin{table}[h]
  \centering
  \caption{
    \textbf{Ablation study on \SI{100}{\nano\second} simulation.} The full \methodname{} model is compared against variants with key components removed (no contextual noise, separate spatial and temporal attention, and placing S×T attention outside the diffusion block). These modifications affect model coverage and conformation quality. Metrics degraded in each ablation setting are highlighted in \alert{red}. See complete ablation results on longer trajectories in Appendix~\ref{ap:full_ablation}.
  }
  \label{tab:ablation_table_100ns}
  \resizebox{\columnwidth}{!}{
    \footnotesize
    \begin{tabular}{l cc cccc ccc}
      \toprule
      & \multicolumn{2}{c }{\textbf{Cov Valid}} & \multicolumn{4}{c }{\textbf{Dynamic Fidelity}} & \multicolumn{3}{c }{\textbf{Validity}}                                                                                                                                                                           \\
      \cmidrule(lr){2-3} \cmidrule(lr){4-7} \cmidrule(lr){8-10}
      \textbf{Model}  & \textbf{JSD}$\downarrow$                & \textbf{Rec}$\uparrow$                         & \textbf{tICA}$\uparrow$                & \textbf{RMSD}$\downarrow$ & \textbf{AutoCor}$\downarrow$ & \textbf{VAMP-2}$\downarrow$ & \textbf{CA\%}$\uparrow$ & \textbf{AA\%}$\uparrow$ & \textbf{CA+AA\%}$\uparrow$ \\
      \midrule
      STAR-MD         & 0.42                                    & 0.57                                           & 0.17                                   & 0.09                      & 0.02                         & 0.12                        & 86.62                   & 98.28                   & 85.18                      \\
      \midrule
      w/o Noise       & 0.43                                    & 0.54                                           & 0.18                                   & 0.04                      & \alert{0.11}                 & \alert{1.22}                & \alert{77.82}           & 97.76                   & \alert{ 76.12}             \\
      w/ Sep Attn     & \alert{0.46}                            & \alert{0.46}                                   & 0.17                                   & 0.09                      & 0.05                         & \alert{0.49}                & 87.95                   & 98.34                   & 86.70                      \\
      w/ Preproc Attn & \alert{0.47}                            & \alert{0.48}                                   & 0.17                                   & 0.09                      & 0.05                         & \alert{0.54}                & 84.55                   & 97.81                   & 82.56                      \\
      \bottomrule
    \end{tabular}
  }
\end{table}

To validate our key design choices, we conduct systematic ablations of the main components of \methodname{}. Table~\ref{tab:ablation_table_100ns} summarizes the quantitative results on the \SI{100}{\nano\second} benchmark.
\Cref{app:full_ablation} contains full ablation tables for the \SI{100}{\nano\second}, \SI{240}{\nano\second}, and \SI{1}{\micro\second} benchmarks.

\textbf{S$\times$T attention vs.\ separable attention.} We compare our joint S$\times$T attention with a separable alternative that processes spatial and temporal dimensions sequentially. The separable model suffers a significant drop in conformational coverage and a modest drop in dynamic fidelity. This confirms that joint S$\times$T attention is crucial for capturing the complex, non-separable spatio-temporal dependencies inherent in protein dynamics, which directly impacts the model's ability to explore the correct conformational space.

\textbf{S$\times$T layer placement.} We evaluated placing the spatio-temporal attention module outside the diffusion decoder as a static conditioner which compresses the full trajectory history into a single conditioning vector, mirroring \citet{shen2025simultaneous_confrover}. This variant underperforms our full model in both coverage and fidelity, confirming that integrating spatio-temporal attention directly within the diffusion module is essential for effective context utilization.

\textbf{Historical-context noise.} Removing the contextual noise perturbation leads to a substantial drop in structural quality. This highlights the importance of this technique for maintaining generation stability, a finding that is further supported by our long-horizon experiments in Section~\ref{exp:long}.
\Cref{ap:fig_context_noise} compares the CA+AA validity over time for models with and without contextual noise on the \SI{240}{\nano\second} and \SI{1}{\micro\second} benchmarks, showing that the noise helps maintain structural quality over long horizons.

These ablation studies demonstrate that each component of \methodname{} addresses a specific challenge in protein dynamics modeling. The S$\times$T attention provides the expressivity needed for complex spatio-temporal dependencies.
The historical-context noise ensures stable long-horizon generation by preventing error accumulation.

\section{Conclusion}
\vheader

We introduced STAR-MD, a novel causal diffusion transformer model for generating long-horizon protein dynamics trajectories. STAR-MD addresses the challenges of long-horizon generation through several key innovations: joint spatio-temporal attention that efficiently models complex space-time couplings, continuous-time conditioning for generation at arbitrary timescales, and a noisy-context training scheme that mitigates error accumulation. Our comprehensive evaluations demonstrate that STAR-MD achieves new state-of-the-art performance on the standard \SI{100}{\nano\second} ATLAS benchmark, outperforming previous methods in conformational coverage, structural quality, and kinetic fidelity. More importantly, our model successfully extrapolates to generate stable, high-fidelity trajectories up to the microsecond regime, where prior models often fail due to computational costs or compounding errors. By balancing expressiveness and scalability, STAR-MD establishes an efficient and empirically strong foundation for modeling protein dynamics at biologically relevant scales, paving the way for accelerated exploration of complex biological processes.

\noindent\textbf{Limitations and future work.} While STAR-MD represents a significant step forward, there are avenues for future improvement. The temporal consistency of the generated trajectories, while strong, does not yet perfectly match that of oracle MD simulations. The model's performance could be further enhanced by training on larger and more diverse MD simulation datasets, such as MDCATH~\citep{mirarchi2024mdcath}. Additionally, a promising direction for future work is to extend the model's capabilities to simulate the dynamics of protein complexes or their interactions with small molecules, which are crucial for understanding biological processes and drug design.

\section{Author Contributions}

Y.S. and Q.G. conceived the study, designed the experimental protocols, and supervised methodology development. N.S., Y.L., and Y.S. developed the spatiotemporal causal diffusion architecture and analyzed model performance. P.L. and Y.S. conceptualized the contextual noise perturbation, which N.S. implemented. N.S., P.L. and R.B. conducted the theoretical verification of the Mori-Zwanzig formalism and validated the mathematical frameworks. All authors participated in the manuscript preparation and critical review for publication.

\section{Acknowledgments}
We would like to thank Chan Lu, Cheng-Yen (Wesley) Hsieh, Yi Zhou, Zaixiang Zheng, Yilai Li and other members of ByteDance for their valuable feedback and suggestions that helped improve this work.

\clearpage

\bibliographystyle{iclr2026_conference}
\bibliography{iclr2026_conference}

\clearpage

\beginappendix

{
  \hypersetup{linkcolor=black}
  \etocdepthtag.toc{mtappendix}
  \etocsettagdepth{mtchapter}{none}
  \etocsettagdepth{mtappendix}{subsection}
  \tableofcontents
}
\newpage

\part*{Theoretical Analysis}
\addcontentsline{toc}{part}{Theoretical Analysis}

\section{Theoretical Foundation for Learning Coarse-Grained Protein Dynamics}\label{app:theory}

\subsection{Overview and Motivation}\label{sec:method:theory_justification}

Modeling protein dynamics in solution requires a complete description of the system (e.g., coordinates and momenta of all protein and solvent atoms), making it computationally prohibitive for current generative models. Any practical implementation must adopt coarse-grained representations, such as per-residue coordinates solely on protein structures.
The Mori-Zwanzig formalism~\citep{mori1965transport, zwanzig1961memory} shows that projecting full-atom dynamics into a reduced subspace (as done by our coarse-graining) introduces memory effects, rendering the dynamics non-Markovian and requiring an explicit \textbf{memory kernel} to preserve the original dynamics in this coarse-grained space.
For protein structural learning, the reduction from singles-and-pairs representations to singles-only representations introduces a complex spatio-temporal-coupling memory kernel. Therefore, to remain scalable without the overhead from expensive pair representations, a model \textit{must} employ an architecture capable of learning complex, non-separable spatio-temporal dependencies. This analysis motivates our departure from prior ``space-then-time'' factorized models to a joint spatio-temporal attention mechanism.

Following sections formalize the effects of coarse-graining on protein dynamic learning: We introduce Mori-Zwanzig formalism in \cref{sec:theory:mz_formalism}, its specialized form in protein structural representations in \cref{sec:theory:projections}, and the implications for neural architecture design in \cref{sec:theory:implications}.

\subsection{Formal Derivation of Memory Inflation}\label{sec:theory:derivation}

\subsubsection{Background: Mori-Zwanzig Formalism}\label{sec:theory:mz_formalism}

The Mori-Zwanzig formalism~\citep{mori1965transport, zwanzig1961memory} provides a mathematical framework for deriving reduced dynamical models from high-dimensional systems. We begin with a high-dimensional system described by variables $\Gamma(t)$ evolving according to:

\begin{equation}
  \frac{d\Gamma}{dt} = \mathcal{L}\Gamma(t)
\end{equation}

where $\mathcal{L}$ is the Liouville operator. When we project onto a lower-dimensional subspace of variables $A(t)$ using a linear projection operator $\mathcal{P}$\footnote{In the context of the Mori-Zwanzig formalism, $\mathcal{P}$ is a linear operator that projects onto a chosen subspace of the full phase space.}, the Mori-Zwanzig theorem provides the exact generalized Langevin equation (GLE) for their evolution:

\begin{equation}\label{eq:mz}
  \frac{dA(t)}{dt} = \mathcal{P}\mathcal{L}A(t) + \int_0^t K(t-\tau)A(\tau)d\tau + F(t)
\end{equation}

where:
\begin{itemize}
  \item $A(t)$ represents our chosen variables (the projection of $\Gamma(t)$).
  \item $\mathcal{P}\mathcal{L}A(t)$ captures Markovian (instantaneous) dynamics.
  \item $\int_0^t K(t-\tau)A(\tau)d\tau$ is the memory term encoding history dependence and $K(t) = \mathcal{P}\mathcal{L}e^{(1-\mathcal{P})\mathcal{L}t}(1-\mathcal{P})\mathcal{L}$ is the memory kernel, which encodes the information over the degrees of freedom that are not explicitly modeled.
  \item $F(t)$ is the random force, a rapidly-fluctuating noise term arising from degrees of freedom in the $(1-\mathcal{P})$ subspace.
\end{itemize}

Accounting for the memory term induced by coarse-graining is crucial to correctly capture the dynamics of the original system.

\subsubsection{Protein Dynamics Projections}\label{sec:theory:projections}

In deep learning models for proteins, per-residue features (singles) and their pairwise representations (pairs) are commonly adopted to model their structures~(\citep{jumper2021_af2,abramson2024accurate,jingAlphaFoldMeetsFlow2024,shen2025simultaneous_confrover,chen2024diffusion}
  Here, we consider two specific projections:

  \paragraph{Singles-and-Pairs Representation}
  We project onto both per-residue features $s(t)\in\mathbb{R}^{N\times d_s}$ and pairwise features $z(t)\in\mathbb{R}^{N\times N\times d_z}$, yielding state $A^{(1)}(t)=(s(t),z(t))$. The resulting dynamics can be written in block form:

  \begin{equation}
    \begin{pmatrix} \dot{s}(t) \\ \dot{z}(t)
    \end{pmatrix} =
    \begin{pmatrix} \Omega_{ss} & \Omega_{sz} \\ \Omega_{zs} & \Omega_{zz}
    \end{pmatrix}
    \begin{pmatrix} s(t) \\ z(t)
    \end{pmatrix} +
    \int_0^t
    \begin{pmatrix} K^{(1)}_{ss}(t-\tau) & K^{(1)}_{sz}(t-\tau) \\ K^{(1)}_{zs}(t-\tau) & K^{(1)}_{zz}(t-\tau)
    \end{pmatrix}
    \begin{pmatrix} s(\tau) \\ z(\tau)
    \end{pmatrix} d\tau +
    \begin{pmatrix} F_s(t) \\ F_z(t)
    \end{pmatrix}
  \end{equation}

  where the memory kernel $K^{(1)}$ has block structure corresponding to singles-singles ($K^{(1)}_{ss}$), singles-pairs ($K^{(1)}_{sz}$), pairs-singles ($K^{(1)}_{zs}$), and pairs-pairs ($K^{(1)}_{zz}$) interactions.

  \paragraph{Singles-Only Representation}
  We project directly onto per-residue features $s(t)$ alone, yielding the more compact state $A^{(2)}(t)=s(t)$. The dynamics become:

  \begin{equation}
    \dot{s}(t) = \Omega^{(2)}s(t) + \int_0^t K^{(2)}(t-\tau)s(\tau)d\tau + F^{(2)}(t)
  \end{equation}

  \subsubsection{Memory Inflation on Singles-Only Representations}\label{sec:theory:proof}

  \begin{restatable}{proposition}{memoryinflation}[Memory Inflation]\label{prop:inflation}
    Under linearization, the memory kernel $\tilde{K}^{(2)}$ for the singles-only representation relates to the singles-and-pairs memory kernel $\tilde{K}^{(1)}$ by:
    \begin{equation}
      \tilde{K}^{(2)}(p) = \tilde{K}^{(1)}_{ss}(p) + \underbrace{(\Omega_{sz}+\tilde{K}^{(1)}_{sz}(p)){\left[pI-\Omega_{zz}-\tilde{K}^{(1)}_{zz}(p)\right]}^{-1}(\Omega_{zs}+\tilde{K}^{(1)}_{zs}(p))}_{\text{Inflation Term}},
      \label{eq:inflation}
    \end{equation}
    where $\tilde{K}(p)$ is the Laplace transform of the memory kernel, $p$ is the Laplace variable, $I$ is the identity matrix, $\Omega$ represents instantaneous (Markovian) dynamics, and the subscripts $s$ and $z$ refer to singles and pairs features, respectively.
  \end{restatable}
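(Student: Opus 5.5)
We work from the singles-and-pairs generalized Langevin equation of \cref{sec:theory:projections}, treated as a linear system with convolution memory, and take Laplace transforms. Under the linearization assumption the block operators $\Omega_{\cdot\cdot}$ are constant matrices and the kernels $K^{(1)}_{\cdot\cdot}$ are matrix-valued functions of time. Convolutions then become products, and the whole derivation reduces to block Gaussian elimination, i.e.\ a Schur complement.

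Write $\tilde s(p)$, $\tilde z(p)$, $\tilde F_s(p)$, $\tilde F_z(p)$ for the transforms. The transformed equations are
\begin{align}
p\tilde s - s(0) &= (\Omega_{ss}+\tilde K^{(1)}_{ss})\tilde s + (\Omega_{sz}+\tilde K^{(1)}_{sz})\tilde z + \tilde F_s, \\
p\tilde z - z(0) &= (\Omega_{zs}+\tilde K^{(1)}_{zs})\tilde s + (\Omega_{zz}+\tilde K^{(1)}_{zz})\tilde z + \tilde F_z .
\end{align}

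The plan has three steps.

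\textbf{Step 1: eliminate the pairs.} Solve the pair equation for $\tilde z$. Provided $pI-\Omega_{zz}-\tilde K^{(1)}_{zz}(p)$ is invertible (true for $\mathrm{Re}\,p$ large, and then by analytic continuation), we get
\[
\tilde z = \bigl[pI-\Omega_{zz}-\tilde K^{(1)}_{zz}\bigr]^{-1}\bigl((\Omega_{zs}+\tilde K^{(1)}_{zs})\tilde s + z(0)+\tilde F_z\bigr).
\]

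\textbf{Step 2: substitute into the singles equation.} This gives a closed equation in $\tilde s$ of the form
\[
p\tilde s - s(0) = \bigl[\Omega_{ss}+\tilde K^{(1)}_{ss} + M(p)\bigr]\tilde s + \tilde G(p),
\]
where
\[
M(p) = (\Omega_{sz}+\tilde K^{(1)}_{sz})\bigl[pI-\Omega_{zz}-\tilde K^{(1)}_{zz}\bigr]^{-1}(\Omega_{zs}+\tilde K^{(1)}_{zs}),
\]
and $\tilde G$ collects $\tilde F_s$ together with the propagated pair initial condition and pair noise.

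\textbf{Step 3: match to the singles-only GLE.} Compare with the transformed singles-only equation,
\[
p\tilde s - s(0) = (\Omega^{(2)}+\tilde K^{(2)}(p))\tilde s + \tilde F^{(2)},
\]
and identify terms. $\tilde G$ plays the role of $\tilde F^{(2)}$. The $\tilde s$-coefficients must agree, so $\Omega^{(2)}+\tilde K^{(2)}(p) = \Omega_{ss}+\tilde K^{(1)}_{ss}(p)+M(p)$.

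\textbf{Main obstacle: splitting Markovian from memory.} Identification requires a convention for separating the instantaneous part from the memory part. The statement is consistent with the convention $\Omega^{(2)}=\Omega_{ss}$ and everything $p$-dependent assigned to memory. I would justify this as follows. Under the Mori projection onto $s$, the instantaneous term is $\mathcal{P}_s\mathcal{L}\mathcal{P}_s$. Restricted to the linearized singles block, this is exactly $\Omega_{ss}$, because the $s$-projection removes the direct $s\to z\to s$ route, which therefore appears only through time-nonlocal feedback.

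The Schur-complement term $M(p)$ need not vanish as $p\to\infty$. For example, if the kernels decay it behaves like $\Omega_{sz}\Omega_{zs}/p$. A $1/p$ behavior corresponds to a genuine memory kernel rather than a delta function, so it can be consistently absorbed into $\tilde K^{(2)}$. This yields \cref{eq:inflation} with $M(p)$ as the inflation term.

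I would also note two remaining caveats. First, the projected noise $\tilde G$ must be orthogonal to $s$ for the result to be a proper GLE; I would state this as an assumption inherited from the linearization. Second, the invertibility of the pair resolvent should be recorded as a hypothesis.
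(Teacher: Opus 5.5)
Your proposal is correct and matches the paper's proof: Laplace-transform the block GLE, solve the pair row for $\tilde z(p)$ using $[pI-\Omega_{zz}-\tilde K^{(1)}_{zz}(p)]^{-1}$, substitute into the singles row, and match against the transformed singles-only GLE, identifying $\Omega^{(2)}=\Omega_{ss}$, the Schur-complement inflation term, and $\tilde F^{(2)}=\tilde F_s+(\Omega_{sz}+\tilde K^{(1)}_{sz})[pI-\Omega_{zz}-\tilde K^{(1)}_{zz}]^{-1}(z(0)+\tilde F_z)$. Your extra hypotheses (resolvent invertibility, noise orthogonality, the Markovian/memory split) are left implicit in the paper; one correction: $M(p)\sim\Omega_{sz}\Omega_{zs}/p$ \emph{does} vanish as $p\to\infty$, and that decay (no constant, i.e.\ no $\delta$-function-in-time, part) is exactly what justifies placing all of $M(p)$ in $\tilde K^{(2)}$ rather than in $\Omega^{(2)}$.
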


  \begin{proof}
    We begin by taking the Laplace transform of the singles-and-pairs dynamics:

    \begin{equation}
      \begin{pmatrix} p\tilde{s}(p) - s(0) \\ p\tilde{z}(p) - z(0)
      \end{pmatrix} =
      \begin{pmatrix} \Omega_{ss} & \Omega_{sz} \\ \Omega_{zs} & \Omega_{zz}
      \end{pmatrix}
      \begin{pmatrix} \tilde{s}(p) \\ \tilde{z}(p)
      \end{pmatrix} +
      \begin{pmatrix} \tilde{K}^{(1)}_{ss}(p)\tilde{s}(p) & \tilde{K}^{(1)}_{sz}(p)\tilde{z}(p) \\ \tilde{K}^{(1)}_{zs}(p)\tilde{s}(p) & \tilde{K}^{(1)}_{zz}(p)\tilde{z}(p)
      \end{pmatrix} +
      \begin{pmatrix} \tilde{F}_s(p) \\ \tilde{F}_z(p)
      \end{pmatrix}
    \end{equation}

    From the second row, we can express $\tilde{z}(p)$ in terms of $\tilde{s}(p)$:

    \begin{equation}
      \tilde{z}(p) = {\left[pI - \Omega_{zz} - \tilde{K}^{(1)}_{zz}(p)\right]}^{-1}{\left[z(0) + (\Omega_{zs} + \tilde{K}^{(1)}_{zs}(p))\tilde{s}(p) + \tilde{F}_z(p)\right]}
    \end{equation}

    Substituting this into the first row equation:

    \begin{align}
      p\tilde{s}(p) - s(0) &= \Omega_{ss}\tilde{s}(p) + \tilde{K}^{(1)}_{ss}(p)\tilde{s}(p) + \tilde{F}_s(p) + \\
      &\quad (\Omega_{sz} + \tilde{K}^{(1)}_{sz}(p)){\left[pI - \Omega_{zz} - \tilde{K}^{(1)}_{zz}(p)\right]}^{-1}{\left[z(0) + (\Omega_{zs} + \tilde{K}^{(1)}_{zs}(p))\tilde{s}(p) + \tilde{F}_z(p)\right]}
    \end{align}

    Collecting terms with $\tilde{s}(p)$ and comparing with the Laplace transform of the singles-only representation:

    \begin{equation}
      p\tilde{s}(p) - s(0) = \Omega^{(2)}\tilde{s}(p) + \tilde{K}^{(2)}(p)\tilde{s}(p) + \tilde{F}^{(2)}(p)
    \end{equation}

    We identify:
    \begin{align}
      \Omega^{(2)} &= \Omega_{ss} \\
      \tilde{K}^{(2)}(p) &= \tilde{K}^{(1)}_{ss}(p) + (\Omega_{sz} + \tilde{K}^{(1)}_{sz}(p)){\left[pI - \Omega_{zz} - \tilde{K}^{(1)}_{zz}(p)\right]}^{-1}(\Omega_{zs} + \tilde{K}^{(1)}_{zs}(p)) \\
      \tilde{F}^{(2)}(p) &= \tilde{F}_s(p) + (\Omega_{sz} + \tilde{K}^{(1)}_{sz}(p)){\left[pI - \Omega_{zz} - \tilde{K}^{(1)}_{zz}(p)\right]}^{-1}{\left[z(0) + \tilde{F}_z(p)\right]}
    \end{align}

    This completes the proof, showing that the memory kernel in the singles-only representation incorporates additional terms that account for the eliminated pair dynamics.
  \end{proof}

  \subsection{Implications for Architecture Design}\label{sec:theory:implications}

  From the Memory Inflation result in \cref{prop:inflation}, we can draw two important remarks that guide our architectural design:

  \subsubsection{Memory Enrichment}

  \begin{remark}[Memory Enrichment]\label{rem:enrichment}
    When spatial detail is removed from a dynamical system, the memory kernel must become more expressive to preserve the system's physical accuracy.
  \end{remark}

  This follows directly from the inflation term in \cref{prop:inflation}. By eliminating the explicit representation of pairwise features $z(t)$, we force the memory kernel $K^{(2)}$ to incorporate additional dynamics that were previously captured through the direct modeling of pairs. This represents a fundamental trade-off between:

  \begin{enumerate}
    \item \textbf{Spatial complexity}: Using $O(N^2)$ variables to explicitly represent all pairwise relationships
    \item \textbf{Temporal complexity}: Using a richer memory structure that implicitly encodes these relationships through time
  \end{enumerate}

  The theorem quantifies exactly how much additional memory structure is required: it must include all dynamical information that would have flowed through the pairs variables in the more complex representation.

  \subsubsection{Spatio-Temporal Coupling}

  \begin{remark}[Spatio-Temporal Coupling]\label{rem:coupling}
    The memory kernel in the singles-only representation cannot be factorized as independent spatial and temporal components, requiring models that capture non-separable coupling between spatial indices and temporal frequencies.
  \end{remark}

  The inflation term:
  \begin{equation}
    (\Omega_{sz}+\tilde{K}^{(1)}_{sz}(p))
    {\left[pI-\Omega_{zz}-\tilde{K}^{(1)}_{zz}(p)\right]}^{-1}
    (\Omega_{zs}+\tilde{K}^{(1)}_{zs}(p))
  \end{equation}

  has a critical property: it cannot be factorized as a product of purely spatial and purely temporal operators. To see this, consider its structure:

  \begin{enumerate}
    \item $(\Omega_{sz}+\tilde{K}^{(1)}_{sz}(p))$ couples spatial indices $(i,j)$ with temporal frequency $p$
    \item The matrix inverse ${\left[pI-\Omega_{zz}-\tilde{K}^{(1)}_{zz}(p)\right]}^{-1}$ mixes these couplings in a non-separable way
    \item The final term $(\Omega_{zs}+\tilde{K}^{(1)}_{zs}(p))$ further couples the result with additional spatial indices
  \end{enumerate}

  In general, in the time domain, this corresponds to a memory kernel with structure:
  \begin{equation}
    K^{(2)}_{ij}(t-\tau) \neq u_{ij}v(t-\tau)
  \end{equation}

  where $i,j$ are residue indices. This non-separability means that the memory kernel cannot be decomposed into independent spatial and temporal components—spatial relationships evolve with time, and temporal patterns differ across spatial relationships.

  \subsubsection{Connection to Machine Learning Architectures}\label{sec:theory:ml}

  These theoretical results directly inform architectural design for protein dynamics models:

  \begin{enumerate}
    \item \textbf{Factorized attention} (spatial then temporal, or vice versa) cannot capture the non-separable coupling revealed by Remark~\ref{rem:coupling}.
    \item \textbf{Joint spatio-temporal attention} with tokens indexing both residue and time provides exactly the structure needed to learn the inflated memory kernel demanded by Remark~\ref{rem:enrichment}.
  \end{enumerate}

  This analysis demonstrates that architectures that model protein dynamics without explicit pairwise features require sophisticated temporal modeling capabilities. The mathematical structure of the memory kernel dictates the minimum expressivity requirements for any machine learning model that aims to capture the underlying physics accurately.

  \vheader

\section{Complexity Analysis of Spatio-Temporal Attention}\label{app:complexity}
\vheader

To understand the computational complexity and scaling behavior of different protein dynamics models, we compare the  the spatial and temporal operations in ConfRover and our \methodname{} approach.

\subsection{Problem Formulation and Notation}

For a protein with $N$ residues and a trajectory with $L$ frames, we define:
\begin{itemize}
  \item $\mathbf{s}_\ell \in \mathbb{R}^{N \times d_s}$: Per-residue (singles) features at frame $\ell$
  \item $\mathbf{z}_\ell \in \mathbb{R}^{N \times N \times d_z}$: Pairwise features at frame $\ell$
\end{itemize}

The full representation across all frames would be:
\begin{align}
  \mathbf{S} & = [\mathbf{s}_1, \mathbf{s}_2, \ldots, \mathbf{s}_L] \in \mathbb{R}^{L \times N \times d_s}          \\
  \mathbf{Z} & = [\mathbf{z}_1, \mathbf{z}_2, \ldots, \mathbf{z}_L] \in \mathbb{R}^{L \times N \times N \times d_z}
\end{align}

\subsection{Computational Complexity of Different Attention Mechanisms}

\subsubsection{Full Spatio-Temporal Attention (Theoretical)}

The full spatio-temporal attention computes across all residues and all time frames for both singles and pairs, with computational complexity:
\begin{align}
  \mathcal{O}((L \times (N + N^2))^2) = \mathcal{O}(L^2 N^4)
\end{align}

This is computationally prohibitive for any realistic protein system.

\subsubsection{ConfRover Approach: Two Sources of Computational Complexity}

ConfRover faces computational challenges from two distinct sources:

\paragraph{1. Channel-Factorized Temporal Attention.} The primary limitation in ConfRover is its temporal attention mechanism. It applies temporal attention independently to each single feature and each pair feature:

\begin{itemize}
  \item For singles: $N$ separate $L \times L$ temporal attention operations, complexity $\mathcal{O}(N \times L^2)$
  \item For pairs: $N^2$ separate $L \times L$ temporal attention operations, complexity $\mathcal{O}(N^2 \times L^2)$
\end{itemize}

Total temporal attention complexity: $\mathcal{O}(N \times L^2 + N^2 \times L^2) = \mathcal{O}((N + N^2) \times L^2)$

While this factorization makes computation manageable, it assumes that temporal dynamics can be modeled independently per channel (whether single or pair), which fundamentally limits the model's ability to capture complex spatio-temporal couplings that govern protein dynamics.

\paragraph{2. Pairformer Operations for Spatial Modeling.} In addition to the temporal attention, ConfRover employs a Pairformer layer for spatial interactions at each time step, which has complexity:
\begin{itemize}
  \item $\mathcal{O}(N^3)$ per frame due to the interaction between all residues and all pairs
  \item $\mathcal{O}(N^3 \times L)$ for the entire trajectory
\end{itemize}

The combined time complexity is therefore $\mathcal{O}((N + N^2) \times L^2 + N^3 \times L)$. For large proteins, the $N^3$ term dominates, making scaling to large systems prohibitive both in terms of computation time and memory usage.

\subsubsection{\methodname{} Approach: Single-Restricted Spatio-Temporal Attention}

Our approach restricts spatio-temporal attention to only single-residue features:

\begin{itemize}
  \item We apply one $(N \times L) \times (N \times L)$ attention operation on the flattened singles tensor
  \item Computational complexity: $\mathcal{O}((N \times L)^2) = \mathcal{O}(N^2 L^2)$
\end{itemize}

Critically, we eliminate the Pairformer component and its $\mathcal{O}(N^3)$ complexity. Instead, we rely on the spatio-temporal attention mechanism to implicitly learn the necessary pairwise relationships through the Memory Inflation mechanism described in Section~\ref{sec:method:theory_justification}.

\subsection{STAR-MD vs.\ Factorized Spatial-Temporal Architectures}

STAR-MD employs a joint spatio-temporal attention mechanism with complexity $\mathcal{O}(N^2 L^2)$. In contrast, some architectures utilize spatial Pairformer layers ($\mathcal{O}(N^3 L)$) and temporal attention layers on singles ($\mathcal{O}(N L^2)$), resulting in a total complexity of $\mathcal{O}(N^3 L + N L^2)$.

To identify the regime where each model is more efficient, we equate their complexities:
\begin{equation}
  N^2 L^2 = N^3 L + N L^2
\end{equation}
Assuming $N, L > 0$, we can simplify this to find the crossover point:
\begin{equation}
  L \approx N \quad (\text{specifically } L = \frac{N^2}{N-1})
\end{equation}

This reveals two distinct regimes:
\begin{itemize}
  \item \textbf{Regime 1 ($L \gtrsim N$):} When the context length exceeds the protein size (e.g., small peptides with very long history), the $\mathcal{O}(NL^2)$ temporal scaling of baselines is advantageous.
  \item \textbf{Regime 2 ($N \gg L$):} For realistic protein modeling, where system size $N$ is large (hundreds to thousands of residues) and context history $L$ is fixed (e.g., 50-100 frames), STAR-MD is significantly more efficient. In this regime, the baselines' cubic spatial scaling ($\mathcal{O}(N^3 L)$) becomes a prohibitive bottleneck.
\end{itemize}

As illustrated in Figure~\ref{fig:complexity_scaling}, for a standard context length of $L=80$, STAR-MD's computational cost scales much more favorably with protein size than baseline methods, enabling the simulation of large protein complexes that are computationally intractable for $\mathcal{O}(N^3)$ methods.

\begin{figure}[h]
  \centering
  \includegraphics[width=\textwidth]{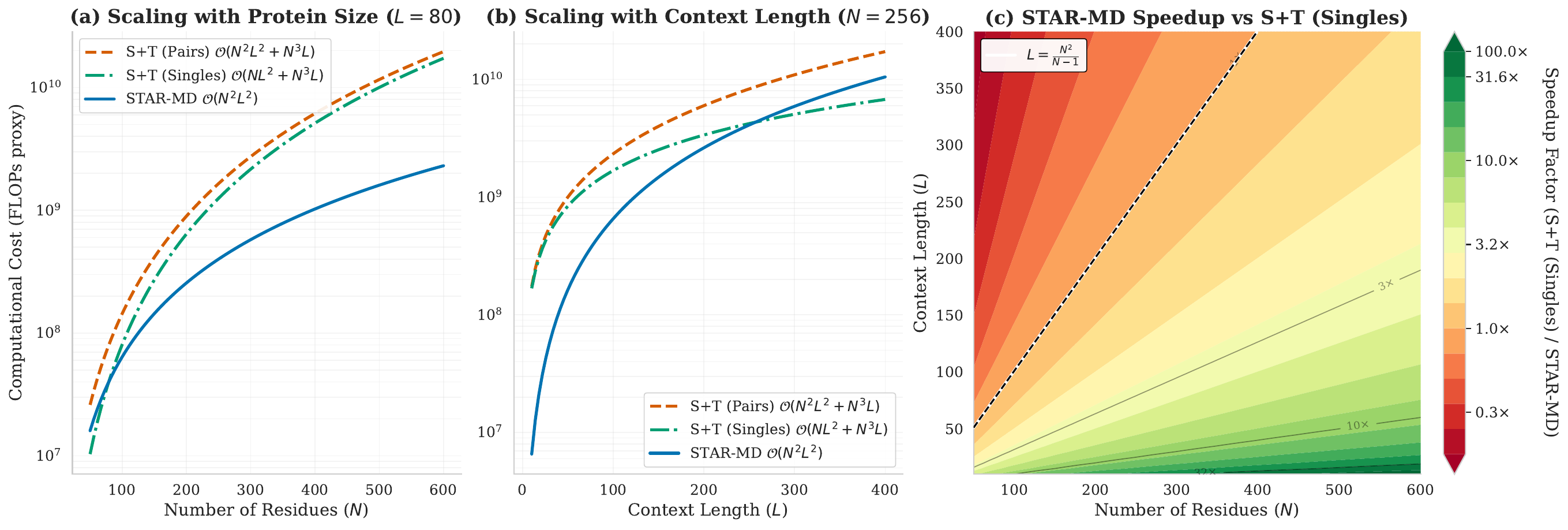}
  \caption{\textbf{Computational complexity scaling.} Comparison of computational cost (FLOPs proxy) for STAR-MD ($\mathcal{O}(N^2 L^2)$), Pairformer + Temporal Attention on Pairs ($\mathcal{O}(N^2 L^2 + N^3 L)$), and Pairformer + Temporal Attention on Singles ($\mathcal{O}(N L^2 + N^3 L)$). \textbf{Left:} Scaling with protein size $N$ for a fixed context length $L=80$. STAR-MD scales significantly better for large proteins due to the absence of cubic spatial terms. \textbf{Middle:} Scaling with context length $L$ for a fixed protein size $N=256$. While baselines have lower temporal complexity, their high spatial cost dominates in the realistic regime where $N > L$. \textbf{Right:} Heatmap showing the speedup factor of STAR-MD over Pairformer + Temporal Attention on Singles as a function of protein size $N$ and context length $L$. STAR-MD achieves significant speedups in the realistic regime where $N > L$.}
  \label{fig:complexity_scaling}
\end{figure}

\subsection{KV Cache Efficiency During Inference}\label{subsec:kv_cache}

An important aspect to consider in autoregressive protein trajectory generation is the memory cost during inference, particularly related to the key-value (KV) cache used for efficient generation. This becomes especially important for long-horizon generation where hundreds or thousands of frames must be maintained in context.

\paragraph{ConfRover KV Cache Requirements.} During autoregressive generation, ConfRover must store in its KV cache:
\begin{itemize}
  \item Singles features: $\mathcal{O}(N \times L \times d)$ memory
  \item Pairs features: $\mathcal{O}(N^2 \times L \times d)$ memory
\end{itemize}

Total KV cache memory: $\mathcal{O}((N + N^2) \times L \times d)$

For a medium-sized protein with $N=200$ residues, a context window of $L=32$ frames, and embedding dimension $d=256$, this results in approximately:
\begin{align}
  \text{Memory for each layer} & = (200 + 200^2) \times 32 \times 256 \times 4 \text{ bytes} \\
  & = (200 + 40000) \times 32 \times 256 \times 4 \text{ bytes} \\
  & \approx 1.3 \text{ GB}
\end{align}

\paragraph{\methodname{} KV Cache Requirements.} In contrast, \methodname{} only needs to store keys and values for the singles features:
\begin{align}
  \text{Memory for each layer} & = N \times L \times d \times 4 \text{ bytes}      \\
  & = 200 \times 32 \times 256 \times 4 \text{ bytes} \\
  & \approx 6.6 \text{ MB}
\end{align}

This represents a \textbf{196$\times$ reduction} in KV cache memory requirements compared to ConfRover, which becomes increasingly significant as the protein size increases or when using longer context windows.

\paragraph{Scaling to Larger Proteins and Longer Trajectories.} Figure~\ref{fig:kv-cost} shows the estimated KV cache memory requirements for different protein sizes and trajectory lengths.

\subsection{Practical Scaling Behavior}

While the asymptotic complexity of \methodname{} with respect to protein size $N$ is still quadratic ($\mathcal{O}(N^2 L^2)$), our approach offers significant practical advantages:

\begin{enumerate}
  \item \textbf{Elimination of cubic scaling terms}: The most expensive $\mathcal{O}(N^3)$ operations from the Pairformer are eliminated.

  \item \textbf{Efficient attention implementations}: Our attention pattern is amenable to highly optimized implementations like FlashAttention~\cite{dao2022flashattention}, which provides significant practical speedups.

  \item \textbf{KV cache efficiency}: As shown in Section~\ref{subsec:kv_cache}, our approach dramatically reduces memory requirements during inference by avoiding the need to store pair features in the KV cache.
\end{enumerate}

\vheader

\part*{Data \& Implementation Details}
\addcontentsline{toc}{part}{Data \& Implementation Details}

\section{ATLAS Dataset}
\label{ap:atlas}
The ATLAS~\citep{vander2024atlas} dataset contains triplicated \SI{100}{\nano\second} simulation for 1390 proteins with diverse structures and dynamics, representative for different ECOD X-class domains. Following prior work~\citep{jingAlphaFoldMeetsFlow2024,jing2024generative_mdgen,wang2024proteinconfdiff,shen2025simultaneous_confrover}, we adopt a time-based split for ATLAS. Specifically, the train/validation/test sets are divided based on the release date of each protein, using cutoff dates of May 1, 2018 and May 1, 2019. Proteins released before May 1, 2018 are used for training, and proteins released after May 1, 2019 are used for testing. The diverse nature of ATLAS makes it a standard benchmark for evaluating protein conformation and trajectory generation under transferrable settings.

For each \SI{100}{\nano\second} trajectories, snapshots of atom coordinates were saved at \SI{10}{\pico\second} intervals. We use the coordinates for all heavy atoms in protein for model training and evaluation. Similar to ~\citet{shen2025simultaneous_confrover}, we exclude the training proteins longer than 384 amino acids, leading to 1080 training proteins.
We include the inference task configurations for different simulation time in Table \ref{tab:simu_tasks}.

\begin{table}
  \centering
  \begin{tabular}{ccc}
    \toprule
    Simulation time  & Intervals (snapshots) & Number of frames \\
    \midrule
    \SI{100}{\nano\second} & 120 & 80 \\
    \SI{240}{\nano\second}  & 120 & 200 \\
    \SI{1}{\micro\second} & 250 & 400  \\
    \bottomrule
  \end{tabular}
  \caption{Inference task configurations for ATLAS dataset.}
  \label{tab:simu_tasks}
\end{table}

\section{Details on Evaluation Metrics}
\label{ap:eval}
\vheader

\paragraph{Trajectory Alignment.}
Prior to any quantitative analysis (including conformational coverage and kinetic fidelity), all model-generated trajectories are aligned to the corresponding reference MD simulation. This is performed via C$\alpha$ superposition to a common reference frame, ensuring that all comparisons are independent of global rotational and translational differences.

\paragraph{Conformational Coverage.}

Conformational state recovery is evaluated by comparing the distribution of model-generated and reference conformations in a PCA space. Each conformation is parameterized by the 3D coordinates of C$\alpha$ atoms. The PCA space is constructed using conformations from reference MD simulations in ATLAS. To compare distributions, each principal component is discretized into 10 evenly spaced bins. After projecting the conformations into this space, we count their occurrences in each bin and compute the distribution similarity using Jensen-Shannon Distance
(JSD). We also binarize the occupancy counts to compute precision, recall, and F1-score -- evaluating
whether sampled conformations fall within known states, following prior works ~\citep{lustr2str,wang2024proteinconfdiff,zheng2024predicting,shen2025simultaneous_confrover}.

\paragraph{Kinetic Fidelity.} Previous metrics operate on individual proteins and ensembles of proteins, independent of their temporal ordering. To characterize the temporal evolution of protein conformations, we evaluate four complementary metrics: coordinate root mean squared distance (RMSD), autocorrelation, tICA correlation~\citep{shen2025simultaneous_confrover}, and VAMP-2 score~\citep{wu2020variational}.

Let $\mathbf{x}_{1:L} = (\mathbf{x}_1, \ldots, \mathbf{x}_L)$ denote a trajectory of $L$ frames, where each frame $\mathbf{x}_\ell \in \R^{N_\text{atom}\times 3}$ is the conformation at frame index $\ell$. Given a model-generated trajectory $\hat{\mathbf{x}}_{1:L}$ and MD reference $\mathbf{x}_{1:L}$, we project all trajectories onto the top 32 principal components computed using MD reference C$\alpha$ coordinates. For a particular lagtime $\tau$, the coordinate RMSD is computed via
\begin{equation}
  \mathbb{E}_{\ell}[\text{RMSD}(\mathbf{x}_{\ell+\tau}, \mathbf{x}_\ell)],
\end{equation} which is an approximate average velocity highlighting the dynamical timescales of conformational changes. Autocorrelation is computed via
\begin{equation}
  \mathbb{E}_{\ell}\left[\frac{(\mathbf{x}_\ell-\mu)(\mathbf{x}_{\ell+\tau} - \mu)}{\sigma^2}\right],
\end{equation} which is a measure of covariance within each trajectory. The aim of each metric is to approximate how the corresponding measure behaves in the MD reference data (dashed lines), capturing not only their overall trend but also characteristic magnitudes.

For tICA correlation, we follow \citet{shen2025simultaneous_confrover} and compare the principal dynamic modes of the generated and reference trajectories.
Input trajectories are aligned and represented by $C_\alpha$ coordinates flattened to $d=3N$ features.
We first compute a validity mask $M \in \{0, 1\}^L$ for the generated trajectory, retaining only frames that pass the ``CA + AA Validity'' checks defined in \cref{ap:eval:quality}. Only time-lagged pairs $(\mathbf{x}_\ell, \mathbf{x}_{\ell+\tau})$ where both frames are valid ($M_\ell=1, M_{\ell+\tau}=1$) are used for analysis.
We fit separate tICA models to the reference and generated trajectories by solving $C_{\tau} \mathbf{v}_i = \lambda_i C_{0} \mathbf{v}_i$, where $C_0$ and $C_\tau$ are the instantaneous and time-lagged covariance matrices, respectively. We use lagtimes $\tau \in [1, 5, 10, 20]$, apply kinetic map scaling, and set the regularization cutoff to $\epsilon=10^{-6}$.
For the top $k$ independent components (PC1 and PC2), we extract the left singular vector $\mathbf{v}^{(k)} \in \mathbb{R}^{3N}$ and compute a per-residue contribution score $S_i^{(k)} = \max\left( |v_{i,x}|, |v_{i,y}|, |v_{i,z}| \right)$ for each residue $i$.
The final metric is the absolute Pearson correlation coefficient between the reference and generated scores: $|r_k| = |\text{Pearson}(S_{\text{ref}}^{(k)}, S_{\text{gen}}^{(k)})|$, averaged across PC1 and PC2.
Our evaluation procedure mirrors \citet{shen2025simultaneous_confrover}, with the sole exception that we only consider valid pairs via $M$. For any given lagtime, we report the tICA correlation only if at least 30 time-lagged pairs are valid; otherwise, we denote the result as ``N/A''.

Finally, the VAMP-2 score measures how well a set of features captures the slow dynamical modes of the system~\citep{wu2020variational}. The Koopman matrix is approximated via
\begin{equation}
  K = C_{00}^{-1/2} C_{0\tau} C_{\tau\tau}^{-1/2}, \quad \text{where } C_{\ell_1\ell_2} = \mathbb{E}_\ell[(\mathbf{x}_{\ell_1} - \mu)(\mathbf{x}_{\ell_1 + \ell_2} - \mu)],
\end{equation}
and the VAMP-2 score is the squared Frobenius norm of the Koopman matrix. We compute VAMP-2 score as a function of lagtime, projecting all trajectories onto the top 32 principal components computed using MD reference C$\alpha$ coordinates, consistent with the RMSD and autocorrelation metrics.

\paragraph{Structural Quality and Validity}\label{ap:eval:quality}

To assess the physical plausibility of generated protein conformations, we use a suite of metrics that measure different aspects of structural integrity. A generated frame is considered \textbf{structurally valid} if it simultaneously satisfies all of the following conditions, which are based on thresholds derived from oracle MD simulations:

\begin{itemize}[leftmargin=*]
  \item \textbf{No C\textsubscript{$\alpha$} Chain Breaks:} The distance between consecutive C\textsubscript{$\alpha$} atoms must be below a certain threshold.
  \item \textbf{No C\textsubscript{$\alpha$} Clashes:} The distance between any two non-adjacent C\textsubscript{$\alpha$} atoms must be above a certain threshold.
  \item \textbf{Plausible Backbone Geometry:} The percentage of residues that are outliers in the Ramachandran plot must be below a threshold.
  \item \textbf{Plausible Side-Chain Geometry:} The percentage of residues with outlier rotamers must be below a threshold.
\end{itemize}

The specific thresholds for these metrics were determined by analyzing the distribution of these quality metrics in the ground-truth \SI{100}{\nano\second} MD simulation trajectories (our ``oracle''). For each metric, we computed its value across all frames of the oracle trajectories and set the threshold at the 99th percentile (i.e., accepting 99\% of the oracle frames).
An exception was made for the C\textsubscript{$\alpha$} chain break rate. The oracle MD simulations contained no chain breaks, resulting in a 99th percentile threshold of 0. This is too strict for current generative models. To allow for a more meaningful comparison, we set a small, non-zero tolerance for chain breaks.

The exact thresholds used for determining a valid frame are listed in Table~\ref{tab:quality_thresholds}. We define three sets of criteria for validity: \textbf{C$\alpha$ + All-Atom}, which requires passing all four checks; \textbf{All-Atom Only}, which checks only Ramachandran and rotamer criteria; and \textbf{C$\alpha$-Only}, which checks only for C$\alpha$ clashes and breaks.

\begin{table}[h]
  \centering
  \caption{\textbf{Structural quality thresholds for validity.} Thresholds were derived from the 99th percentile of the oracle \SI{100}{\nano\second} MD simulations, except for the C\textsubscript{$\alpha$} break rate.}
  \label{tab:quality_thresholds}
  \begin{tabular}{l c}
    \toprule
    \textbf{Metric} & \textbf{Threshold} \\
    \midrule
    \multicolumn{2}{c}{\textit{C\textsubscript{$\alpha$} + All-Atom Evaluation}} \\
    \midrule
    Ramachandran Outliers (\%) & $\le 4.12$ \\
    Rotamer Outliers (\%) & $\le 7.05$ \\
    C\textsubscript{$\alpha$} Clash Rate (\%) & $\le 1.29$ \\
    C\textsubscript{$\alpha$} Break Rate (\%) & $\le 0.2$ \\
    \multicolumn{2}{c}{\textit{All-Atom Only Evaluation}} \\
    \midrule
    Ramachandran Outliers (\%) & $\le 4.12$ \\
    Rotamer Outliers (\%) & $\le 7.05$ \\
    \midrule
    \multicolumn{2}{c}{\textit{C\textsubscript{$\alpha$}-Only Evaluation}} \\
    \midrule
    C\textsubscript{$\alpha$} Clash Rate (\%) & $\le 1.29$ \\
    C\textsubscript{$\alpha$} Break Rate (\%) & $\le 0.2$ \\
    \bottomrule
  \end{tabular}
\end{table}

\clearpage

\part*{Architecture Details}
\addcontentsline{toc}{part}{Architecture Details}

\section{Architecture Details: Pair Features}\label{app:architecture_details_pairs}
\vheader

\methodname{} modifies \citet{shen2025simultaneous_confrover}'s standard AlphaFold-derived architecture by removing the Pairformer module to reduce computational cost, while retaining explicit pairwise feature updates.

\noindent\textbf{FrameEncoder.}
STAR-MD takes as input the pre-trained pairwise features from a frozen OpenFold~\citep{jumper2021_af2} model. Then, it employs a \texttt{FrameEncoder} module~\citet{shen2025simultaneous_confrover} which incorporates pairwise geometric information into the pair features.

\noindent\textbf{EdgeTransition.}
Pairwise features $\mathbf{z} \in \mathbb{R}^{N \times N \times d_z}$ are initialized by the \texttt{FrameEncoder} and updated via \texttt{EdgeTransition} layers. Unlike the global attention in Pairformer, \texttt{EdgeTransition} uses a local MLP update after each spatio-temporal attention block. For residues $i$ and $j$:
\begin{equation}
  \mathbf{z}_{ij} \leftarrow \mathbf{z}_{ij} + \text{MLP}(\mathbf{s}_i, \mathbf{s}_j, \mathbf{z}_{ij})
\end{equation}
This maintains spatial context without $\mathcal{O}(N^3)$ complexity.

\noindent\textbf{Singles-Only Spatio-Temporal Attention.}
Joint spatio-temporal attention is applied exclusively to single-residue features $\mathbf{s}$. This avoids the $\mathcal{O}(N^2 L)$ memory cost of temporal attention over pairs. Pair features modulate the attention mechanism solely through bias terms, similar to Invariant Point Attention (IPA).

\clearpage

\section{Baseline Implementation}
\label{ap:baseline}

\noindent\textbf{MD oracles.} In \SI{100}{\nano\second} simulation, we use one of the triplicated trajectories in the ATLAS dataset as the oracle to estimate the expected performance from an independent simulation run.
To simulate longer trajectories, we follow the setup described in~\citet{vander2024atlas} using provided equilibrated structures and Gromacs \texttt{.tpr} files to preproduce production runs at the desired lengths. All simulations are conducted using Gromacs (version 2023.2) on NVIDIA~V100 GPUs with 40~GB memories. Independent reference and oracle trajectories are generated by repeating simulations with different random seeds.

\noindent\textbf{AlphaFolding~\citep{cheng20254d}.}

We followed the official data preparation and inference procedures provided by the authors~\footnote{https://github.com/fudan-generative-vision/dynamicPDB/tree/main/applications/4d\_diffusion}, using the recommended model weights (\texttt{frame16\_step\_95000.pth}). Trajectories were generated using the recommended extrapolation script (\texttt{run\_eval\_extrapolation.sh}) with parameters \texttt{sample\_step=1}, \texttt{n\_motion=2}, and \texttt{n\_frames=16}. This model employs a block-wise autoregressive scheme: it generates a fixed-size block of 16 frames conditioned on the final frames of the preceding block. This process is repeated to achieve the desired total simulation time by adjusting the \texttt{extrapolation\_time} parameter.

AlphaFolding has a fixed time step of \SI{10}{\pico\second}, which differs from our model's variable stride capability. Generated trajectories were sub-sampled to match the frame rates required for evaluation.

\noindent\textbf{MDGen~\citep{jing2024generative_mdgen}.}
We followed the official data preparation and inference procedures provided by the authors~\footnote{https://github.com/bjing2016/mdgen}, using the provided model weights (\texttt{atlas.ckpt}). The model was trained on ATLAS trajectories preprocessed with 400~ps intervals (stride 40 with base interval 10~ps). For inference, we used the forward simulation script (\texttt{sim\_inference.py}) with parameters \texttt{num\_frames=250} and \texttt{suffix=\_R1}.

This model generates trajectories autoregressively in blocks of 250 frames with an internal timestep of 400~ps per frame. A single rollout produces \SI{100}{\nano\second} of simulation time (250 frames $\times$ 400~ps = \SI{100}{\nano\second}). For \SI{100}{\nano\second} trajectories, we used \texttt{num\_rollouts=1}. For \SI{240}{\nano\second} and \SI{1}{\micro\second} trajectories, we extended generation by setting \texttt{num\_rollouts} to the required number of sequential blocks. Generated trajectories were then sampled post-generation to match the frame rates required for evaluation.

\noindent\textbf{ConfRover~\citep{shen2025simultaneous_confrover}.}
We use the code and model weights provided by the authors. ConfRover is a frame-level autoregressive model similar to \methodname{}, and we adopt the same setup as our model to generate trajectories for each experiments. For \SI{100}{\nano\second} results, we enable CPU-offloaded caching to store full key-value history in system memory, while for \SI{240}{\nano\second} and \SI{1}{\micro\second} results, we employ a sliding-window cache (size=14) with attention sinks (first two frames)~\citep{Xiao2023EfficientSL_sinkcache}, labeled ConfRover-W. This reduces its temporal complexity from $\mathcal{O}(L^2)$ to $\mathcal{O}(L \times W)$ and prevents out-of-memory errors. Despite slight quality drop, ConfRover-W maintains comparable performance to the full-attention model, see Figure~\ref{fig:confrover_w_quality} and Table~\ref{tab:confrover_w_comparison} for the validation on the \SI{100}{\nano\second} benchmark.

All generative model baselines are evaluated on NVIDIA~H100 GPUs with 80~GB GPU memory and 2~TB system memory.

\newpage

\begin{figure}[h]
  \centering
  \includegraphics[width=0.6\textwidth]{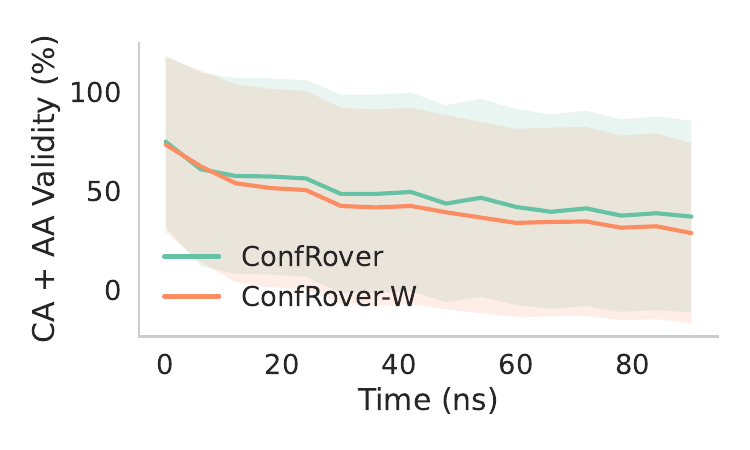}
  \caption{\textbf{Validation of ConfRover-W at \SI{100}{\nano\second}.} Comparison of Structural Validity (C$\alpha$ + All-Atom) between the standard full-attention ConfRover and the windowed ConfRover-W. The windowed variant maintains comparable performance, validating its use as a proxy for long-horizon experiments.}
  \label{fig:confrover_w_quality}
\end{figure}

\begin{table}[h]
  \centering
  \caption{\textbf{Comparison of ConfRover and ConfRover-W at \SI{100}{\nano\second}.} Quantitative metrics show that the windowed approximation (ConfRover-W) achieves performance comparable to the full-attention model (ConfRover), justifying its use for longer horizons.}
  \label{tab:confrover_w_comparison}
  \begin{tabular}{l cc c ccc}
    \toprule
    & \multicolumn{2}{c }{\textbf{Cov Valid}} & \textbf{Dyn.} & \multicolumn{3}{c }{\textbf{Validity}} \\
    \cmidrule(lr){2-3} \cmidrule(lr){4-4} \cmidrule(lr){5-7}
    \textbf{Model} & \textbf{JSD}$\downarrow$ & \textbf{Rec}$\uparrow$ & \textbf{tICA}$\uparrow$ & \textbf{CA\%}$\uparrow$ & \textbf{AA\%}$\uparrow$ & \textbf{CA+AA\%}$\uparrow$ \\
    \midrule
    ConfRover & 0.52 & 0.36 & \underline{0.15} & \underline{56.43} & \underline{92.37} & \underline{52.02} \\
    ConfRover-W & \underline{0.51} & \underline{0.37} & \underline{0.15} & 50.08 & 89.05 & 45.70 \\
    \methodname{} & \textbf{0.42} & \textbf{0.57} & \textbf{0.17} & \textbf{86.83} & \textbf{98.23} & \textbf{85.35} \\
    \bottomrule
  \end{tabular}
\end{table}

\clearpage

\section{Model Hyperparameters}\label{app:training}

We include the details of our training and inference configuration in \cref{tab:dataset_config,tab:diffusion_config,tab:model_arch,tab:loss_optim}. The model was trained on the ATLAS dataset following the train/val/test split in the previous works\citep{shen2025simultaneous_confrover,jingAlphaFoldMeetsFlow2024,jing2024generative_mdgen,wang2024proteinconfdiff,cheng20254d}.
\methodname{} contains 4 diffusion blocks, each containing 1 IPA layer and 2 S$\times$T layers with hidden dimension 256 and 8 attention heads (\cref{tab:model_arch}). We train on trajectory snippets with context length $L=8$ frames and global batch size of 8 (\cref{tab:dataset_config}).

We use the Adam optimizer and learning rate $5\times10^{-5}$ with the similar loss setup as used in ~\citet{shen2025simultaneous_confrover} (\cref{tab:loss_optim}). Diffusion parameters are provided in \cref{tab:diffusion_config}.

For distributed training, we employed DeepSpeed Stage 2 optimization with gradient checkpointing to efficiently manage memory usage during training of large protein systems. All models are trained 8 NVIDIA~H100 GPUs with 80~GB memory.

\begin{table}[h]
  \centering
  \caption{\textbf{Dataset and Training Configuration}}
  \label{tab:dataset_config}
  \begin{tabular}{ll}
    \toprule
    \textbf{Parameter} & \textbf{Value} \\
    \midrule
    \multicolumn{2}{l}{\textit{Dataset Configuration}} \\
    \midrule
    Training data & ATLAS train split (length $\le$ 384) \\
    Global batch size & 1 \\
    Frames per trajectories & 8 \\
    Frame intervals & 1$\sim 2^{10}$ of 10~ps snapshots \\
    \midrule
    \multicolumn{2}{l}{\textit{Feature Representation}} \\
    \midrule
    Single-residue feature dim & 384 \\
    Pairwise feature dim & 128 \\
    Number of recycles & 3 \\
    \midrule
    \multicolumn{2}{l}{\textit{Data Augmentation}} \\
    \midrule
    Clean noise max magnitude & 0.1 \\
    Clean noise sampling prob & 0.75 \\
    Noise sampling & Frame-level \\
    \bottomrule
  \end{tabular}
\end{table}

\begin{table}[h]
  \centering
  \caption{\textbf{Diffusion Model Configuration}}
  \label{tab:diffusion_config}
  \begin{tabular}{ll}
    \toprule
    \textbf{Parameter} & \textbf{Value} \\
    \midrule
    \multicolumn{2}{l}{\textit{SE(3) Diffusion}} \\
    \midrule
    Coordinate scaling & 0.1 \\
    Translation $b_\text{min}$/$b_\text{max}$ & 0.1 / 20.0 \\
    Translation schedule & Linear \\
    Rotation $\sigma_\text{min}$/$\sigma_\text{max}$ & 0.1 / 1.5 \\
    Rotation schedule & Logarithmic \\
    \midrule
    \multicolumn{2}{l}{\textit{Sampling}} \\
    \midrule
    Method & Euler SDE \\
    Diffusion steps & 200 \\
    $t_\text{max}$ / $t_\text{min}$ & 1.0 / 0.01 \\
    \bottomrule
  \end{tabular}
\end{table}

\begin{table}[t]
  \centering
  \caption{\textbf{Model Architecture}}
  \label{tab:model_arch}
  \begin{tabular}{ll}
    \toprule
    \textbf{Component} & \textbf{Configuration} \\
    \midrule
    \multicolumn{2}{l}{\textit{Encoder (PseudoBetaPairEncoder)}} \\
    \midrule
    Residue index embedding size & 128 \\
    Output size & 128 \\
    \midrule
    \multicolumn{2}{l}{\textit{Invariant Point Attention}} \\
    \midrule
    IPA blocks & 4 \\
    Single channel size & 256 \\
    Pair channel size & 128 \\
    Hidden channel size & 256 \\
    Skip channel size & 64 \\
    Attention heads & 4 \\
    Query/key points & 8 \\
    Value points & 12 \\
    \midrule
    \multicolumn{2}{l}{\textit{Spatiotemporal Attention}} \\
    \midrule
    Model dimension & 256 \\
    Number of layers & 2 \\
    Number of heads & 4 \\
    RoPE 2D & Enabled \\
    \bottomrule
  \end{tabular}
\end{table}

\begin{table}[h]
  \centering
  \caption{\textbf{Loss Functions and Optimization}}
  \label{tab:loss_optim}
  \begin{tabular}{ll}
    \toprule
    \textbf{Parameter} & \textbf{Value} \\
    \midrule
    \multicolumn{2}{l}{\textit{Loss Weights}} \\
    \midrule
    Rotation score loss & 0.5 \\
    Translation score loss & 1.0 \\
    Torsion loss & 0.5 \\
    Backbone FAPE loss & 0.5 \\
    Sidechain FAPE loss & 0.5 \\
    Backbone coordinates loss & 0.25 (diffusion $t \le 0.25$) \\
    Backbone distance map loss & 0.25 (diffusion $t \le 0.25$) \\
    \midrule
    \multicolumn{2}{l}{\textit{Optimization}} \\
    \midrule
    Optimizer & Adam \\
    Learning rate & 0.0002 \\
    LR schedule & Linear warmup + cosine decay \\
    Warmup epochs & 5 \\
    Total epochs & 250 \\
    Warmup start LR factor & 0.01 \\
    Minimum LR factor & 0.1 \\
    Gradient clipping & 1.0 (norm) \\
    Precision & BF16 mixed precision \\
    \bottomrule
  \end{tabular}
\end{table}

\clearpage
\part*{Extended Results}
\addcontentsline{toc}{part}{Extended Results}

\section{Extended ATLAS Analysis}\label{app:extended_atlas}
\subsection{Additional Results on Dynamic Fidelity}\label{app:dynamic_fidelity}

We visualize the corresponding kinetic metrics for \SI{100}{\nano\second}, \SI{240}{\nano\second}, and \SI{1}{\micro\second} in Figure \ref{fig:kinetics_all}. For each metric, we include 2 separate MD simulations for reference (dashed lines). In all but one case (autocorrelation for \SI{1}{\micro\second}), STAR-MD better approximates the trend and values of MD references, showing higher kinetic fidelity than the rest of models.

We also include the corresponding metrics for our ablation study (Section \ref{sec:ablation}) in Figure \ref{fig:kinetics_ablation}, where STAR-MD outperforms other variants in all but two cases (\SI{100}{\nano\second} C$\alpha$ coordinate RMSD and \SI{1}{\micro\second} C$\alpha$ autocorrelation at large lagtime).

We further provide a detailed breakdown of the tICA correlation metric in Figure~\ref{fig:tica_detailed} and Table~\ref{tab:tica_stats}. Figure~\ref{fig:tica_detailed} shows the per-component and per-lagtime correlations for all valid runs, highlighting the variability across different simulation instances. Table~\ref{tab:tica_stats} summarizes these results, demonstrating that STAR-MD achieves the highest mean correlation among generative models, closely matching the reference MD baseline.

\newcommand{\rsize}{\small}
\begin{figure}[ht]
  \centering
  \begin{subfigure}[t]{0.505\textwidth}
    \centering
    \includegraphics[width=\textwidth]{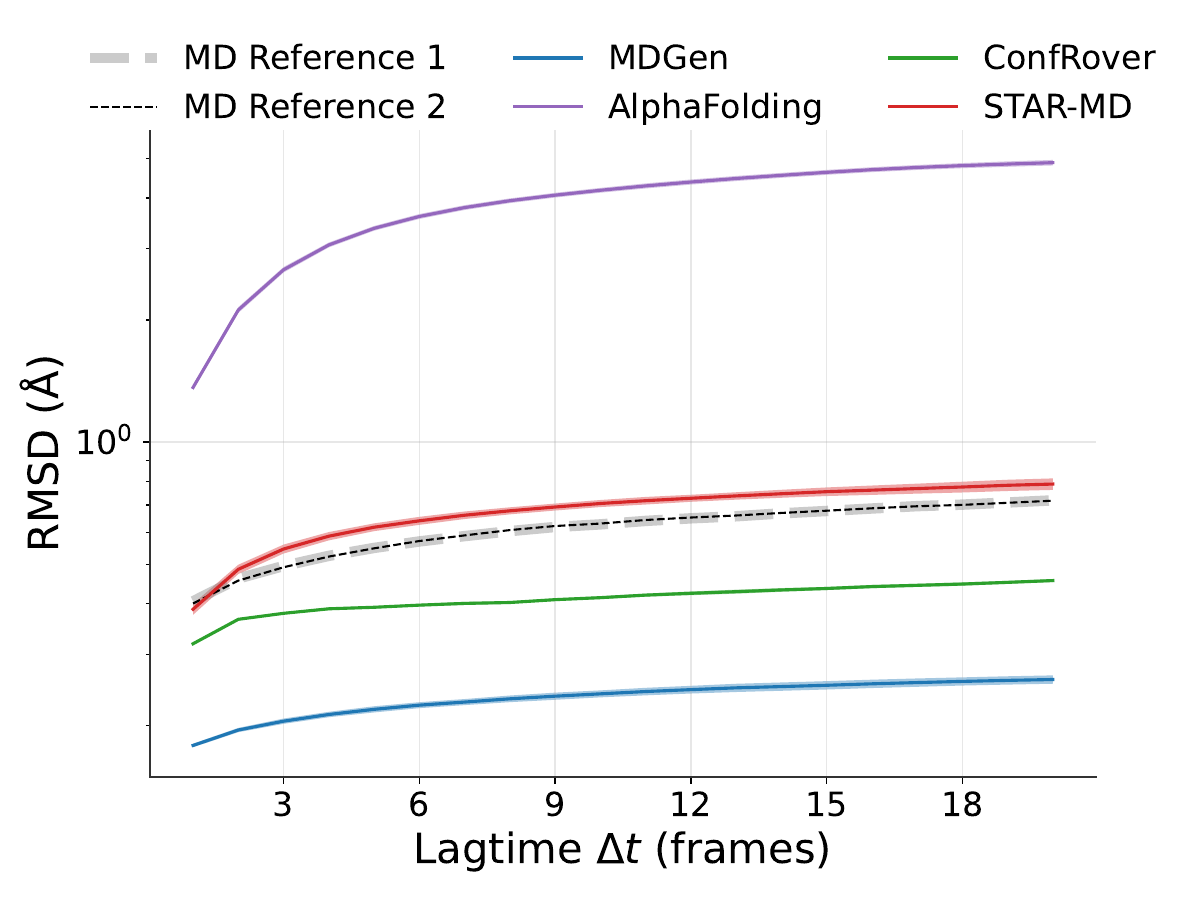}
    \caption{\rsize C$\alpha$ coordinate RMSD vs lagtime (\SI{100}{\nano\second} simulation)}
  \end{subfigure}
  \hfill
  \begin{subfigure}[t]{0.48\textwidth}
    \centering
    \includegraphics[width=\textwidth]{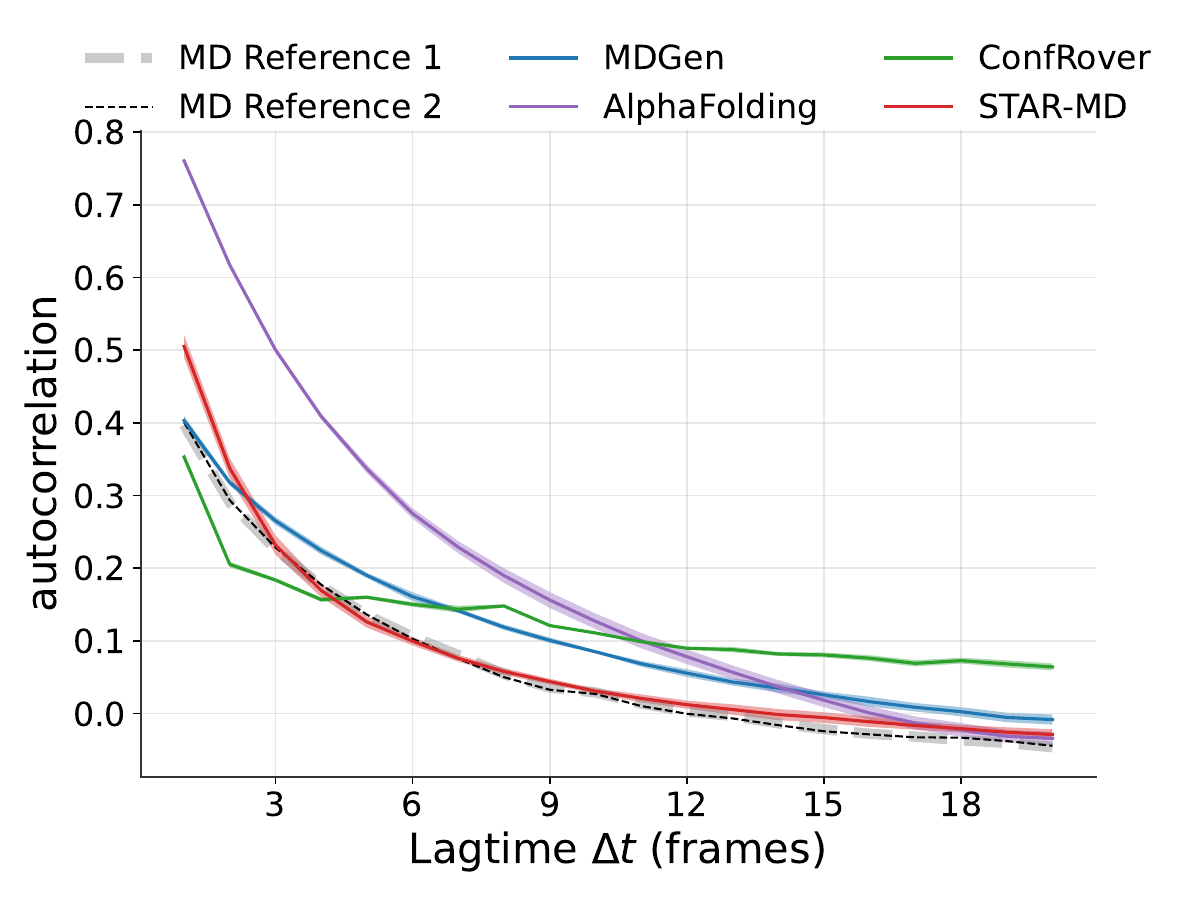}
    \caption{\rsize C$\alpha$ autocorrelation vs lagtime (\SI{100}{\nano\second} simulation)}
  \end{subfigure}

  \vspace{0.5cm}

  \begin{subfigure}[t]{0.505\textwidth}
    \centering
    \includegraphics[width=\textwidth]{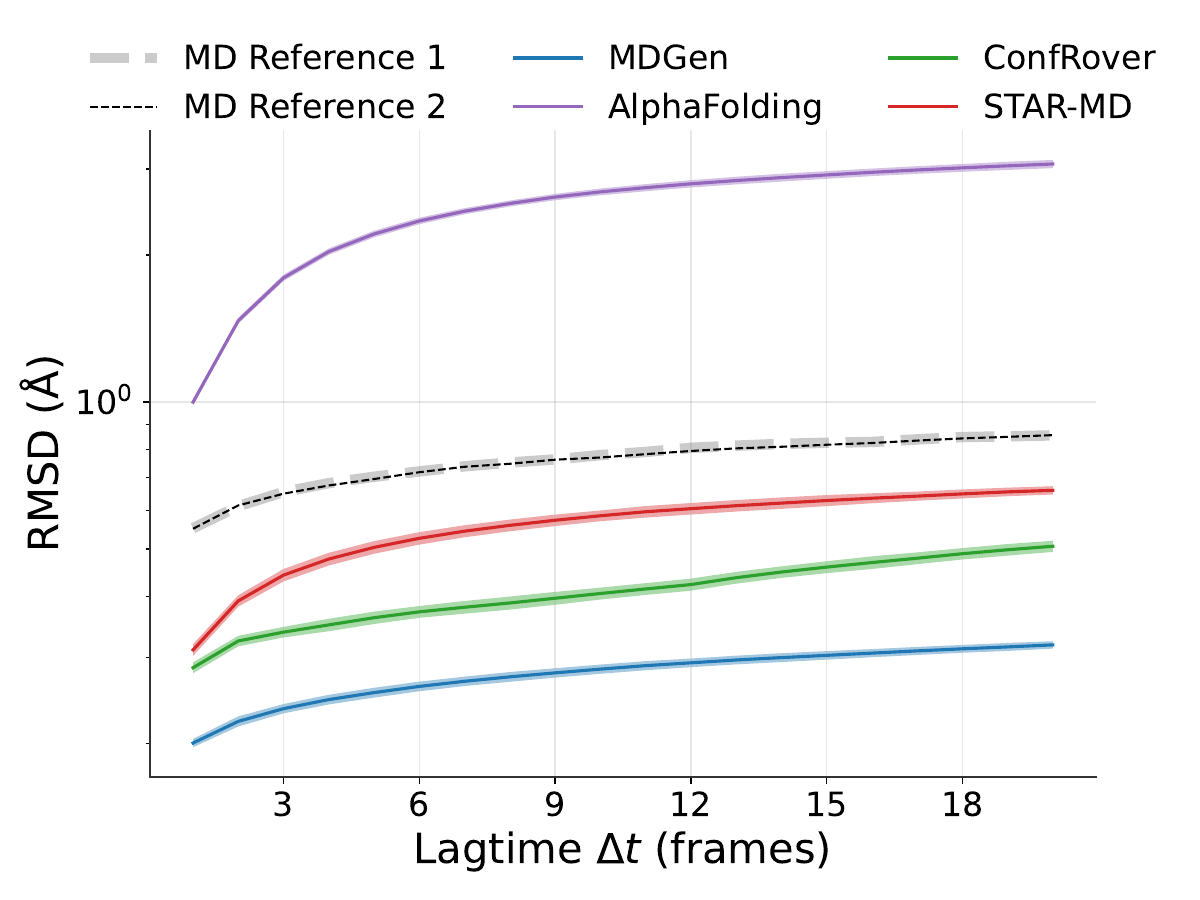}
    \caption{\rsize C$\alpha$ coordinate RMSD vs lagtime (\SI{240}{\nano\second} simulation)}
  \end{subfigure}
  \hfill
  \begin{subfigure}[t]{0.48\textwidth}
    \centering
    \includegraphics[width=\textwidth]{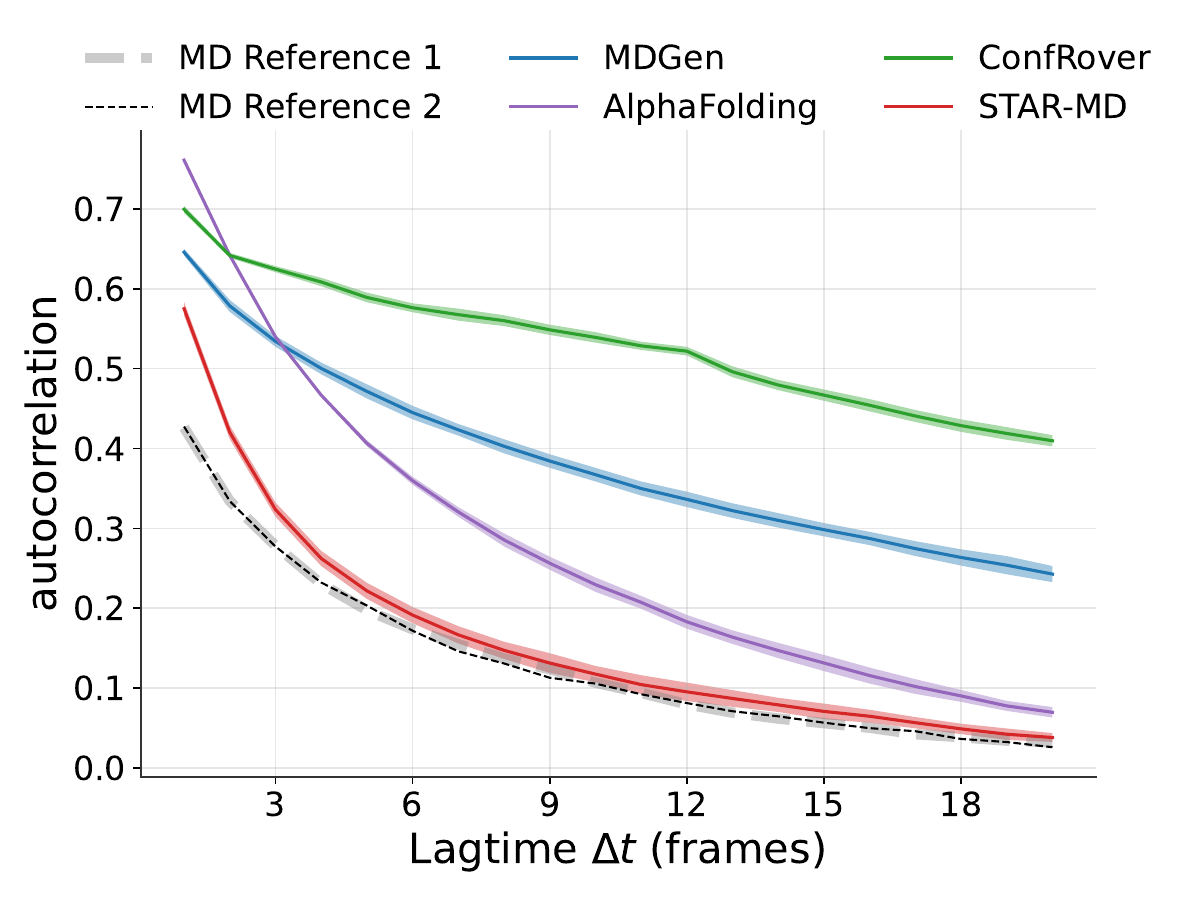}
    \caption{\rsize C$\alpha$ autocorrelation vs lagtime (\SI{240}{\nano\second} simulation)}
  \end{subfigure}

  \vspace{0.5cm}

  \begin{subfigure}[t]{0.5\textwidth}
    \centering
    \includegraphics[width=\textwidth]{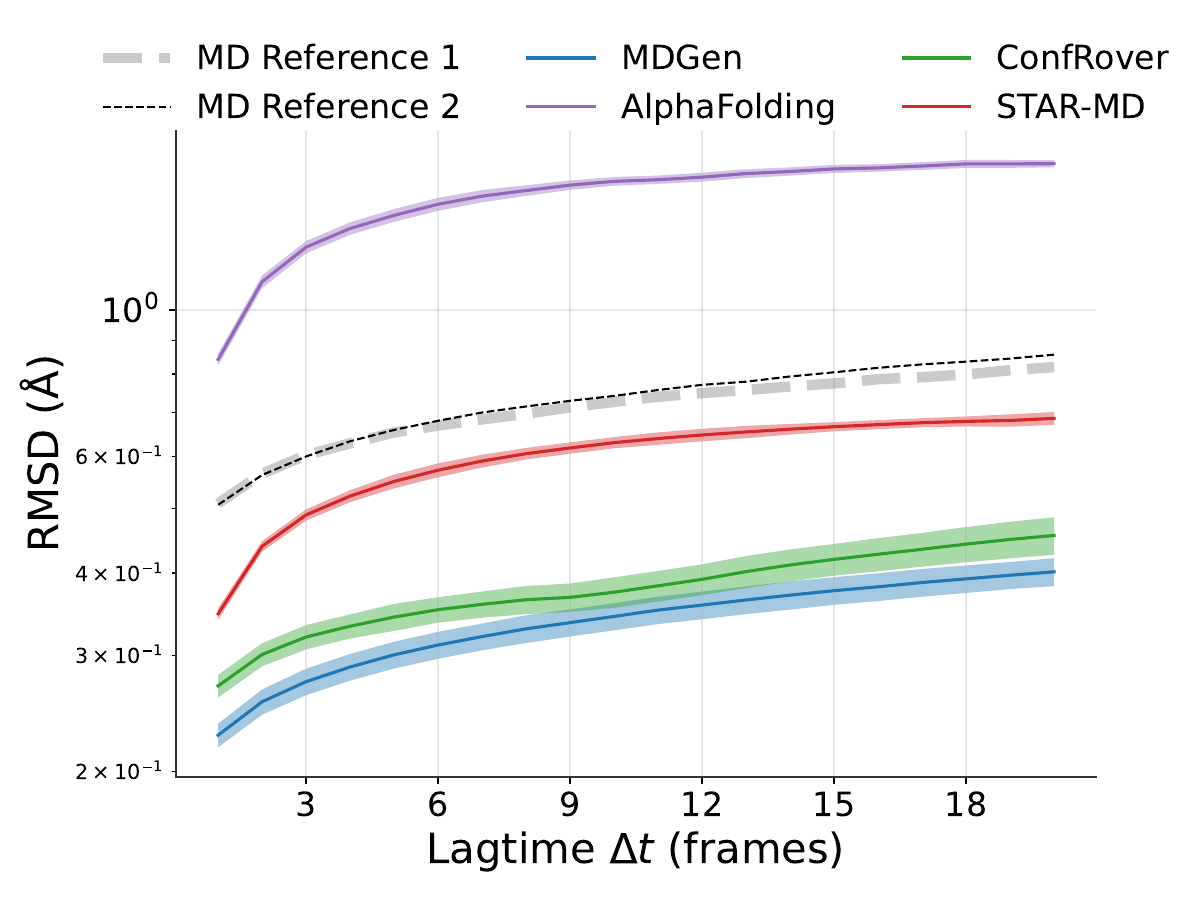}
    \caption{\rsize C$\alpha$ coordinate RMSD vs lagtime (\SI{1}{\micro\second} simulation).}
  \end{subfigure}
  \hfill
  \begin{subfigure}[t]{0.48\textwidth}
    \centering
    \includegraphics[width=\textwidth]{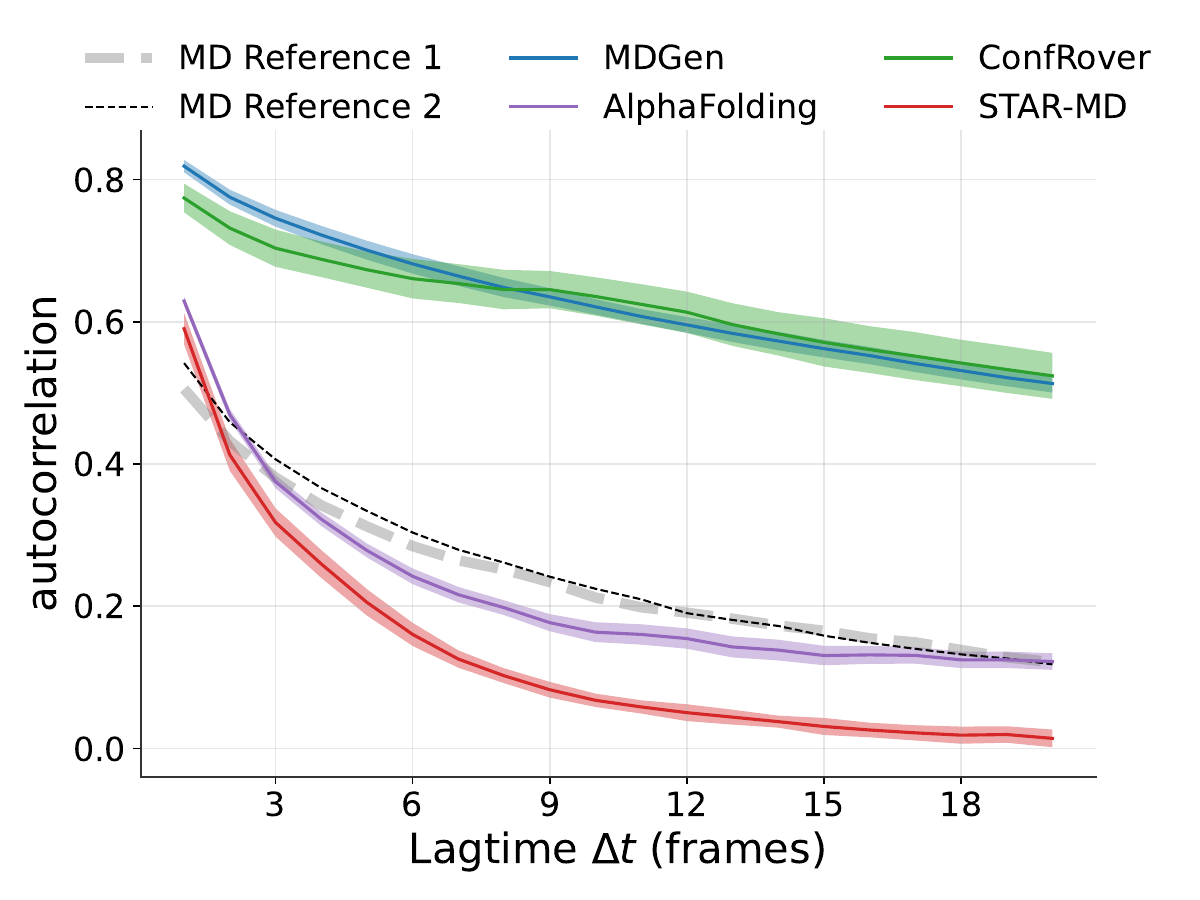}
    \caption{\rsize C$\alpha$ autocorrelation vs lagtime (\SI{1}{\micro\second} simulation).}
  \end{subfigure}
  \caption{Kinetic metrics at different trajectory lengths: \SI{100}{\nano\second}, \SI{240}{\nano\second}, and \SI{1}{\micro\second}. Dashed lines are MD references (in most cases MD reference metrics are very close, demonstrating the metrics are robust across independent MD reference simulations). STAR-MD better approximates the MD references, demonstrating higher kinetic fidelity than the other models.}
  \label{fig:kinetics_all}
\end{figure}

\begin{figure}[ht]
  \centering
  \begin{subfigure}[t]{0.505\textwidth}
    \centering
    \includegraphics[width=\textwidth]{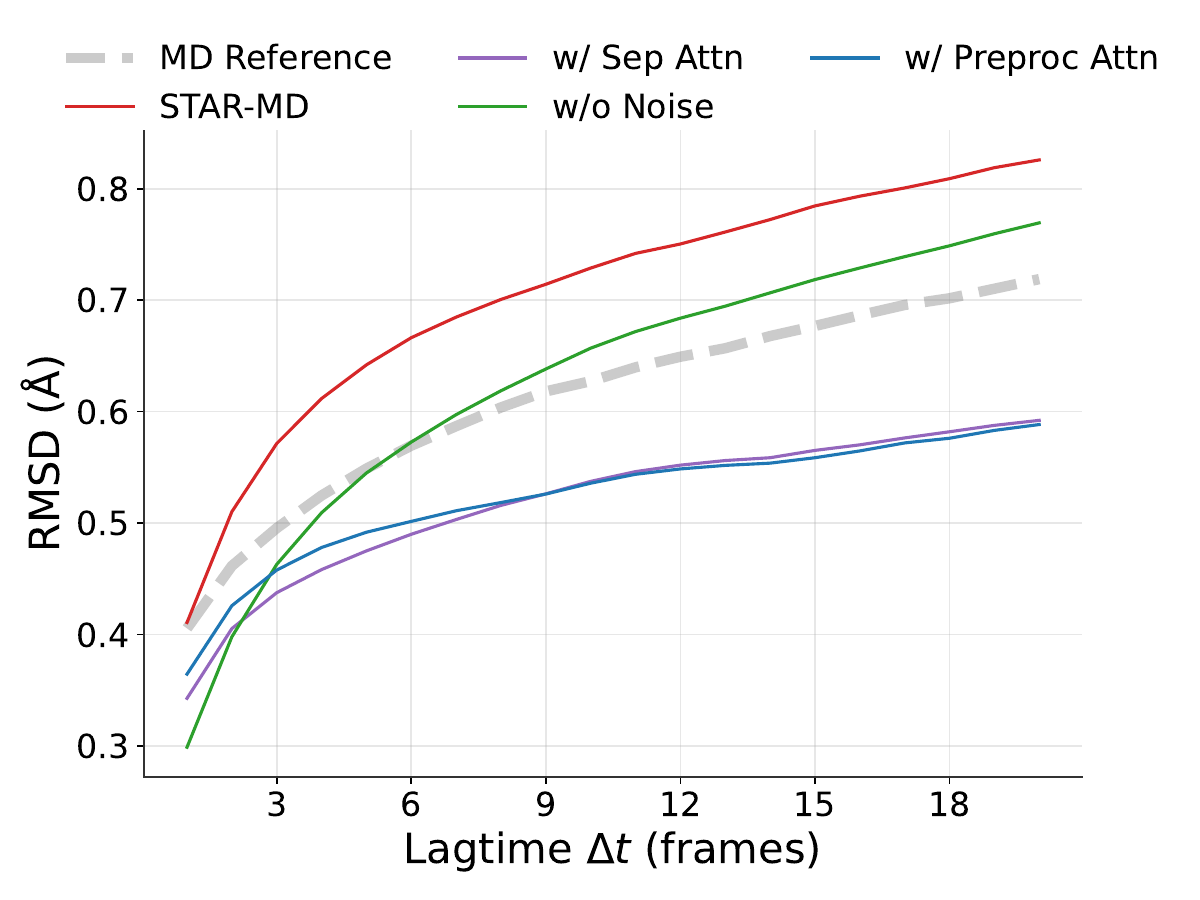}
    \caption{C$\alpha$ coordinate RMSD vs  lagtime (\SI{100}{\nano\second} simulation)}
  \end{subfigure}
  \hfill
  \begin{subfigure}[t]{0.48\textwidth}
    \centering
    \includegraphics[width=\textwidth]{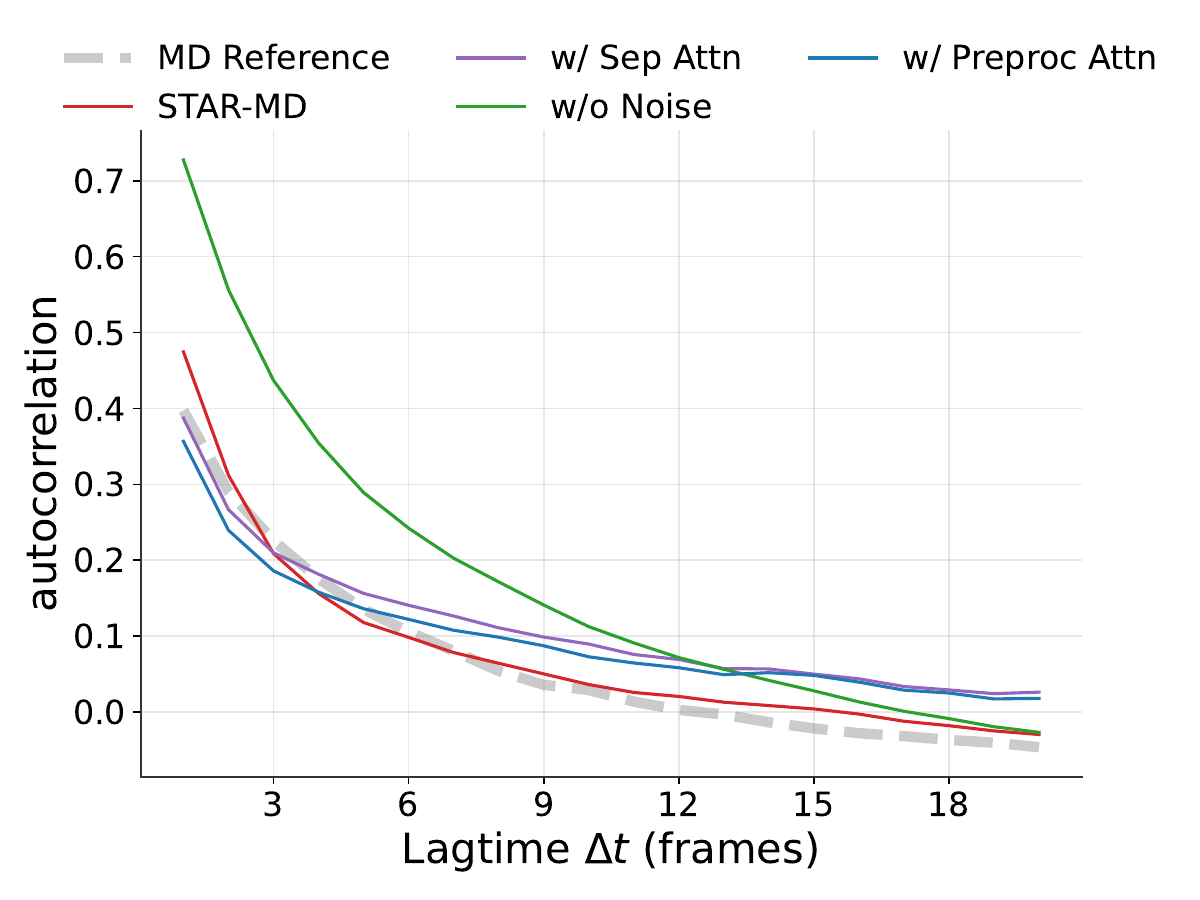}
    \caption{C$\alpha$ autocorrelation vs lagtime (\SI{100}{\nano\second} simulation) }
  \end{subfigure}

  \vspace{0.5cm}

  \begin{subfigure}[t]{0.505\textwidth}
    \centering
    \includegraphics[width=\textwidth]{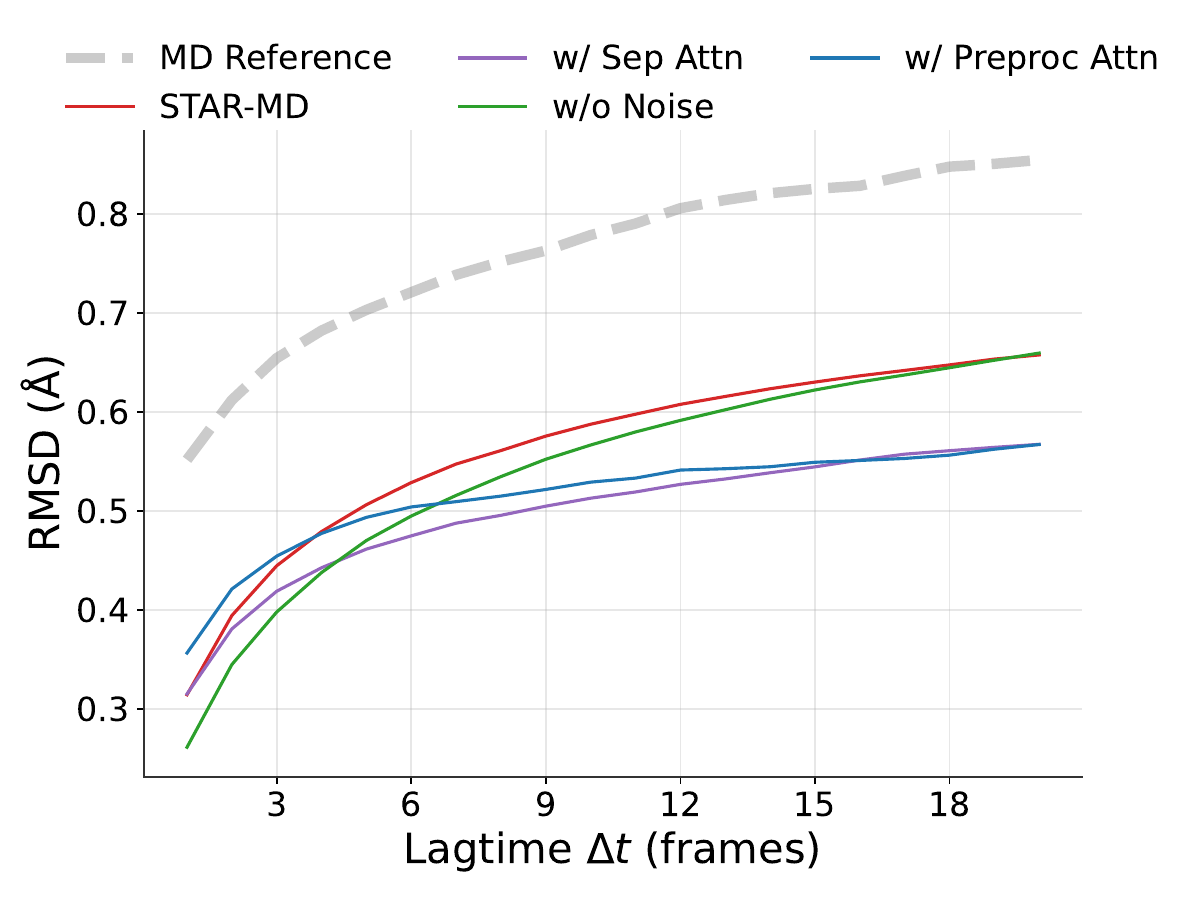}
    \caption{C$\alpha$ coordinate RMSD vs lagtime (\SI{240}{\nano\second} simulation) }
  \end{subfigure}
  \hfill
  \begin{subfigure}[t]{0.48\textwidth}
    \centering
    \includegraphics[width=\textwidth]{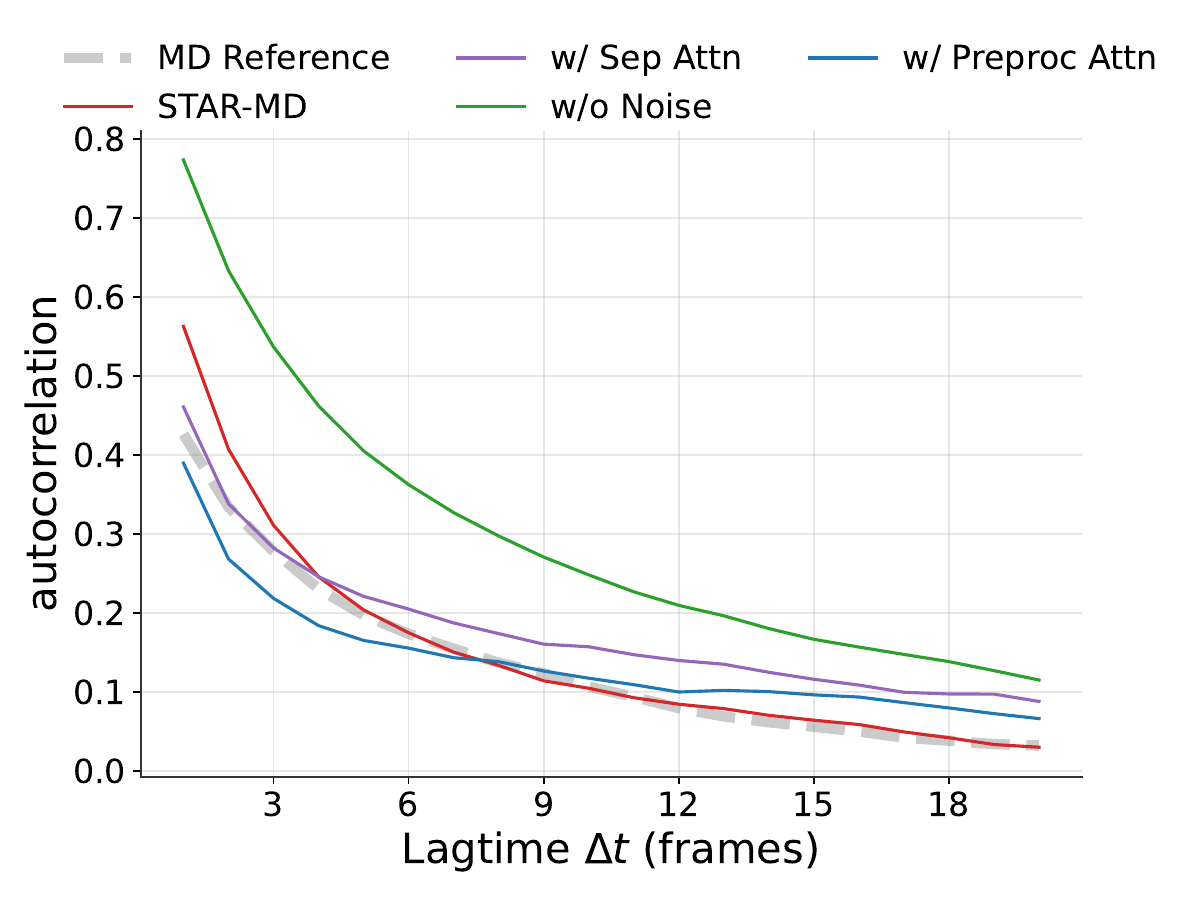}
    \caption{C$\alpha$ autocorrelation vs lagtime (\SI{240}{\nano\second} simulation) }
  \end{subfigure}

  \vspace{0.5cm}

  \begin{subfigure}[t]{0.505\textwidth}
    \centering
    \includegraphics[width=\textwidth]{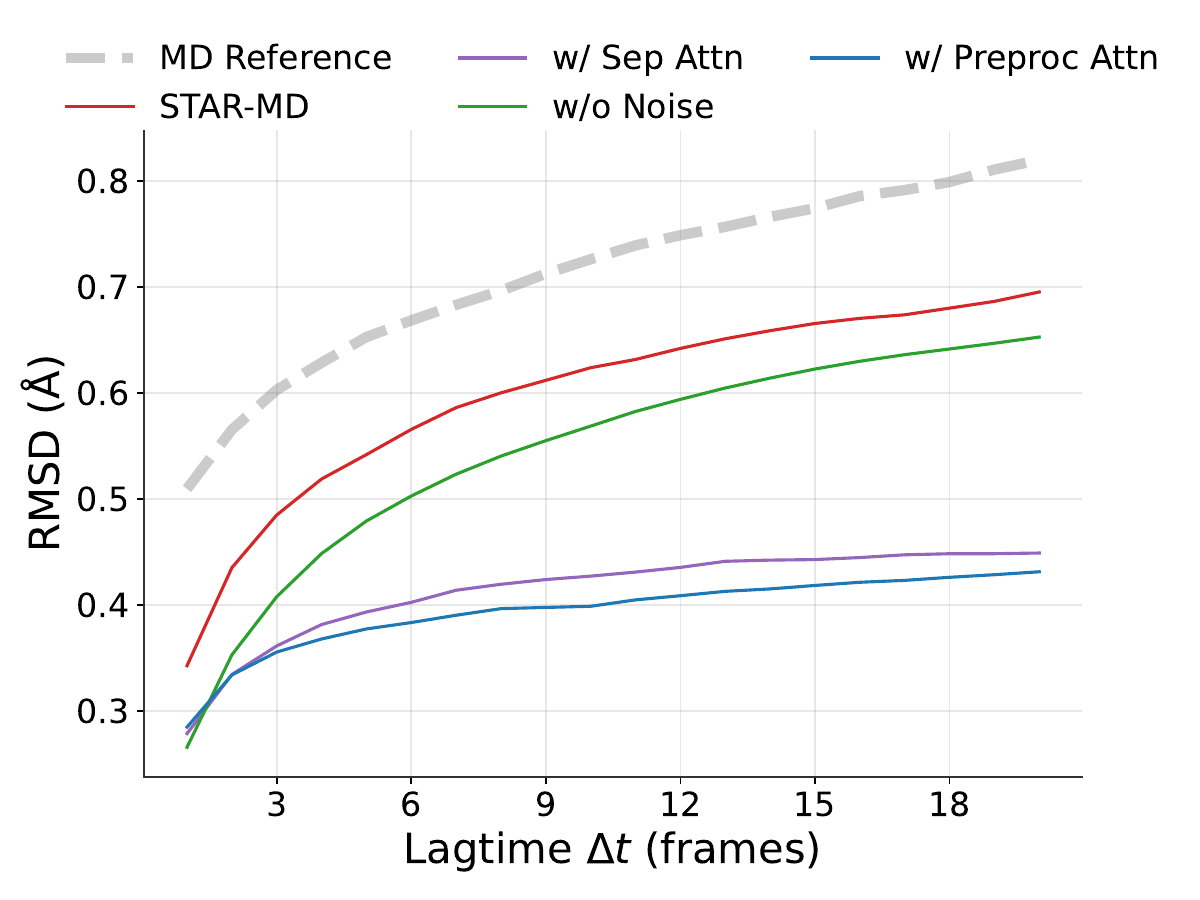}
    \caption{C$\alpha$ coordinate RMSD vs lagtime (\SI{1}{\micro\second} simulation) }
  \end{subfigure}
  \hfill
  \begin{subfigure}[t]{0.48\textwidth}
    \centering
    \includegraphics[width=\textwidth]{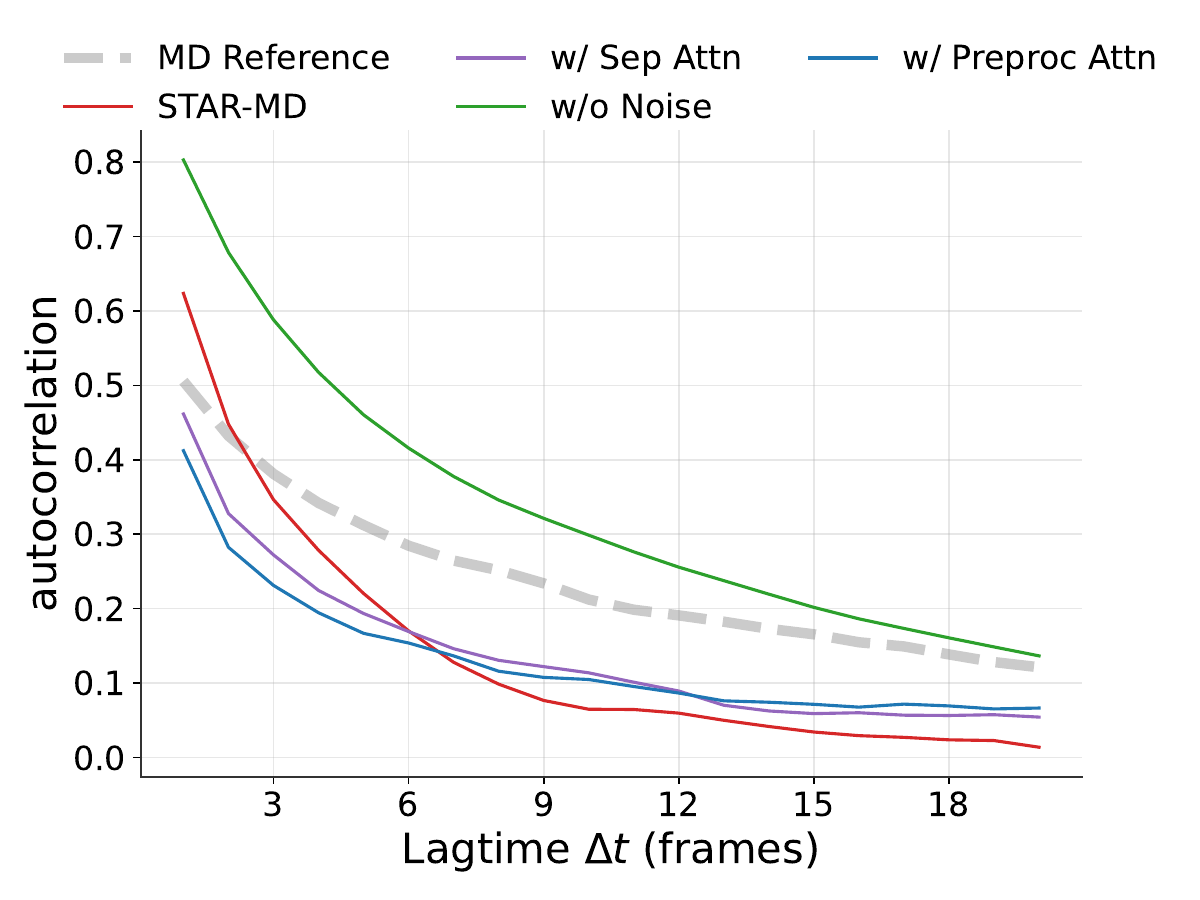}
    \caption{C$\alpha$ autocorrelation vs lagtime (\SI{1}{\micro\second} simulation) }
  \end{subfigure}

  \caption{Kinetic metrics for ablation study. Dashed line is MD reference. STAR-MD has similar/better performances than other variants in all but two cases  (\SI{100}{\nano\second} C$\alpha$ coordinate RMSD and \SI{1}{\micro\second} C$\alpha$ autocorrelation at large lagtime).}
  \label{fig:kinetics_ablation}
\end{figure}

\begin{figure}[h]
  \centering
  \includegraphics[width=\textwidth]{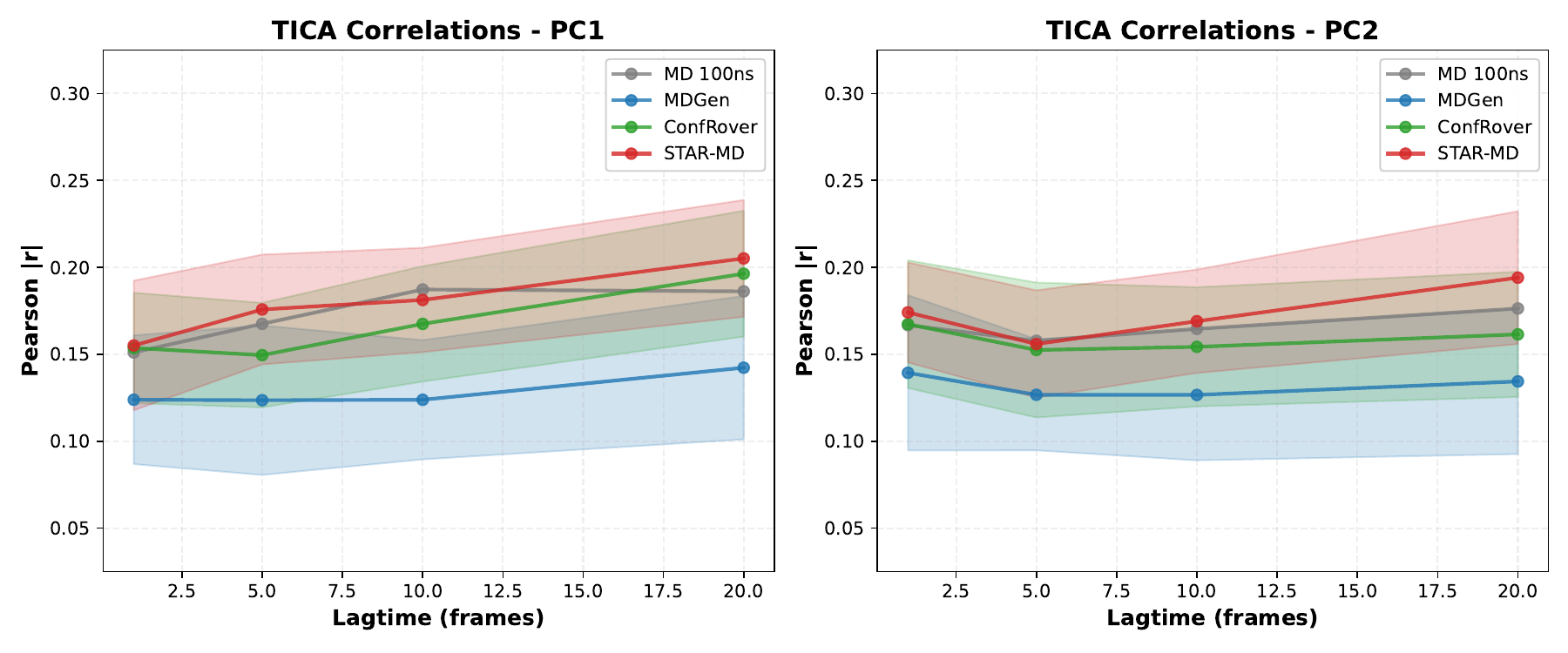}
  \caption{\textbf{Detailed tICA Correlation Analysis.} We report the per-component (PC1, PC2) and per-lagtime tICA correlation values for all runs that produced at least 30 valid pairs for computation. Shaded regions represent the standard deviation across valid runs.}
  \label{fig:tica_detailed}
\end{figure}

\begin{table}[h]
  \centering
  \caption{\textbf{Aggregated tICA Correlation Statistics.} Mean and standard deviation of tICA correlations averaged over PC1, PC2, and all lagtimes. We report results on both valid samples (Valid) and all samples (Unfiltered).}
  \label{tab:tica_stats}
  \begin{tabular}{lcc}
    \toprule
    \textbf{Model} & \textbf{tICA (Valid)} & \textbf{tICA (Unfiltered)} \\
    \midrule
    MD (Oracle) & 0.170 & 0.17 \\
    MDGen & 0.130 $\pm$ 0.039 & 0.12{\tiny $\pm$0.00} \\
    AlphaFolding & N/A & \underline{0.14} \\
    ConfRover & 0.162 $\pm$ 0.035 & \textbf{0.18{\tiny $\pm$0.01}} \\
    STAR-MD & 0.176 $\pm$ 0.033 & \textbf{0.18{\tiny $\pm$0.01}} \\
    \bottomrule
  \end{tabular}
\end{table}

We also report VAMP-2 score as a function of lagtime in Figure~\ref{fig:vamp}. STAR-MD better approximates the VAMP-2 score of MD references for all lagtimes compared to other models and ablation variants, demonstrating better capture of slow dynamical modes.

\begin{figure}[ht]
  \centering
  \begin{subfigure}[t]{0.505\textwidth}
    \centering
    \includegraphics[width=\textwidth]{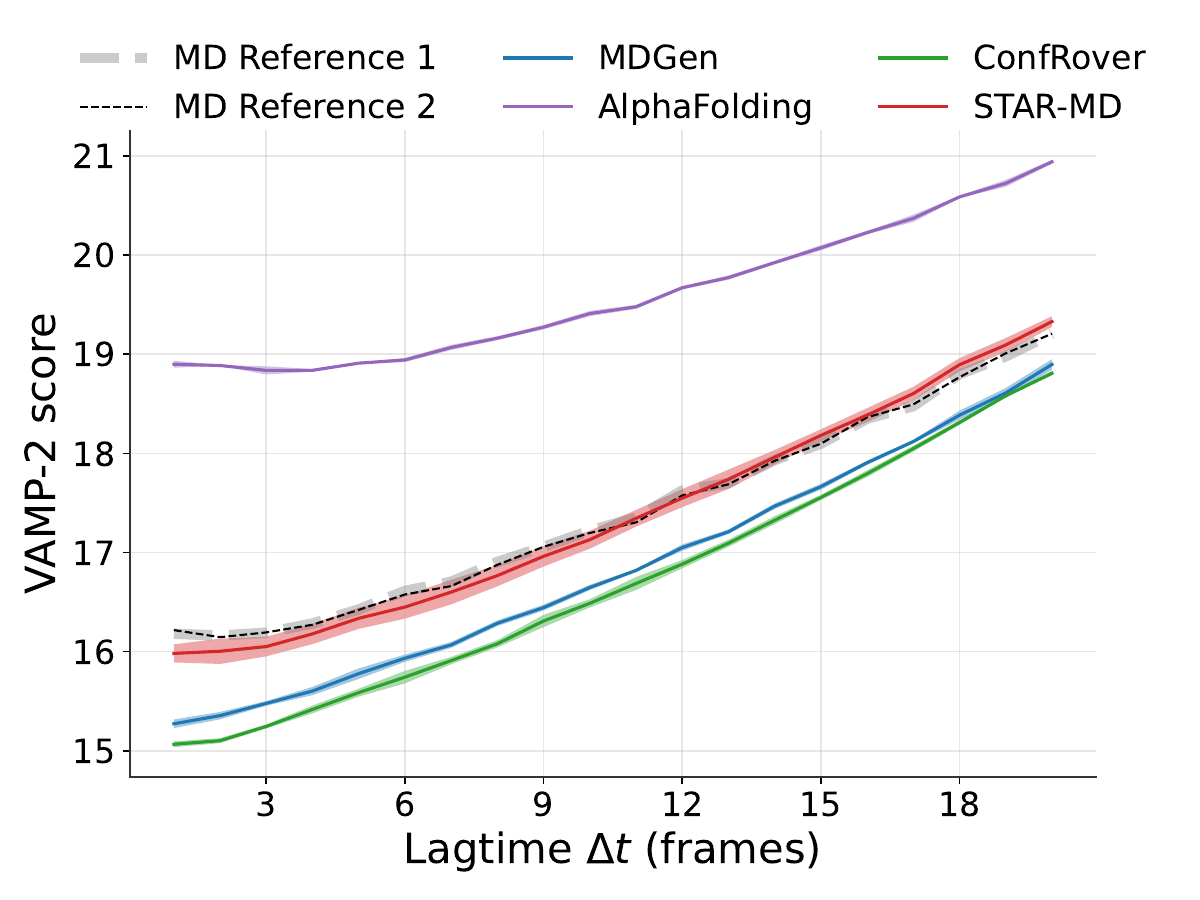}
    \caption{Comparison with other models}
  \end{subfigure}
  \hfill
  \begin{subfigure}[t]{0.48\textwidth}
    \centering
    \includegraphics[width=\textwidth]{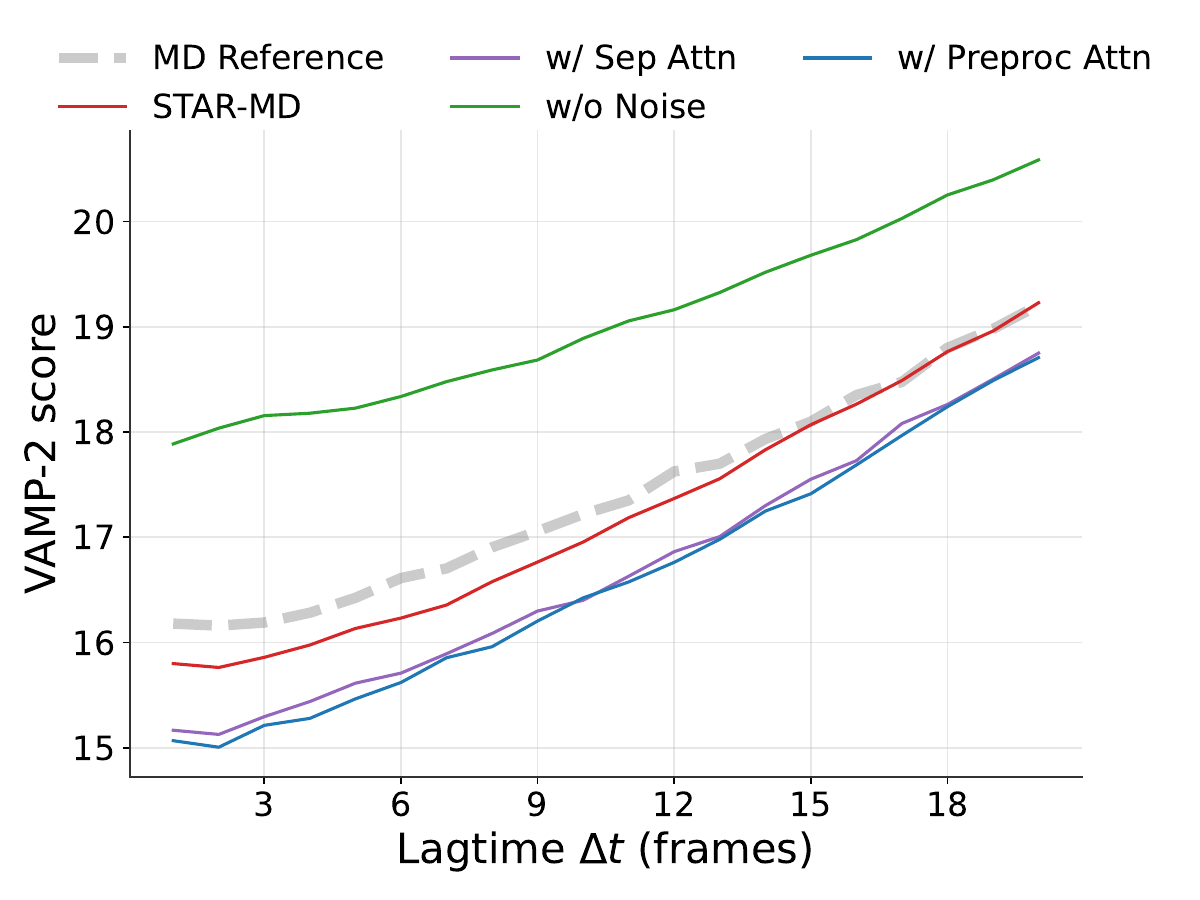}
    \caption{Ablation study}
  \end{subfigure}

  \caption{VAMP-2 score as a function of lagtime (\SI{100}{\nano\second} simulation). Dashed line is MD reference. STAR-MD better approximates VAMP-2 score of MD reference than other models and variants, demonstrating better capture of slow dynamical modes.}
  \label{fig:vamp}
\end{figure}

\clearpage

\subsection{Full Ablation Results}\label{app:full_ablation}
\label{ap:full_ablation}

We include the full results Tables for ablation study in this section (Table \ref{tab:ablation_table_full_240ns},\ref{tab:ablation_table_full_1us}). We observe a consistent trend that contextual noise improves structure quality, and spatio-temporal attention improves conformation coverage.

\Cref{ap:fig_context_noise} compares the CA+AA validity over time for models with and without contextual noise on the \SI{240}{\nano\second} and \SI{1}{\micro\second} benchmarks, showing that the noise helps maintain structural quality over long horizons.

\begin{figure}[h]
  \centering
  \includegraphics[width=0.5\textwidth]{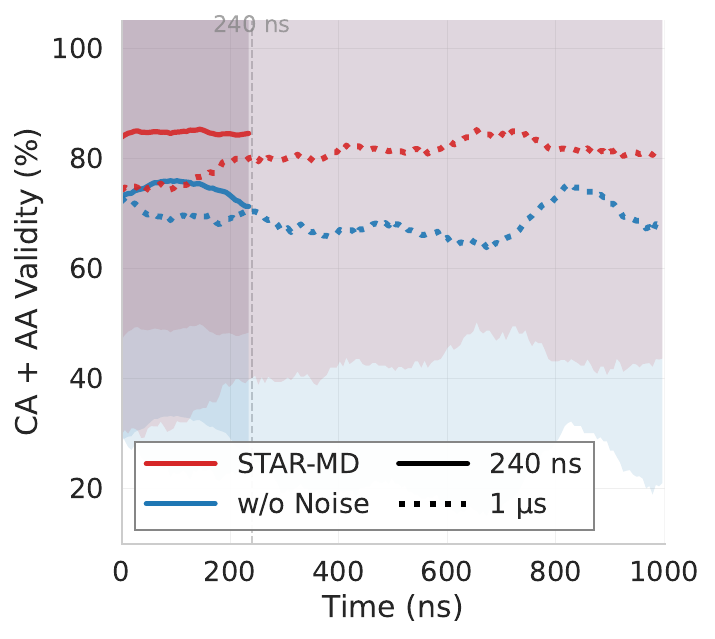}
  \caption{\textbf{Effect of Contextual Noise on Long-Horizon Stability.} Comparison of Structural Validity (C$\alpha$ + All-Atom) over time for STAR-MD with and without contextual noise on \SI{240}{\nano\second} and \SI{1}{\micro\second} benchmarks.}
  \label{ap:fig_context_noise}
\end{figure}

\begin{table}[h]
  \centering
  \caption{\textbf{Complete ablation study results (\SI{240}{\nano\second}).} We compare our full model, \methodname{}, against variants with key components removed. All metrics are reported for the \SI{240}{\nano\second} generation task. Metrics degraded in each ablation setting are highlighted in \alert{red}.}
  \label{tab:ablation_table_full_240ns}
  \footnotesize
  
\begin{tabular}{l cc cc ccc}
  \toprule
  & \multicolumn{2}{c }{\textbf{Cov Valid}} & \multicolumn{2}{c }{\textbf{Dynamic Fidelity}} & \multicolumn{3}{c }{\textbf{Validity}} \\
  \cmidrule(lr){2-3} \cmidrule(lr){4-5} \cmidrule(lr){6-8}
  \textbf{Model} & \textbf{JSD}$\downarrow$ & \textbf{Rec}$\uparrow$ & \textbf{RMSD}$\downarrow$ & \textbf{AutoCor}$\downarrow$ & \textbf{CA\%}$\uparrow$ & \textbf{AA\%}$\uparrow$ & \textbf{CA+AA\%}$\uparrow$ \\
  \midrule
  {STAR-MD} & {0.45} & {0.59} & 0.20 & 0.02 &86.44 & {97.61} & {84.62} \\
  \midrule
  w/o Noise & {0.42} & {0.61} & 0.22 & \alert{0.16} &\alert{77.11} & 96.19 & \alert{74.25} \\
  w/ Sep Attn & \alert{0.48} & \alert{0.51} & \alert{0.26} & 0.04 & {87.09} & {97.58} & {84.83} \\
  w/ Preproc Attn & \alert{0.47} & \alert{0.52} & \alert{0.25} & 0.03 & {87.25} & 96.72 & 84.28 \\
  \bottomrule
\end{tabular}

\end{table}

\begin{table}[th]
  \centering
  \caption{\textbf{Complete ablation study results (\SI{1}{\micro\second}).} We compare our full model, \methodname{}, against variants with key components removed. All metrics are reported for the \SI{1}{\micro\second} generation task. Metrics degraded in each ablation setting are highlighted in \alert{red}.}
  \label{tab:ablation_table_full_1us}
  \footnotesize
  
\begin{tabular}{l cc cc ccc}
  \toprule
  & \multicolumn{2}{c }{\textbf{Cov Valid}} & \multicolumn{2}{c }{\textbf{Dynamic Fidelity}} & \multicolumn{3}{c }{\textbf{Validity}} \\
  \cmidrule(lr){2-3} \cmidrule(lr){4-5} \cmidrule(lr){6-8}
  \textbf{Model} & \textbf{JSD}$\downarrow$ & \textbf{Rec}$\uparrow$ & \textbf{RMSD}$\downarrow$ & \textbf{AutoCor}$\downarrow$ & \textbf{CA\%}$\uparrow$ & \textbf{AA\%}$\uparrow$ & \textbf{CA+AA\%}$\uparrow$ \\
  \midrule
  {STAR-MD} & 0.45 & {0.61} & 0.11 & 0.11 & 91.00 & 91.44 & 83.59 \\
  \midrule
  w/o Noise & {0.48} & {0.63} & \alert{0.17} & 0.10 & \alert{75.84} & 91.22 & \alert{68.66} \\
  w/ Sep Attn & \alert{0.51} & \alert{0.49} & \alert{0.30} & 0.10 &  {93.12} & {92.03} & {85.31} \\
  w/ Preproc Attn & \alert{0.52} & \alert{0.49} & \alert{0.32} & 0.11 & {94.12} & {92.19} & {86.44} \\
  \bottomrule
\end{tabular}

\end{table}

\clearpage

\clearpage

\subsection{\SI{100}{\nano\second} Filtered Results}\label{app:100ns_filtered}

\subsubsection{Performance by Sequence Length}
We break down the performance of \methodname{} and baselines across different protein sequence length buckets to analyze scalability and robustness. \Cref{tab:bucket0,tab:bucket1,tab:bucket2,tab:bucket3,tab:bucket4} summarize these findings.

\begin{table}[h]
  \centering
  \caption{\textbf{Sequence Length Bucket: 0-100 (17 chains)}}\label{tab:bucket0}
  \footnotesize
  \begin{tabular}{l cc c ccc}
    \toprule
    & \multicolumn{2}{c }{\textbf{Cov Valid}} & \textbf{Dyn.} & \multicolumn{3}{c }{\textbf{Validity}} \\
    \cmidrule(lr){2-3} \cmidrule(lr){4-4} \cmidrule(lr){5-7}
    \textbf{Model} & \textbf{JSD}$\downarrow$ & \textbf{Rec}$\uparrow$ & \textbf{tICA}$\uparrow$ & \textbf{CA\%}$\uparrow$ & \textbf{AA\%}$\uparrow$ & \textbf{CA+AA\%}$\uparrow$ \\
    \midrule
    MD (Oracle) & 0.28 & 0.71 & 0.10 & 100.00 & 93.24 & 93.24 \\
    MDGen & \underline{0.48{\tiny $\pm$0.01}} & 0.38{\tiny $\pm$0.01} & \textbf{0.11{\tiny $\pm$0.01}} & \underline{85.66{\tiny $\pm$3.15}} & \underline{87.34{\tiny $\pm$1.76}} & \underline{75.91{\tiny $\pm$2.08}} \\
    AlphaFolding & 0.56 & 0.28 & N/A & 30.07 & 3.60 & 2.06 \\
    ConfRover & \underline{0.48{\tiny $\pm$0.02}} & \underline{0.39{\tiny $\pm$0.02}} & \textbf{0.11{\tiny $\pm$0.01}} & 79.59{\tiny $\pm$2.00} & 81.24{\tiny $\pm$0.18} & 65.26{\tiny $\pm$1.30} \\
    \methodname{} & \textbf{0.37{\tiny $\pm$0.01}} & \textbf{0.56{\tiny $\pm$0.02}} & \textbf{0.11{\tiny $\pm$0.01}} & \textbf{91.37{\tiny $\pm$1.87}} & \textbf{93.75{\tiny $\pm$0.47}} & \textbf{86.10{\tiny $\pm$1.88}} \\
    \bottomrule
  \end{tabular}
\end{table}

\begin{table}[h]
  \centering
  \caption{\textbf{Sequence Length Bucket: 100-150 (16 chains)}}\label{tab:bucket1}
  \footnotesize
  \begin{tabular}{l cc c ccc}
    \toprule
    & \multicolumn{2}{c }{\textbf{Cov Valid}} & \textbf{Dyn.} & \multicolumn{3}{c }{\textbf{Validity}} \\
    \cmidrule(lr){2-3} \cmidrule(lr){4-4} \cmidrule(lr){5-7}
    \textbf{Model} & \textbf{JSD}$\downarrow$ & \textbf{Rec}$\uparrow$ & \textbf{tICA}$\uparrow$ & \textbf{CA\%}$\uparrow$ & \textbf{AA\%}$\uparrow$ & \textbf{CA+AA\%}$\uparrow$ \\
    \midrule
    MD (Oracle) & 0.30 & 0.69 & 0.12 & 98.98 & 98.44 & 97.42 \\
    MDGen & \underline{0.51{\tiny $\pm$0.02}} & \underline{0.35{\tiny $\pm$0.02}} & \underline{0.10{\tiny $\pm$0.01}} & \underline{74.50{\tiny $\pm$7.40}} & \underline{94.17{\tiny $\pm$1.39}} & \underline{70.62{\tiny $\pm$7.62}} \\
    AlphaFolding & 0.58 & 0.10 & N/A & 14.38 & 0.23 & 0.16 \\
    ConfRover & 0.53{\tiny $\pm$0.01} & 0.35{\tiny $\pm$0.01} & \textbf{0.12{\tiny $\pm$0.01}} & 55.89{\tiny $\pm$0.95} & 86.38{\tiny $\pm$0.78} & 49.08{\tiny $\pm$1.18} \\
    \methodname{} & \textbf{0.49{\tiny $\pm$0.01}} & \textbf{0.53{\tiny $\pm$0.01}} & \textbf{0.12{\tiny $\pm$0.01}} & \textbf{79.05{\tiny $\pm$3.41}} & \textbf{98.25{\tiny $\pm$0.35}} & \textbf{77.64{\tiny $\pm$3.49}} \\
    \bottomrule
  \end{tabular}
\end{table}

\begin{table}[h]
  \centering
  \caption{\textbf{Sequence Length Bucket: 150-225 (11 chains)}}\label{tab:bucket2}
  \footnotesize
  \begin{tabular}{l cc c ccc}
    \toprule
    & \multicolumn{2}{c }{\textbf{Cov Valid}} & \textbf{Dyn.} & \multicolumn{3}{c }{\textbf{Validity}} \\
    \cmidrule(lr){2-3} \cmidrule(lr){4-4} \cmidrule(lr){5-7}
    \textbf{Model} & \textbf{JSD}$\downarrow$ & \textbf{Rec}$\uparrow$ & \textbf{tICA}$\uparrow$ & \textbf{CA\%}$\uparrow$ & \textbf{AA\%}$\uparrow$ & \textbf{CA+AA\%}$\uparrow$ \\
    \midrule
    MD (Oracle) & 0.27 & 0.66 & 0.16 & 90.57 & 99.66 & 90.23 \\
    MDGen & 0.52{\tiny $\pm$0.02} & 0.32{\tiny $\pm$0.03} & 0.12{\tiny $\pm$0.01} & 66.86{\tiny $\pm$5.84} & 97.18{\tiny $\pm$0.91} & 65.23{\tiny $\pm$5.65} \\
    AlphaFolding & 0.72 & 0.10 & N/A & 4.55 & 0.34 & 0.11 \\
    ConfRover & \underline{0.47{\tiny $\pm$0.02}} & \underline{0.39{\tiny $\pm$0.01}} & \underline{0.15{\tiny $\pm$0.01}} & \underline{73.39{\tiny $\pm$2.56}} & \underline{98.77{\tiny $\pm$0.34}} & \underline{72.43{\tiny $\pm$2.43}} \\
    \methodname{} & \textbf{0.37{\tiny $\pm$0.02}} & \textbf{0.53{\tiny $\pm$0.03}} & \textbf{0.16{\tiny $\pm$0.01}} & \textbf{89.55{\tiny $\pm$1.70}} & \textbf{99.82{\tiny $\pm$0.22}} & \textbf{89.41{\tiny $\pm$1.81}} \\
    \bottomrule
  \end{tabular}
\end{table}

\begin{table}[h]
  \centering
  \caption{\textbf{Sequence Length Bucket: 225-400 (15 chains)}}\label{tab:bucket3}
  \footnotesize
  \begin{tabular}{l cc c ccc}
    \toprule
    & \multicolumn{2}{c }{\textbf{Cov Valid}} & \textbf{Dyn.} & \multicolumn{3}{c }{\textbf{Validity}} \\
    \cmidrule(lr){2-3} \cmidrule(lr){4-4} \cmidrule(lr){5-7}
    \textbf{Model} & \textbf{JSD}$\downarrow$ & \textbf{Rec}$\uparrow$ & \textbf{tICA}$\uparrow$ & \textbf{CA\%}$\uparrow$ & \textbf{AA\%}$\uparrow$ & \textbf{CA+AA\%}$\uparrow$ \\
    \midrule
    MD (Oracle) & 0.33 & 0.65 & 0.22 & 100.00 & 99.90 & 99.90 \\
    MDGen & 0.62{\tiny $\pm$0.01} & 0.21{\tiny $\pm$0.01} & \underline{0.14{\tiny $\pm$0.00}} & \underline{61.69{\tiny $\pm$4.22}} & 97.71{\tiny $\pm$1.06} & \underline{60.40{\tiny $\pm$4.75}} \\
    AlphaFolding & N/A & N/A & N/A & 1.15 & 0.00 & 0.00 \\
    ConfRover & \underline{0.53{\tiny $\pm$0.00}} & \underline{0.36{\tiny $\pm$0.01}} & \textbf{0.22{\tiny $\pm$0.02}} & 54.00{\tiny $\pm$2.82} & \underline{99.44{\tiny $\pm$0.08}} & 53.77{\tiny $\pm$2.95} \\
    \methodname{} & \textbf{0.44{\tiny $\pm$0.02}} & \textbf{0.53{\tiny $\pm$0.02}} & \textbf{0.22{\tiny $\pm$0.02}} & \textbf{82.38{\tiny $\pm$1.62}} & \textbf{99.98{\tiny $\pm$0.04}} & \textbf{82.37{\tiny $\pm$1.63}} \\
    \bottomrule
  \end{tabular}
\end{table}

\begin{table}[h]
  \centering
  \caption{\textbf{Sequence Length Bucket: 400+ (23 chains)}}\label{tab:bucket4}
  \footnotesize
  \begin{tabular}{l cc c ccc}
    \toprule
    & \multicolumn{2}{c }{\textbf{Cov Valid}} & \textbf{Dyn.} & \multicolumn{3}{c }{\textbf{Validity}} \\
    \cmidrule(lr){2-3} \cmidrule(lr){4-4} \cmidrule(lr){5-7}
    \textbf{Model} & \textbf{JSD}$\downarrow$ & \textbf{Rec}$\uparrow$ & \textbf{tICA}$\uparrow$ & \textbf{CA\%}$\uparrow$ & \textbf{AA\%}$\uparrow$ & \textbf{CA+AA\%}$\uparrow$ \\
    \midrule
    MD (Oracle) & 0.33 & 0.66 & 0.24 & 99.86 & 100.00 & 99.86 \\
    MDGen & 0.65{\tiny $\pm$0.01} & 0.16{\tiny $\pm$0.01} & 0.14{\tiny $\pm$0.01} & \underline{66.75{\tiny $\pm$3.69}} & \underline{99.81{\tiny $\pm$0.17}} & \underline{66.67{\tiny $\pm$3.67}} \\
    AlphaFolding & N/A & N/A & N/A & 0.97 & 0.00 & 0.00 \\
    ConfRover & \underline{0.57{\tiny $\pm$0.01}} & \underline{0.32{\tiny $\pm$0.01}} & \textbf{0.26{\tiny $\pm$0.01}} & 28.54{\tiny $\pm$1.14} & 99.61{\tiny $\pm$0.31} & 28.54{\tiny $\pm$1.14} \\
    \methodname{} & \textbf{0.45{\tiny $\pm$0.01}} & \textbf{0.54{\tiny $\pm$0.03}} & \underline{0.24{\tiny $\pm$0.01}} & \textbf{90.92{\tiny $\pm$1.77}} & \textbf{100.00{\tiny $\pm$0.00}} & \textbf{90.92{\tiny $\pm$1.77}} \\
    \bottomrule
  \end{tabular}
\end{table}

\newpage
\subsubsection{Generalization to Dissimilar Proteins}
To strictly evaluate generalization to unseen protein folds, we report results excluding two test proteins (\texttt{7buy\_A} and \texttt{7e2s\_A}) that have high sequence similarity ($>80\%$) to the training set. \Cref{tab:nonsimilar} summarizes these findings.

\begin{table}[h]
  \centering
  \caption{\textbf{Non-similar Proteins, Excluding \texttt{7buy\_A}, \texttt{7e2s\_A} (80 chains)}}\label{tab:nonsimilar}
  \footnotesize
  \begin{tabular}{l cc c ccc}
    \toprule
    & \multicolumn{2}{c }{\textbf{Cov Valid}} & \textbf{Dyn.} & \multicolumn{3}{c }{\textbf{Validity}} \\
    \cmidrule(lr){2-3} \cmidrule(lr){4-4} \cmidrule(lr){5-7}
    \textbf{Model} & \textbf{JSD}$\downarrow$ & \textbf{Rec}$\uparrow$ & \textbf{tICA}$\uparrow$ & \textbf{CA\%}$\uparrow$ & \textbf{AA\%}$\uparrow$ & \textbf{CA+AA\%}$\uparrow$ \\
    \midrule
    MD (Oracle) & 0.31 & 0.67 & 0.17 & 98.37 & 98.07 & 96.43 \\
    MDGen & 0.56{\tiny $\pm$0.01} & 0.28{\tiny $\pm$0.01} & 0.12{\tiny $\pm$0.00} & \underline{71.83{\tiny $\pm$1.90}} & \underline{95.03{\tiny $\pm$0.59}} & \underline{68.31{\tiny $\pm$2.20}} \\
    AlphaFolding\footnotemark[1] & 0.58 & 0.22 & N/A & 10.98 & 0.92 & 0.52 \\
    ConfRover & \underline{0.52{\tiny $\pm$0.01}} & \underline{0.36{\tiny $\pm$0.01}} & \underline{0.15{\tiny $\pm$0.01}} & 56.94{\tiny $\pm$0.52} & 92.47{\tiny $\pm$0.25} & 52.06{\tiny $\pm$0.36} \\
    \methodname{} & \textbf{0.43{\tiny $\pm$0.01}} & \textbf{0.54{\tiny $\pm$0.01}} & \textbf{0.17{\tiny $\pm$0.00}} & \textbf{86.81{\tiny $\pm$0.64}} & \textbf{98.18{\tiny $\pm$0.05}} & \textbf{85.29{\tiny $\pm$0.62}} \\
    \bottomrule
  \end{tabular}
\end{table}

\clearpage

\subsection{Additional Visualizations of Conformational Coverage}\label{app:coverage_viz}

\Cref{fig:additional_cov_6kty_A,fig:additional_cov_6jwh_A,fig:additional_cov_6jpt_A,fig:additional_cov_6okd_C,fig:additional_cov_6odd_B,fig:additional_cov_7dmn_A,fig:additional_cov_7lp1_A,fig:additional_cov_6l34_A,fig:additional_cov_5znj_A,fig:additional_cov_6tly_A} show conformational coverage of 10 proteins from the test set. These visualizations follow the same format as Figure~\ref{fig:conformational_coverage2}(b) in the main text (i.e., we filter the generated conformations to show only those that are structurally valid (passing all validity checks)). This provides a more rigorous view of the useful conformational space explored by each model.

\begin{figure}[h!]
  \centering
  \includegraphics[width=0.7\textwidth]{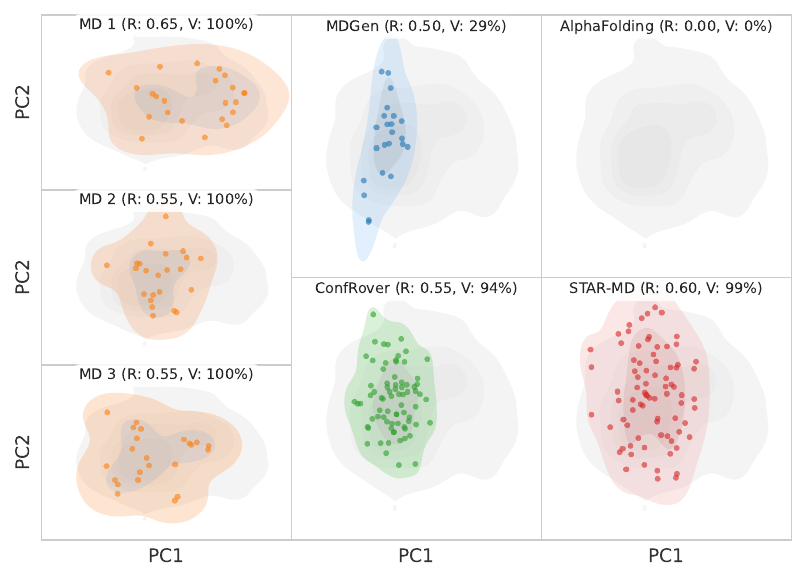}
  \caption{Conformational coverage for \texttt{6kty\_A} (Valid conformations only).}\label{fig:additional_cov_6kty_A}
\end{figure}

\begin{figure}[h!]
  \centering
  \includegraphics[width=0.7\textwidth]{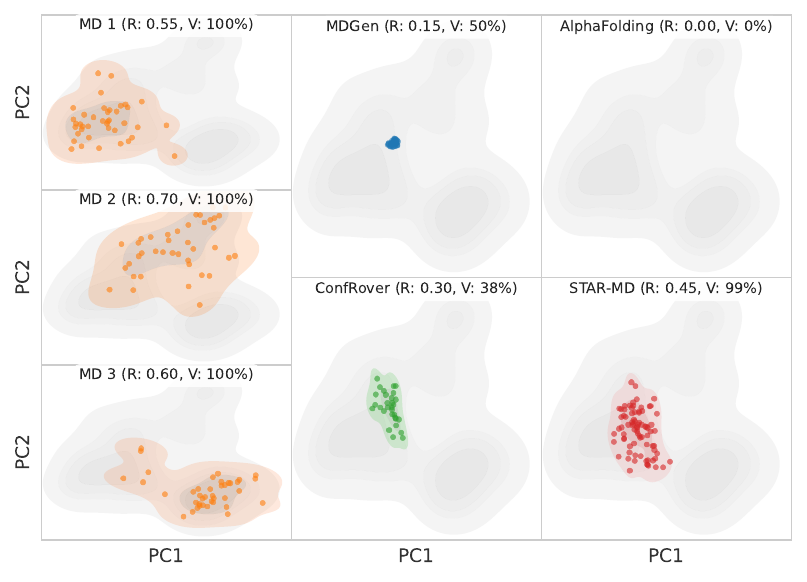}
  \caption{Conformational coverage for \texttt{6jwh\_A} (Valid conformations only).}\label{fig:additional_cov_6jwh_A}
\end{figure}

\begin{figure}[h!]
  \centering
  \includegraphics[width=0.7\textwidth]{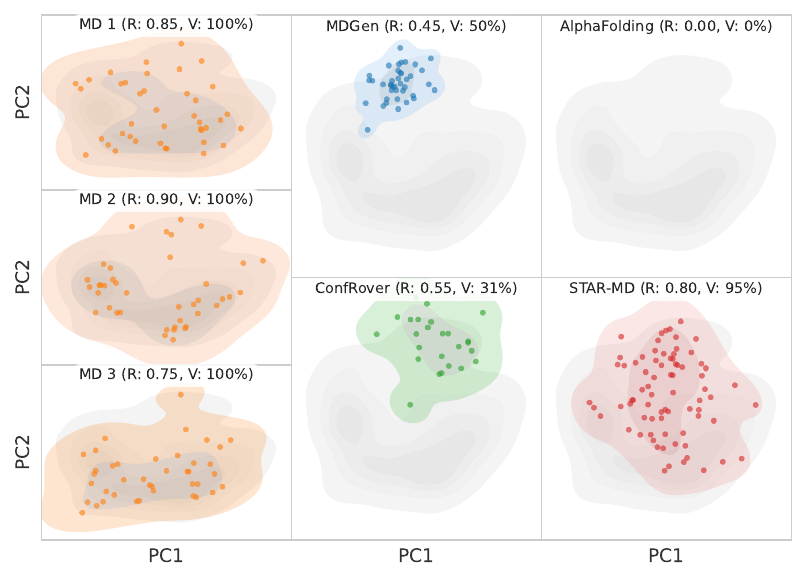}
  \caption{Conformational coverage for \texttt{6jpt\_A} (Valid conformations only).}\label{fig:additional_cov_6jpt_A}
\end{figure}

\begin{figure}[h!]
  \centering
  \includegraphics[width=0.7\textwidth]{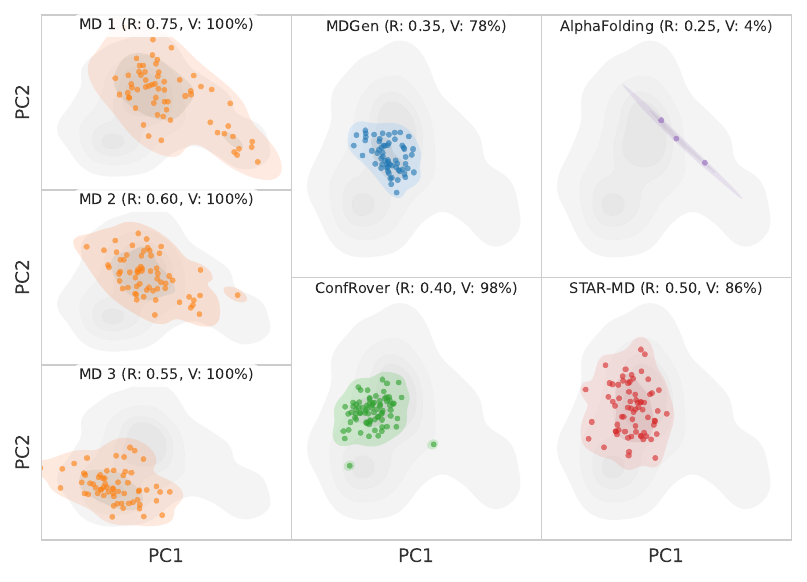}
  \caption{Conformational coverage for \texttt{6okd\_C} (Valid conformations only).}\label{fig:additional_cov_6okd_C}
\end{figure}

\begin{figure}[h!]
  \centering
  \includegraphics[width=0.7\textwidth]{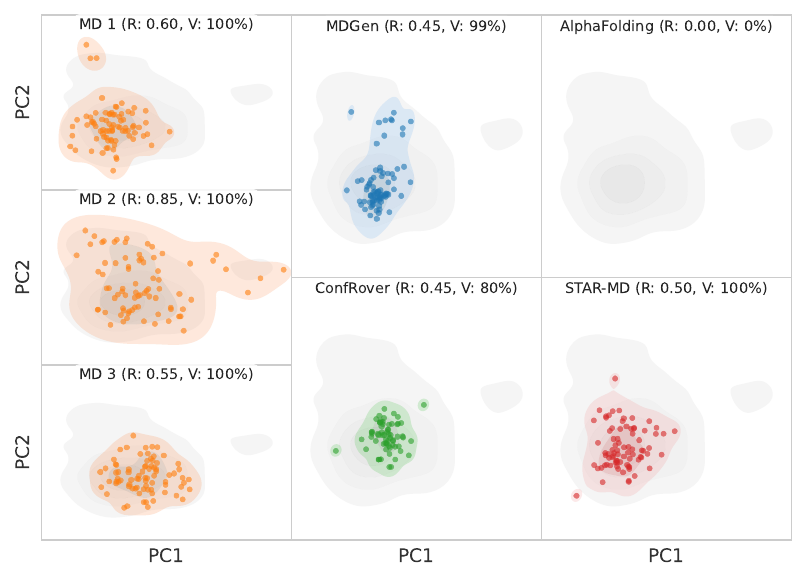}
  \caption{Conformational coverage for \texttt{6odd\_B} (Valid conformations only).}\label{fig:additional_cov_6odd_B}
\end{figure}

\begin{figure}[h!]
  \centering
  \includegraphics[width=0.7\textwidth]{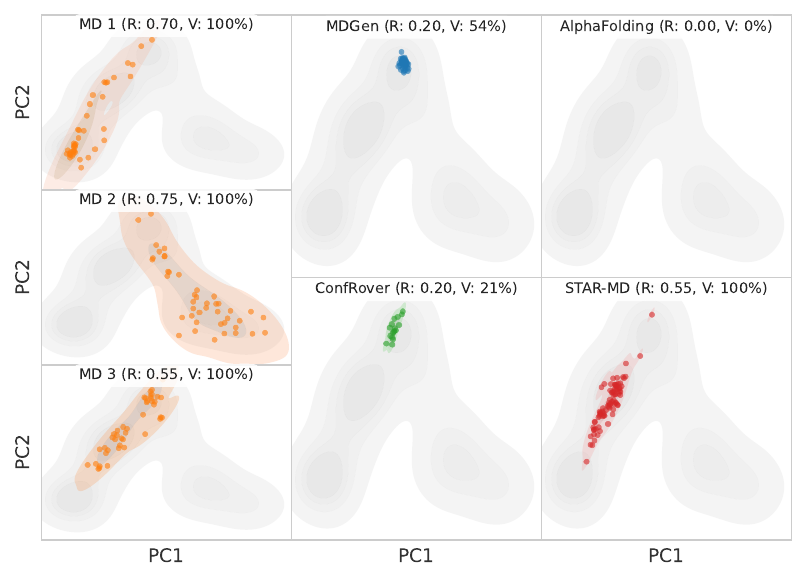}
  \caption{Conformational coverage for \texttt{7dmn\_A} (Valid conformations only).}\label{fig:additional_cov_7dmn_A}
\end{figure}

\begin{figure}[h!]
  \centering
  \includegraphics[width=0.7\textwidth]{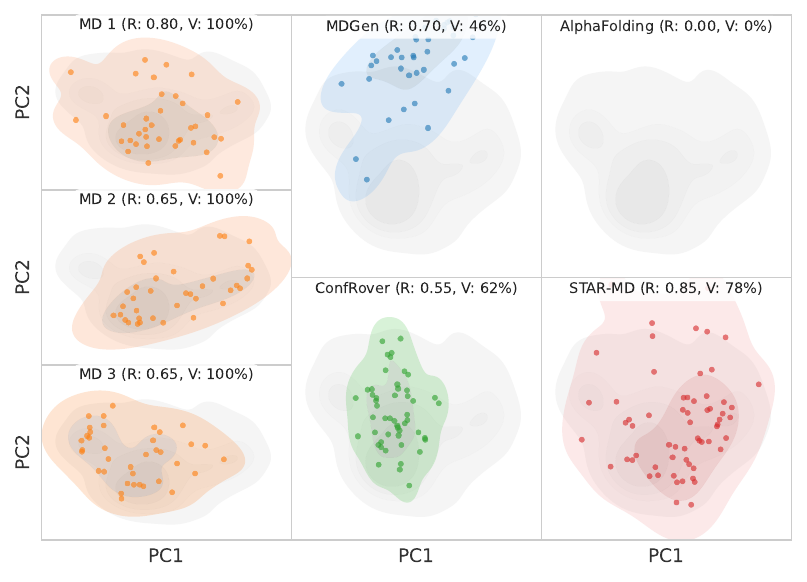}
  \caption{Conformational coverage for \texttt{7lp1\_A} (Valid conformations only).}\label{fig:additional_cov_7lp1_A}
\end{figure}

\begin{figure}[h!]
  \centering
  \includegraphics[width=0.7\textwidth]{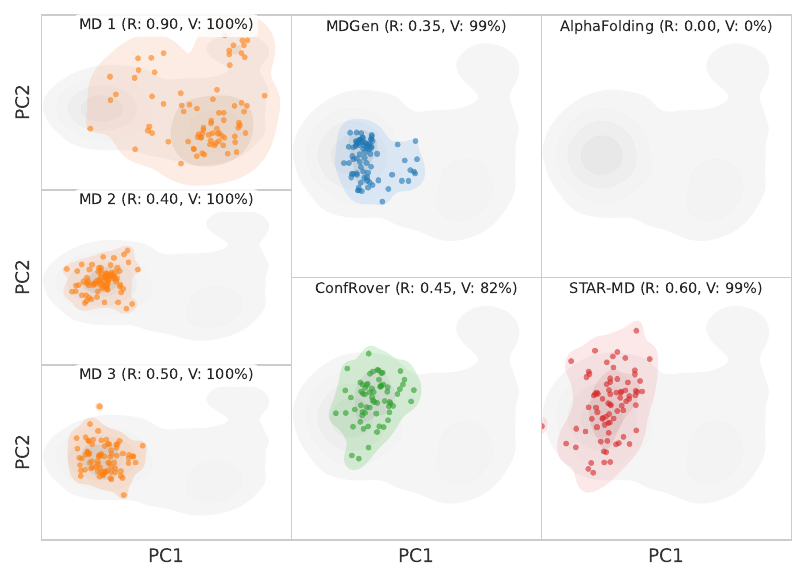}
  \caption{Conformational coverage for \texttt{6l34\_A} (Valid conformations only).}\label{fig:additional_cov_6l34_A}
\end{figure}

\begin{figure}[h!]
  \centering
  \includegraphics[width=0.7\textwidth]{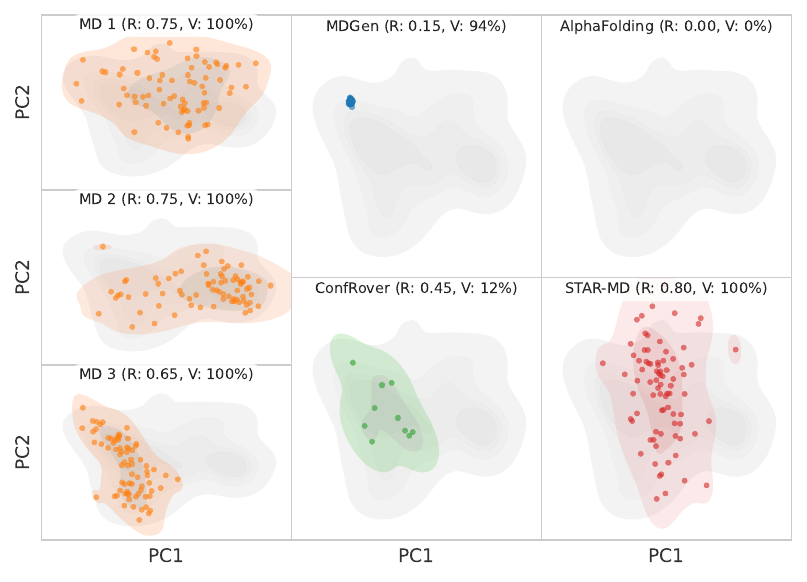}
  \caption{Conformational coverage for \texttt{5znj\_A} (Valid conformations only).}\label{fig:additional_cov_5znj_A}
\end{figure}

\begin{figure}[h!]
  \centering
  \includegraphics[width=0.7\textwidth]{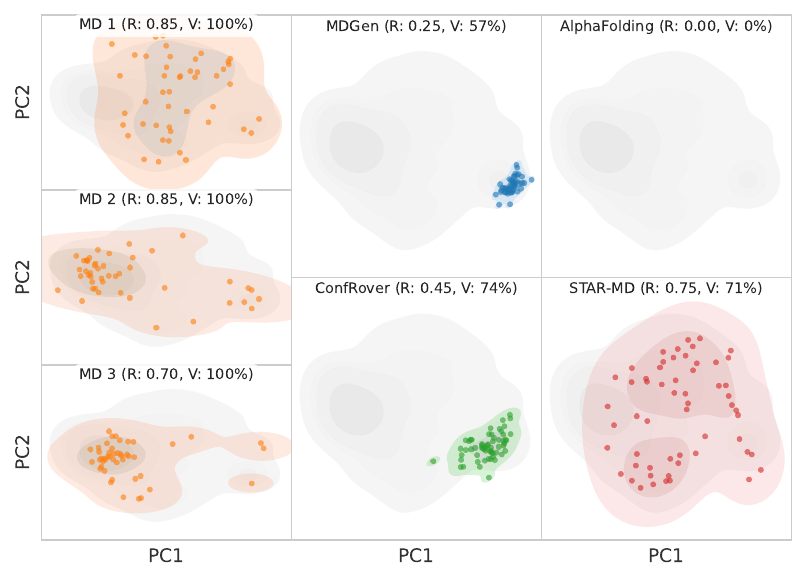}
  \caption{Conformational coverage for \texttt{6tly\_A} (Valid conformations only).}\label{fig:additional_cov_6tly_A}
\end{figure}

\clearpage

\section{Stability and Scalability Analysis}\label{app:stability_scalability}
\subsection{Extended Stability Analysis}\label{app:stability_analysis}

To further characterize the stability of \methodname{}, we provide a detailed breakdown of the variability in structural validity (C$\alpha$ + All-Atom) for the \SI{100}{\nano\second}, \SI{240}{\nano\second}, and \SI{1}{\micro\second} settings. Figure~\ref{fig:stability_breakdown} decomposes the total variability into inter-seed and inter-protein components.

\begin{figure}[h]
  \centering
  \begin{subfigure}[b]{0.95\textwidth}
    \centering
    \includegraphics[width=\textwidth]{figures/temporal_analysis_all_timescales_with_error_bars_repeats.pdf}
    \caption{Per-seed variability}
  \end{subfigure}

  \vspace{1em}

  \begin{subfigure}[b]{0.95\textwidth}
    \centering
    \includegraphics[width=\textwidth]{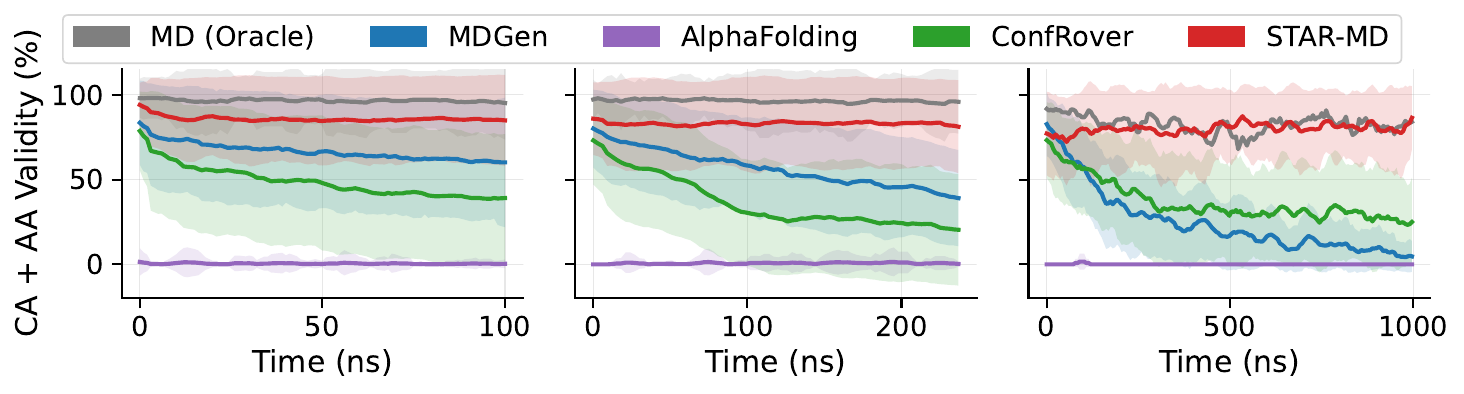}
    \caption{Per-protein variability}
  \end{subfigure}

  \vspace{1em}

  \begin{subfigure}[b]{0.95\textwidth}
    \centering
    \includegraphics[width=\textwidth]{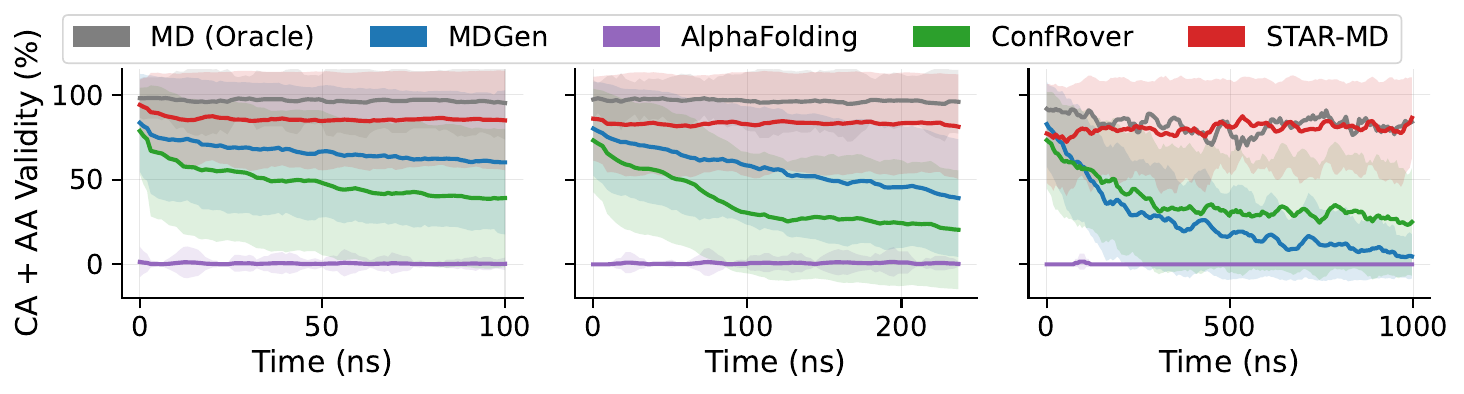}
    \caption{Combined variability}
  \end{subfigure}
  \caption{\textbf{Breakdown of stability variability.} We analyze the standard deviation of the Structural Validity (C$\alpha$ + All-Atom) metric over time. (a) Variability across 5 random seeds for a single protein (as reported in the main text). (b) Variability across different test proteins for a single seed. (c) Combined variability incorporating both seed and protein variance.}
  \label{fig:stability_breakdown}
\end{figure}

\clearpage

\subsection{Extreme-Horizon Generation (\SI{10}{\micro\second})}\label{app:10us}

To probe the limits of \methodname{}'s stability, we generated trajectories using the maximum training stride ($\sim$\SI{10}{\nano\second}) for 1000 steps, resulting in a total simulation time of approximately \SI{10}{\micro\second}. Table~\ref{tab:10us_validity} reports the structural validity metrics averaged over the full \SI{10}{\micro\second} trajectory for the ATLAS test set. Figure~\ref{fig:10us_validity} visualizes the stability of structural validity over the course of the generation.

\begin{table}[h]
  \centering
  \caption{\textbf{Structural validity for \SI{10}{\micro\second} trajectories.} Metrics are averaged over 1000 steps ($\sim$10 ns stride) for the ATLAS test set.}
  \label{tab:10us_validity}
  \begin{tabular}{lc}
    \toprule
    \textbf{Metric} & \textbf{Validity (\%)} \\
    \midrule
    CA Validity & 85.21 \\
    AA Validity & 90.93 \\
    CA + AA Validity & 77.28 \\
    \bottomrule
  \end{tabular}
\end{table}

\begin{figure}[h]
  \centering
  \includegraphics[width=0.6\textwidth]{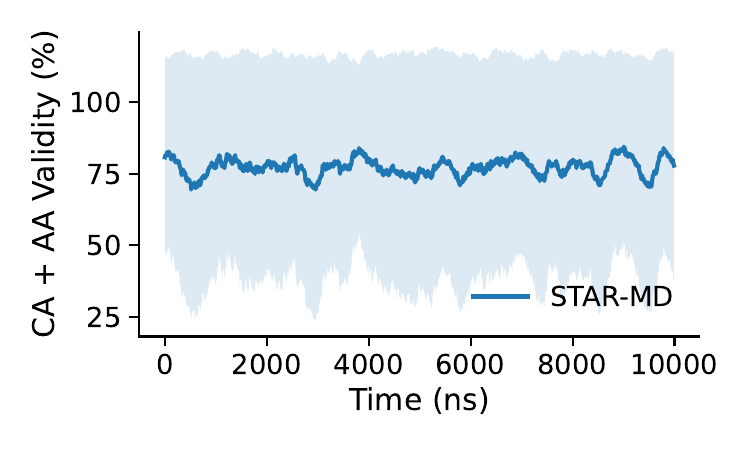}
  \caption{\textbf{Structural validity over \SI{10}{\micro\second}.} The average CA + AA Validity remains stable throughout the 1000-step generation process.}
  \label{fig:10us_validity}
\end{figure}

\clearpage

\subsection{Inference Cost}\label{app:inference_cost}

We evaluated the wall-clock time for generating \SI{100}{\nano\second} trajectories under our \SI{100}{\nano\second} benchmark setting for each methods, across proteins of varying lengths (50$\sim$750 amino acids).
To estimate the runtime of molecular dynamic (MD) simulation, we used OpenMM with the Amber force field and implicit solvent, running a \SI{0.1}{\nano\second} simulation and extrapolating the cost to \SI{100}{\nano\second}.
MDGen, trained with a time step of 400 ps and window of 250 frames, can directly generate 100 ns trajectories in single joint diffusion process and therefore provides the greatest acceleration. AlphaFolding, trained on a fixed time step of 10 ps and a window of 16 frames, requires substantially longer inference time to complete a \SI{100}{\nano\second} simulation.
Both ConfRover and \methodname{} support varying-stride simulation and can generate under our evaluation configuration (stride=1.2~ns, 80 frames). While \methodname{} is slightly slower than ConfRover for small proteins, it is more efficient when scaling to larger proteins and is faster on 6SMS-A (724 AA). All methods are evaluated on a single NVIDIA H100 GPU.

\begin{table}[h]
  \centering
  \caption{\textbf{Inference cost for different methods.} The wall-clock time (in seconds) to generate \SI{100}{\nano\second} trajectories across proteins with varying lengths (shown in parentheses). AlphaFolding encounters out-of-memory error for larger proteins and is reported as `N/A'. All methods are evaluated on a single NVIDIA H100 GPU.}
  \fontsize{8.5}{9}\selectfont
  \label{ap:tab_infer_cost}
  \begin{tabular}{lcccccccc}
    \toprule
    \textbf{Method} & \specialcell{\textbf{6OKD-C}\\\textbf{(51)}} & \specialcell{\textbf{6Q9C-A}\\\textbf{(155)}} & \specialcell{\textbf{6XB3-H}\\\textbf{(241)}} & \specialcell{\textbf{7AQX-A}\\\textbf{(364)}} & \specialcell{\textbf{7P41-D}\\\textbf{(448)}} & \specialcell{\textbf{7MF4-A}\\\textbf{(554)}} & \specialcell{\textbf{6L8S-A}\\\textbf{(650)}} & \specialcell{\textbf{6SMS-A}\\\textbf{(724)}} \\
    \midrule
    MD & 6322.8 & 9610.8 & 14420.3 & 24272.4 & 34850.8 & 49294.4 & 66296.7 & 78451.4 \\
    AlphaFolding & 3135.7 & 5846.5 & 10024.8 & 19036.3 & 27340.3 & 40472.5 & N/A & N/A \\
    MDGen & 4.1 & 8.1 & 13.0 & 20.5 & 28.2 & 35.3 & 41.2 & 54.4 \\
    ConfRover & 404.5 & 423.6 & 463.5 & 573.7 & 710.4 & 934.4 & 1170.2 & 1617.7 \\
    \methodname{} & 766.3 & 807.4 & 818.4 & 795.8 & 902.5 & 1055.5 & 1232.4 & 1381.2 \\
    \bottomrule
  \end{tabular}
\end{table}

\vheader

\section{Applications to Other Systems}\label{app:other_systems}
\subsection{Ensemble Generation}\label{app:ensemble}

We examine the performance of \methodname{} to approximate target ensemble distributions from MD simulation, and compare it against the state-of-the-art ensemble emulator BioEmu~\citep{lewis2024scalableBioemu}. Due to several differences in training dataset and paradigms, we evaluate them on two benchmarks: (1) the standard ATLAS ensemble benchmark used in prior works~\citep{jingAlphaFoldMeetsFlow2024,jing2024generative_mdgen}, and (2) the CATH1 ensemble benchmark introduced in BioEmu.

We follow the standard protocol from~\citet{jingAlphaFoldMeetsFlow2024}, sampling 250 conformations for each of the 82 test proteins and comparing the generated ensembles with MD reference ensembles. For \methodname{}, we repeat the \SI{100}{\nano\second} simulation procedure five times and randomly sample 250 conformations from the collected set.
As shown in Table~\ref{ap:tab_atlas_iid}, \methodname{} better captures the expected diversity, more accurately reflects both global and local flexibility, and matches the reference ensemble at distributional and ensemble-observable levels.
In contrast, BioEmu tends to produce overly diverse samples and shows lower similarity to the reference ensemble across most metrics.

We then evaluate ensemble generation performance on the CATH1 benchmark from BioEmu. This benchmark contains the longer trajectories at microsecond level for 17 short proteins ($< 75$ amino acids).
We follow BioEmu's setup by sampling 10,000 conformations per system and projecting them onto the reaction coordinates to estimate free-energy and compute metrics using author's evaluation code.
For \methodname{}, we sample from five different starting frames of the initial MD runs in the datasets, generating 80 frames at a stride of 1.28 ns. We repeat this process 25 times to collect 10,000 samples per protein.
As shown in Figure~\ref{ap:fig_bioemu_cath_free_energy_error}, BioEmu achieves higher accuracy with lower error and higher coverage. When compared to the ground-truth distributions, \methodname{} concentrates around the low energy basins, while BioEmu explores a broader range of conformations, occasionally extending beyond the region supported by the ground truth.

To verify this performance difference is not caused by the forward simulation setup used for \methodname{}, we also test unconditional generation (\methodname{}-iid) by inferring single frames from masked token, similar to~\citep{shen2025simultaneous_confrover}. The similar results between \methodname{} and \methodname{}-iid suggest that the performance gap between \methodname{} and BioEmu on CATH1 likely arises from factors other than the trajectory-generation archiecture, such as differences in training datasets, training objectives, or fine-tuning procedures.

\begin{table}[h]
  \centering
  \footnotesize
  \begin{tabular}{lcc}
    \toprule
    & \methodname{} & BioEmu \\
    \midrule
    RMSF (=1.63) & \textbf{1.38} & 2.87 \\
    Pairwise RMSD (=2.76) & \textbf{2.34} & 4.51 \\
    \midrule
    Pairwise RMSD r (↑) & \textbf{0.52} & 0.33 \\
    Global RMSF r (↑) & \textbf{0.56} & 0.52 \\
    Per target RMSF r (↑) & \textbf{0.86} & 0.82 \\
    \midrule
    RMWD (↓) & \textbf{2.85} & 4.33 \\
    MD PCA W2 (↓) & \textbf{1.50} & 1.64 \\
    Joint PCA W2 (↓) & \textbf{2.33} & 3.43 \\
    PC sim $>$ 0.5 (↑) & \textbf{35.4~\%} & 26.8 \% \\
    \midrule
    Weak contacts J (↑) & \textbf{0.52} & 0.49 \\
    Transient contacts J (↑) & \textbf{0.37} & \textbf{0.37} \\
    Exposed residue J (↑) & \textbf{0.57} & 0.54 \\
    Exposed MI matrix rho (↑) & 0.24 & \textbf{0.26} \\
    \bottomrule
  \end{tabular}
  \caption{\textbf{ATLAS ensemble results.} Reported metrics include diversity, flexibility accuracy, distributional similarity, and ensemble observables. Parentheses indicate whether higher/lower values are the better, or the expected value estimated from the ground truth trajectory. The better value is shown in bold.}
  \label{ap:tab_atlas_iid}
\end{table}

\begin{figure}[h!]
  \centering
  \includegraphics[width=0.8\linewidth]{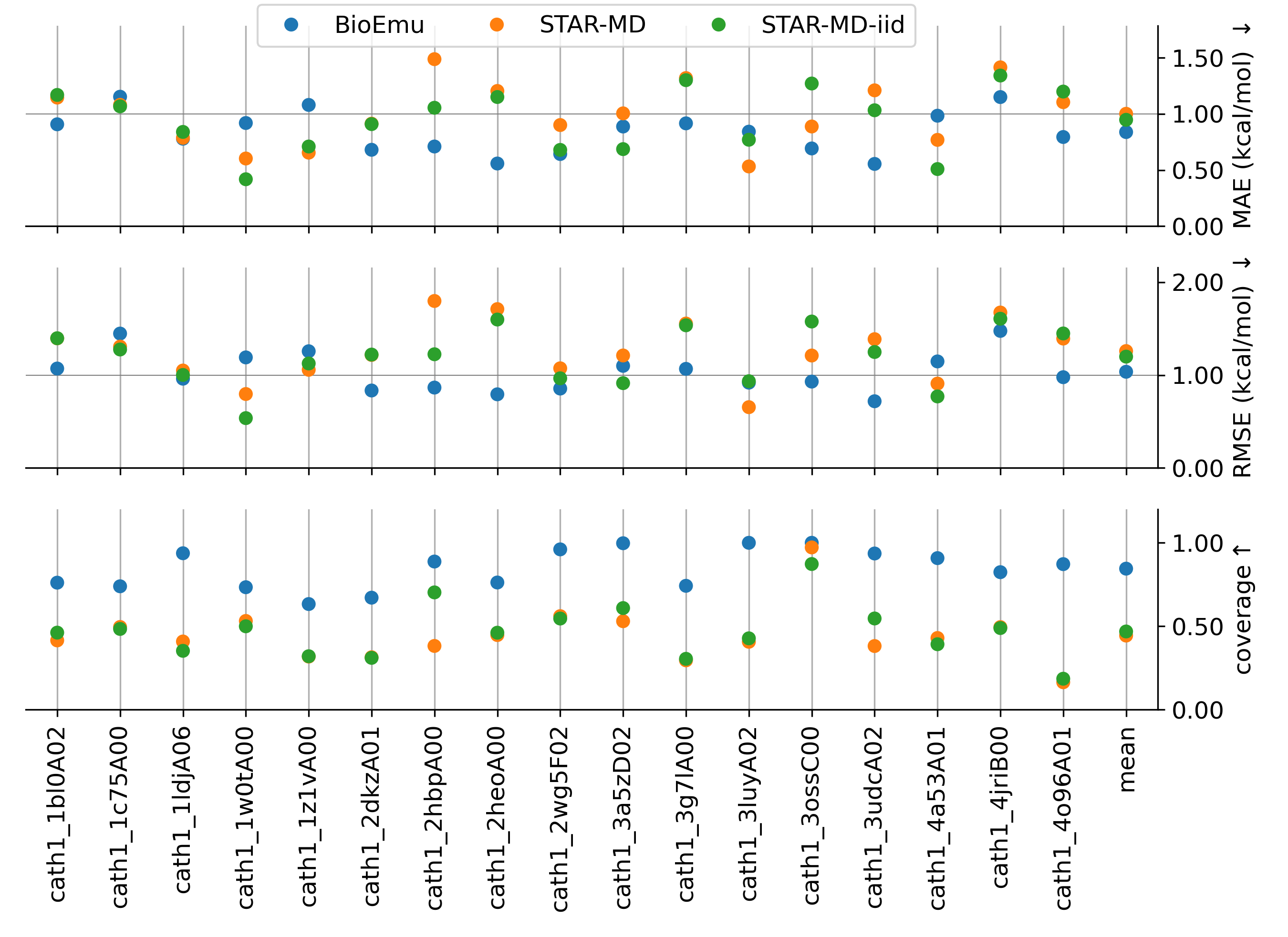}
  \caption{\textbf{Free energy accuracy and coverage from the BioEmu CATH1 ensemble benchmark.}}
  \label{ap:fig_bioemu_cath_free_energy_error}
\end{figure}

\begin{figure}[h!]
  \centering
  \includegraphics[width=1.0\linewidth]{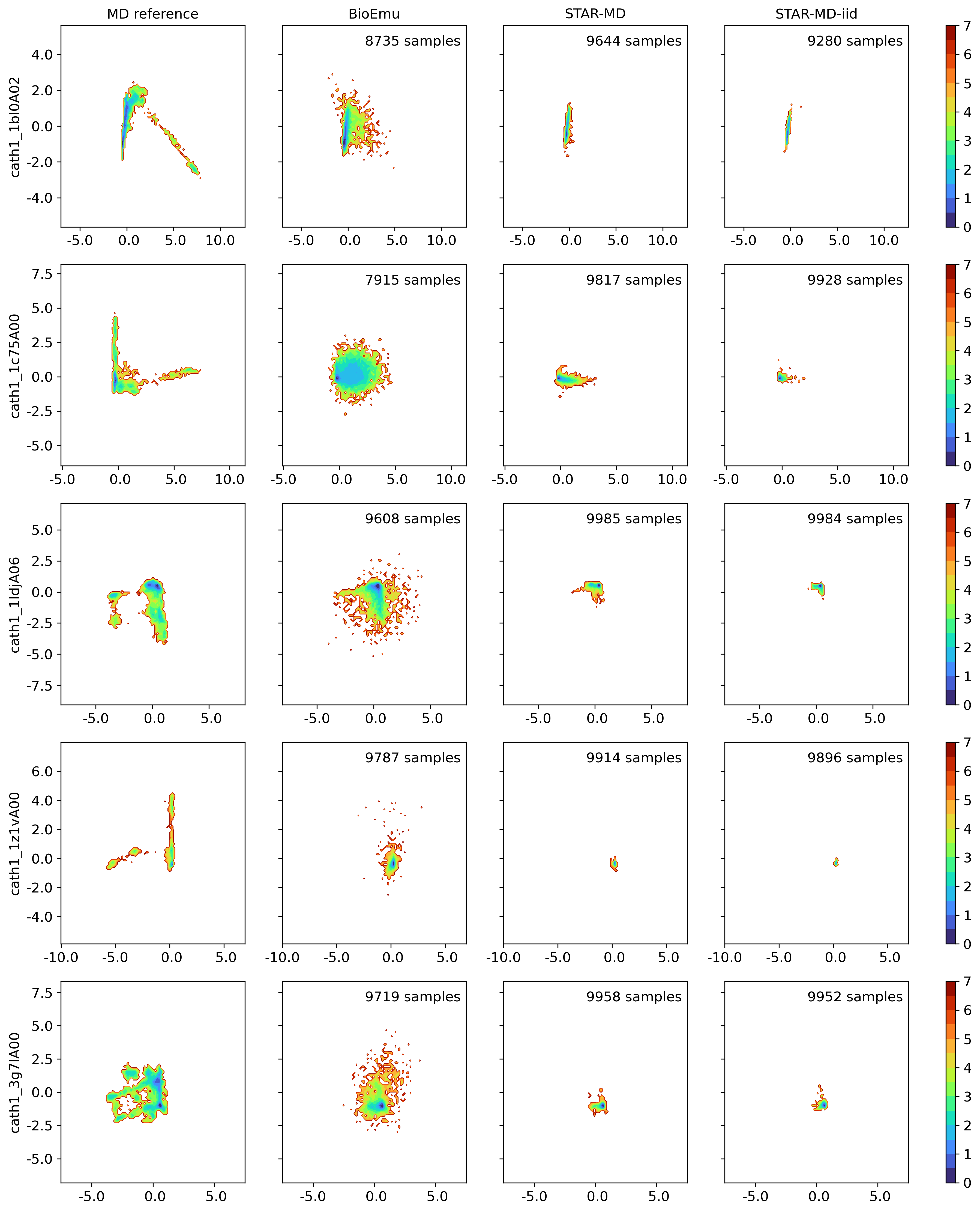}
  \caption{\textbf{Example free energy surfaces from the BioEmu CATH1 ensemble benchmark.} Five systems are randomly selected for visualization. The left columns shows the reference MD distributions provided in BioEmu. For each model and system, the number of valid samples (out of 10,000) that passes BioEmu's quality filter is indicated in the upper-right corner of each subplot.}
  \label{ap:fig_bioemu_cath_free_energy_surface}
\end{figure}

\clearpage

\subsection{Case Studies on Function-Related Conformational Dynamics}\label{app:case_studies}

\subsubsection{Cryptic-Pocket Binding}

Sampling conformational changes during ligand binding is challenging, especially for proteins with cryptic pockets, where cooperative and complex pocket dynamics are required to enable binding.
To assess whether \methodname{} can sample bound-state (\textit{holo}) conformations from unbound-state (\textit{apo}) structures in such settings, we selected three cases from BioEmu's cryptic-pocket dataset: Adenylosuccinate Synthetase (AdS), an enzyme has several allosteric or competitive binding substrates; TEM-1 $\beta$-Lactamase (TEM-1), forming complexes with allosteric inhibitors; and Adenylate Kinase (AdK), a highly flexible protein that undergoes large structural arrangement to each closed \textit{holo} conformation with covered `lid'.

For each of the case, we initialize from the provided \textit{apo} structure and generate 200 frames at a stride of 2.56 ns, approximating a 500 ns simulation. We compare sampled conformations with the reference \textbf{holo} structure and compute RMSD over the local binding residues involved in the conformational changes\footnote{Residue information: https://github.com/microsoft/bioemu-benchmarks/tree/main/bioemu\_benchmarks/ assets/multiconf\_benchmark\_0.1/crypticpocket/local\_residinfo}. The best \textit{holo} samples with minimum local RMSD are shown in Figure~\ref{ap:fig_cryptic_cases}.
Across all three cases, STAR-MD successfully samples \textit{holo}-like conformations with RMSD $<1.5$ \AA~\citep{lewis2024scalableBioemu}. For AdS and AdK, the sampled conformations closely algin with the target \textit{holo} structures. For TEM-1, we observe remaining differences in the left small helix where the sampled structure does not fully form the helix.
Nevertheless, these results demonstrate promising capability of STAR-MD in predicting complex, cooperative, and long-range conformational dynamics.

\begin{figure}[h!]
  \centering
  \includegraphics[width=\textwidth]{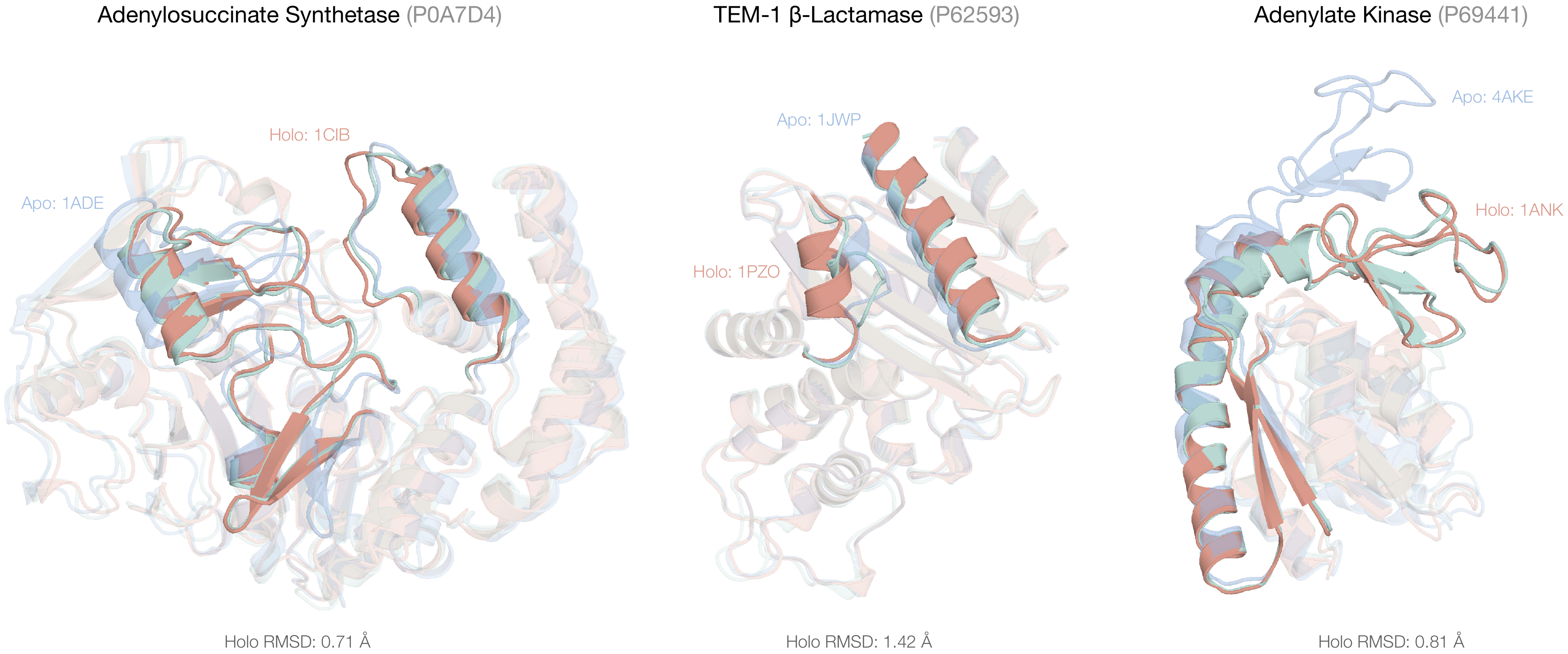}
  \caption{\textbf{\methodname{} samples \textit{Holo} structures started from \textit{Apo}.} Structures are superimposed and the local binding regions are highlighted. The Holo RMSD shown under each case is calcualted based on the local binding residues.}
  \label{ap:fig_cryptic_cases}
\end{figure}

\subsubsection{Kinase Activation}

Abl kinase is a signaling protein related to leukemia and other cancers. It undergoes conformational transitions between active and inactive states, which regulate its activity. These transitions involve several functional domains (P-loop, $\alpha$C-helix, and the Asp-Phe-Gly DFG motifs), imposing complex millisecond-scale dynamics~\citep{xie2020kinase_case}.
We investigate whether \methodname{} can sample the active-inactive transition when initialized from one of the two states. Specifically, we generate trajectories of 500 frames with a stride of 5.12~ns, corresponding to roughly \SI{2.5}{\micro\second} of simulation.

To assess the states of generated conformations, we extract pairwise C$\alpha$ distances and backbone dihedral angles ($\psi, \phi$) for residues around function domains involved in the transition (THR262-GLU277, VAL287-LEU317, LYS397-GLY417), from 20 active-state structures (PDB:6XR6) and 20 inactive-state structures (PDB:6XR7). We then apply principle component analysis to obtain a 2D low-dimensional projection that separates the active and inactive conformations. We also define a core-RMSD metric measured as the C$\alpha$-RMSD between sample and the reference structure over the aforementioned residues of interest.

As shown in Figure~\ref{fig:kinase_case}, when starting from the inactive state, \methodname{} is able to sample towards the active state with the best core-RMSD to the active state as 2.00 Å. When sampling from the active state, the majority of sampled conformations remain near the active-state region, while a subset of samples explores the inactive region. This behavior is in line with the underlying energetic profile, where approximately 88\% of conformations correspond to the active state and only $\sim$8\% correspond to the inactive state~\citep{xie2020kinase_case}. The best RMSD to the inactive reference (2.60 Å) is still smaller than the average pairwise RMSD between active and inactive conformations in PDB (3.30 $\pm$ 0.34 Å).

Overall, this case demonstrates the potential of STAR-MD to sample complex conformational dynamics such as active-inactive transition in kinases.

\begin{figure}[h!]
  \centering
  \includegraphics[width=0.75\textwidth]{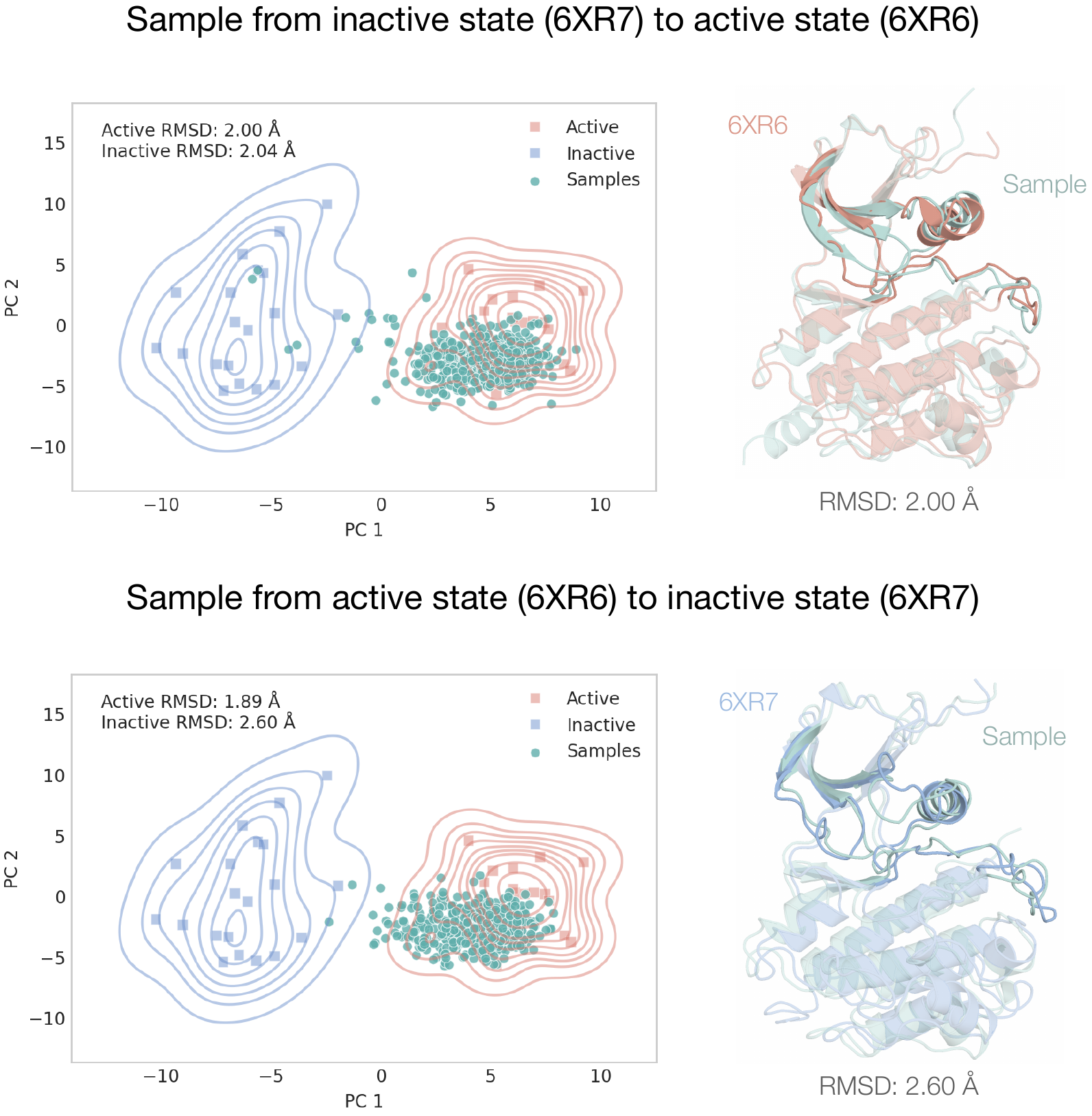}
  \caption{\textbf{Results on ABl Kinase sampled from the inactive state (top) and active state (bottom).} Left: PCA projection of reference conformations from the active and inactive PDB ensembles, with \methodname{} samples shown in green. Right: Superposed structures of the best sample and the corresponding target state, with core dynamic regions highlighted.}
  \label{fig:kinase_case}
\end{figure}

\clearpage

\end{document}